\newcommand{\first}{(i)}
\newcommand{\second}{(ii)}
\newcommand{\third}{(iii)}
\newcommand{\fourth}{(iv)}
\newlength{\hdist}
\newlength{\hdistshort}
\newlength{\vdist}
\tikzstyle{var}=[draw, circle, fill=white]
\tikzstyle{obs}=[draw, circle, fill=black!25!white]
\tikzstyle{plate}=[draw, rectangle, rounded corners=2mm, line width=0.75pt]
\definecolor{col2}{RGB}{106,125,100}
\definecolor{col1}{RGB}{0,97,147}
\definecolor{col3}{RGB}{217, 129, 0}
\tikzset{gradientline/.style={
		postaction={
			decorate,
			decoration={
				markings,
				mark=between positions 0 and \pgfdecoratedpathlength-3pt step 0.1pt with {
					\pgfmathsetmacro\myval{multiply(divide(
						\pgfkeysvalueof{/pgf/decoration/mark info/distance from start}, \pgfdecoratedpathlength),100)};
					\pgfsetfillcolor{BurntOrange!\myval!MidnightBlue};
					\pgfpathcircle{\pgfpointorigin}{#1};
					\pgfusepath{fill};},
				mark=at position \pgfdecoratedpathlength-0.5pt with {\arrow[black,line width=3pt] {stealth}; }
}}}}
\tikzset{
	dot diameter/.store in=\dot@diameter,
	dot diameter=1pt,
	dot spacing/.store in=\dot@spacing,
	dot spacing=3pt,
	dots/.style={
		line width=\dot@diameter,
		line cap=round,
		dash pattern=on 0pt off \dot@spacing
	}
}
\newcommand{\p}{p}
\newcommand{\given}{\,\vert\,}
\newcommand{\E}{\mathbb{E}}
\newcommand{\supp}{\text{supp}}
\newcommand{\hitting}{\text{\textsc{T}}}
\newcommand{\softmax}{{\pi}}
\DeclareMathOperator*{\argmax}{arg\,max}
\newcommand{\without}{\backslash}
\newcommand{\defeq}{\coloneqq}
\newcommand{\eqkomma}{,}
\newcommand{\eqpunkt}{.}
\newcommand{\ie}{i.e.}
\newcommand{\eg}{e.g.}
\newcommand{\einschub}[1]{\Emdash*#1\Emdash*}
\newcommand{\nachschub}[1]{\Emdash*#1}
\newcommand{\ddBNIRLS}{\mbox{ddBNIRL-S}}
\newcommand{\ddBNIRLT}{\mbox{ddBNIRL-T}}
\begin{document}

\title{Inverse Reinforcement Learning \\ via Nonparametric Spatio-Temporal Subgoal Modeling}

\author{\name Adrian \v{S}o\v{s}i\'c \email adrian.sosic@spg.tu-darmstadt.de \\
       \name Abdelhak M.\ Zoubir \email zoubir@spg.tu-darmstadt.de \\
       \addr Signal Processing Group \\
       Technische Universit\"at Darmstadt \\
       64283 Darmstadt, Germany
       \AND
       \name Elmar Rueckert \email rueckert@rob.uni-luebeck.de \\
       \addr Institute for Robotics and Cognitive Systems \\
       University of L\"ubeck \\
       23538 L\"ubeck, Germany
       \AND
       \name Jan Peters \email mail@jan-peters.net \\
       \addr Autonomous Systems Labs \\
       Technische Universit\"at Darmstadt \\
       64289 Darmstadt, Germany
        \AND
        \name Heinz Koeppl \email heinz.koeppl@bcs.tu-darmstadt.de \\
        \addr Bioinspired Communication Systems \\
        Technische Universit\"at Darmstadt \\
       64283 Darmstadt, Germany}

\editor{}

\editor{George Konidaris}
\maketitle

\begin{abstract}%
Advances in the field of inverse reinforcement learning (IRL) 
have %
led to sophisticated inference frameworks that relax the original modeling assumption %
of observing an agent behavior that reflects only a single intention. %
Instead of learning a global behavioral model, 
recent IRL methods divide the demonstration data into parts, to account for the fact that different trajectories may correspond to different intentions, \eg, because they were generated by different domain experts. In this work, we go one step further: using the intuitive concept of \textit{subgoals}, we build upon the premise that even a single trajectory can be explained more efficiently \textit{locally} within a certain context than globally, enabling a more compact representation of the observed behavior. Based on this assumption, we build an implicit intentional model of the agent's goals to forecast its behavior in unobserved situations. The result is an integrated Bayesian prediction framework %
that significantly outperforms existing IRL solutions and provides smooth policy estimates %
consistent with the expert's plan. Most notably, our framework %
naturally handles %
situations where the intentions of the agent change over time %
and classical IRL algorithms fail. %
In addition, due to its probabilistic nature, the model can be %
straightforwardly
applied in %
active learning %
scenarios to guide the demonstration process of the expert.\looseness-1
\end{abstract}

\begin{keywords}
	Learning from Demonstration, Inverse Reinforcement Learning, Bayesian Nonparametric Modeling, Subgoal Inference, Graphical Models, Gibbs Sampling
\end{keywords}

\section{Introduction}
Inverse reinforcement learning (IRL) refers to the problem of inferring %
the intention of an agent, called \textit{the expert},
from observed behavior. Under the Markov decision process~(MDP) formalism \citep{sutton1998reinforcement}, that intention is encoded in the form of a reward function, which provides the agent with instantaneous feedback for each situation %
encountered during the decision-making process. %
Classical IRL methods \citep{ng2000algorithms,abbeel2004apprenticeship,ziebart2008maximum,ramachandran2007bayesian,levine2011nonlinear} %
assume there exists a single \textit{global} reward %
model that explains the %
entire %
set of demonstrations provided by the expert.
In order to relax this rather restrictive modeling assumption, recent IRL methods allow that the agent's intention can change over time \citep{nguyen2015inverse}, or %
they presume that the demonstration data set is inherently composed of %
 several parts \citep{dimitrakakis2011bayesian}, %
 where different trajectories reflect the intentions of different domain experts. %

In this work, we go a step further and start from the premise that\einschub{even in the case of a single expert or trajectory}the demonstrated behavior can be %
explained more efficiently %
\textit{locally} (i.e., within a certain context) %
than by a global reward %
model.
As an illustrative example, we may consider the task shown in Figure~\ref{subfig:JMLR:illustrationSubgoal}, where the %
expert approaches a set of intermediate target positions before finally heading toward a global goal state. Similarly, in Figure~\ref{subfig:JMLR:illustrationCyclic}, the agent eventually returns to its initial position, from where the cyclic process %
repeats. Despite the simplicity of these tasks, the encoding %
of such behaviors in a global intention model %
requires a %
reward structure %
that %
comprises a comparably
large number of redundant state-action-based %
rewards. Alternative %
modeling strategies 
rely on task-dependent expansions of the agent's state representation, \eg, to memorize the last visited goal \citep{hirl2016}, or %
they resort to %
more general decision-making frameworks like semi-MDPs/options~\citep{bradtke1995reinforcement,sutton1999between} in order to achieve %
the necessary  %
level of task abstraction. %

\begin{figure*}
	\centering
	\fboxrule0.5pt%
	\fboxsep0pt%
	\captionsetup{justification=centerlast}%
	\subcaptionbox{sequenced target positions\\\citep{michini2012bayesian}\label{subfig:JMLR:illustrationSubgoal}}[6cm][centerlast]{%
		\fbox{\includegraphics[height=5.5cm]{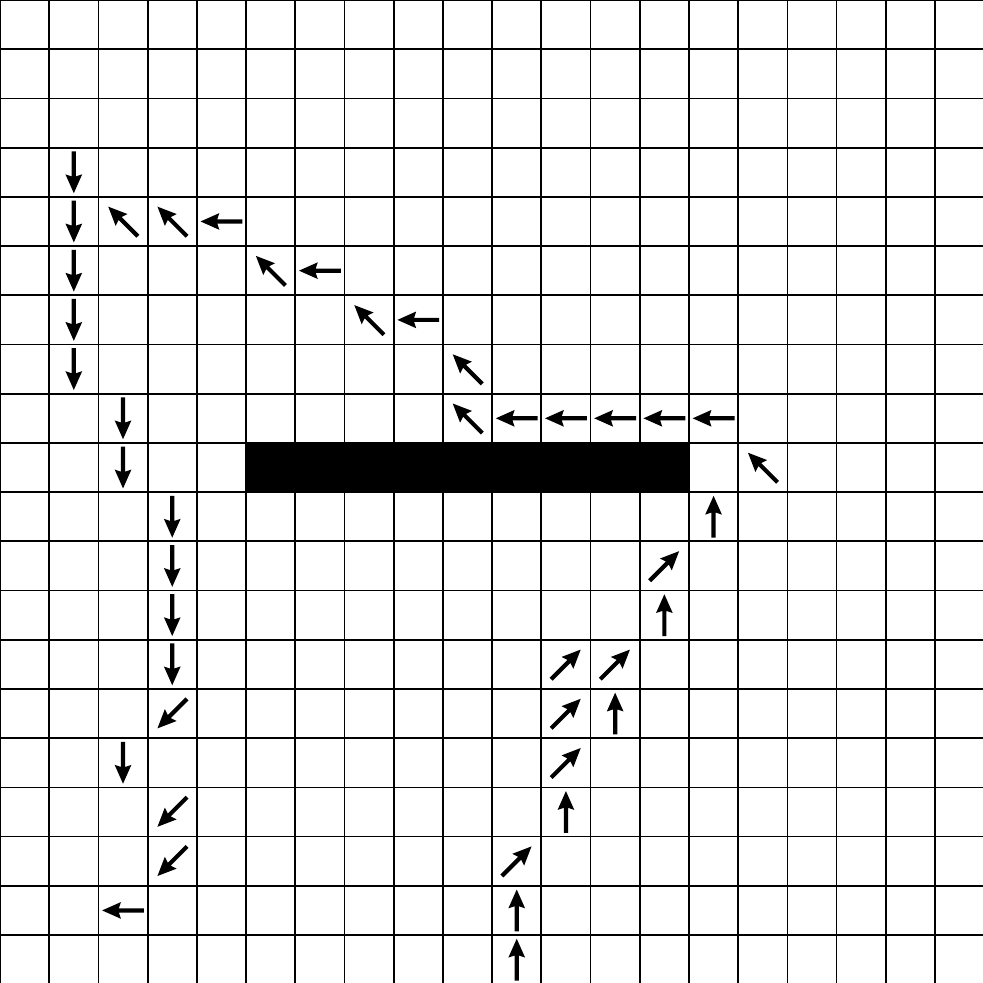}}%
	}
	\hspace{5ex}
	\subcaptionbox{cyclic behavior\label{subfig:JMLR:illustrationCyclic}}{%
		\fbox{\includegraphics[height=5.5cm]{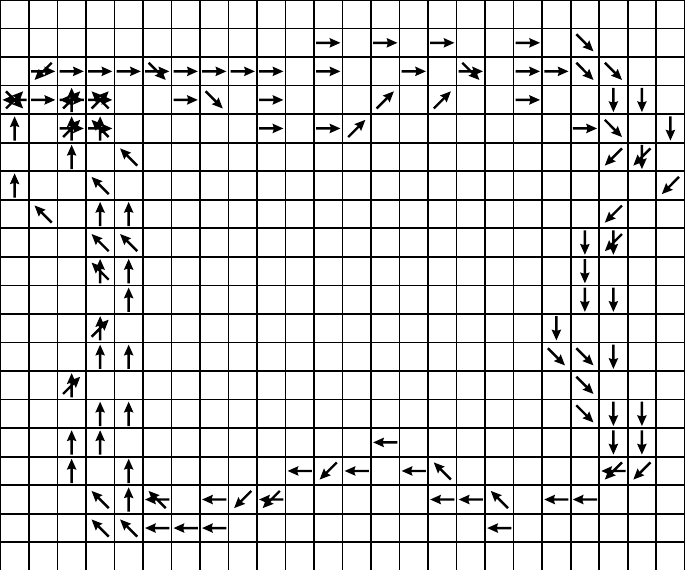}}%
	}
	\caption{Two simple behavior examples that motivate the subgoal principle. The setting is based on the grid world dynamics described in Section~\ref{sec:proofConcept}. In both cases, a task description based on a global reward function is inefficient as it requires %
		many state-action-based rewards to explain the observed trajectory structures. However, the data can be described efficiently through subgoal-based encodings. Both scenarios are analyzed in detail in Section~\ref{sec:results}.}
	\label{fig:JMLR:motivationFigures}
\end{figure*}

In this paper, we present a substantially simpler modeling framework that requires only minimal adaptations to the standard MDP formalism but comes with a hypothesis space of behavioral models that is sufficiently large to cover a broad class of expert policies. The key insight that motivates our approach is that many tasks, like those in Figure~\ref{fig:JMLR:motivationFigures}, can be decomposed into smaller subtasks that require considerably less modeling effort. The resulting low-level task descriptions can then be used as building blocks to synthesize arbitrarily complex behavioral strategies through a suitable sequencing of subtasks. This %
offers the possibility to learn comparably simple task representations %
using the intuitive concept of \textit{subgoals}, which is achieved by efficiently encoding the expert behavior %
using task-adapted partitionings of the system state space/the expert data.

The %
proposed framework builds upon %
the method of Bayesian nonparametric inverse reinforcement learning~\citep[BNIRL,][]{michini2012bayesian}, which can be used to build a subgoal representation of a task %
based on demonstration data\nachschub{however,
without %
learning %
the underlying subgoal relationships %
or providing a policy model %
that can generalize the %
strategy of the %
demonstrator.}
In order to address this limitation, we generalize the BNIRL model using insights from our previous works on nonparametric subgoal modeling \citep{sosic2018} and policy recognition \citep{sosic2018pami}, building a compact intentional model of the expert's behavior that explicitly describes the local dependencies between the demonstrations and the underlying subgoal structure.
The result is an integrated Bayesian prediction framework %
that  %
exploits the spatio-temporal context of the %
demonstrations and is %
capable of producing smooth policy estimates that are consistent with the expert's plan. Furthermore, capturing the full posterior information of the data set enables us to apply the proposed approach in an active learning setting, where the data acquisition process is %
controlled by the posterior predictive distribution of our model.

 In our experimental study, we compare the proposed approach with common baseline methods on a variety of benchmark tasks and real-world scenarios. The results reveal that our approach performs significantly better than the original BNIRL model and alternative IRL solutions on all considered tasks. 
Interestingly enough, our algorithm %
outperforms the baselines %
even when the expert's true reward structure %
is dense and the underlying subgoal 
assumption is violated. %

\subsection{Related Work}

The idea of decomposing complex behavior into smaller %
parts has been around for long and researchers have approached the problem
in many different ways. While the overall field of methods is %
too large to be covered here, most existing approaches can be clearly %
categorized according to %
certain criteria. %
Often, two approaches differ in their exact problem formulation, \ie, we can distinguish between \textit{active} methods, where the learning %
algorithm can interact freely with the environment \citep[\eg, hierarchical reinforcement learning,][]{botvinick2012hierarchical,al2015hierarchical}, and \textit{passive} methods, where the behavioral model is trained %
solely through observation \citep[learning from demonstration,][]{argall2009survey}. Furthermore, we can discriminate between methods that build an explicit intentional model of the underlying task \citep[IRL and option-based models,][]{NIPS2012_4737,sutton1999between}, and such that work directly on the control/trajectory level \citep[skill learning, movement primitives,][]{konidaris2012robot,schaal2005learning}. The latter distinction is sometimes also 
referred to
as \textit{intentional/subintentional} approaches \citep{albrecht2017autonomous,panella2017interactive}.
In order to give a concise summary of the work that is most relevant to ours, we restrict ourselves %
to %
passive approaches, with %
a focus on intentional methods, the field of which is considerably smaller. 
For an overview of %
active approaches, we refer to existing literature, \eg, the work by \citet{daniel2016probabilistic}. 

First, there %
is  the class of
methods that pursue a %
decomposition of the observed behavior on the global level, %
using trajectory-based IRL approaches.
For example, \citet{dimitrakakis2011bayesian} proposed a hierarchical prior over reward functions to %
account for the fact that different trajectories in a data set could %
reflect different behavioral intentions, %
\eg, because they %
were generated by different domain experts. Similarly, \citet{babes2011apprenticeship} follow an expectation-maximization-based clustering approach to group individual trajectories according to %
their underlying reward functions. 
\citet{NIPS2012_4737} generalized this %
idea
by proposing a nonparametric Bayesian model in which the number of intentions %
is {a~priori} unbounded.

While the above methods %
consider %
the expert data %
at a global scale, 
our work is concerned with the problem of %
\textit{subgoal modeling},
which %
is often
conducted in the form of option-based %
reasoning \citep{sutton1999between}. For instance, \cite{tamassia2015learning} proposed a clustering approach based on state distances to find a minimal set of options that can explain the %
expert behavior. While the method provides a simple alternative to handcrafting options, it does not allow any probabilistic treatment of the data and involves many ad-hoc design choices.
Going in the same direction, \cite{daniel2016probabilistic} presented a more principled, probabilistic option framework based on expectation-maximization. Not only is the framework capable of inferring %
sub-policies %
automatically, it can be also used in a reinforcement learning context for intra-option learning. However, the resulting behavioral model is based on point estimates of the %
policy parameters, and the number of sub-policies needs to be specified manually.
The latter problem was solved by \cite{hirl2016}, who proposed a hierarchical nonparametric IRL framework to learn a sequential representation of the demonstrated task, based on a set of transition regions that are defined through local changes in linearity of the observed behavior. %
However, in contrast to the work by \cite{daniel2016probabilistic}, inference is not performed jointly but in several isolated stages where, again, each stage only propagates a point estimate of the associated model 
parameters. Moreover, the temporal relationship of the demonstration data, used to identify the local linearity changes, is considered only in an ad-hoc fashion with the help of a windowing function.

Another general class of models, which explicitly addresses this issue, employs a hidden Markov model~(HMM) structure to establish a temporal relationship between the demonstrations. For instance, the work presented by \cite{nguyen2015inverse} can be regarded as a generalization of the model by \citet{babes2011apprenticeship}, which extends the expectation-maximization framework by imposing a Markov structure on the reward model.
Similarly, \citet{niekum2012learning} use an extended HMM to segment the demonstrations into %
vector autoregressive models, %
in order to learn a suitable set of %
movement primitives. However, the learning of those primitives %
is done in a post-processing step, %
meaning that the quality of the %
final representation crucially depends on the success of the initial segmentation stage. %
In contrast, the method by \citet{Rueckert2013} automatically learns the position and timing of subgoals in the form of via-points, but the number of via-points is assumed to be known and %
the system objective gets finally encoded in form of a global cost function.  Recently, \citet{lioutikov2017learning} presented a related approach based on probabilistic movement primitives that jointly solves the segmentation and learning step for an unknown number of primitives, %
using an expectation-maximization framework. Yet, the model operates purely on the trajectory level and cannot %
reveal the latent intentions of the demonstrator.
Another variant of the approach by \citet{niekum2012learning} that explicitly addresses this problem was proposed by \citet{surana2014bayesian}. In their paper, the authors propose to replace the HMM emission model with an MDP model, in order to infer a policy model from the segmented trajectories instead of recognizing changes in the dynamics. The model was later extended by \cite{ranchod2015nonparametric}, who %
augmented the HMM representation with a beta process model %
to facilitate skill sharing across trajectories. While the resulting model formulation is highly flexible, its major drawback %
 is that inference %
becomes computationally expensive as it involves multiple IRL iterations %
per Gibbs step.

In contrast to the %
HMM-based solutions, which by their %
sequential nature focus on the \textit{temporal} relationship of subtasks, 
the approach presented in this paper establishes a more general correlation structure between demonstrations by employing %
\textit{non-exchangeable prior distributions} over subgoal assignments, \ie, 
without committing to %
purely temporal factorizations of %
subgoals. This results in a compact model representation (\eg, it avoids the need of estimating %
 latent subgoal transition probabilities required in an HMM structure) and adds the flexibility to capture both, the {temporal} and the {spatial} dependencies %
between subtasks.\looseness-1

\subsection{Paper Outline}
The organization %
of the %
paper is as follows: in Section~\ref{sec:BNIRL}, we briefly revisit the BNIRL model and discuss its limitations, which forms the basis for our work. %
Section~\ref{sec:ddBNIRL} then introduces a new intentional subgoal framework, which addresses the %
shortcomings of BNIRL discussed in Section~\ref{sec:BNIRL}. In Section~\ref{sec:predAndInf}, we derive a sampling-based inference scheme for our model and explain %
how the new framework can be used for subgoal extraction and action prediction. Experimental results on both synthetic and real-world data are presented in Section~\ref{sec:results} before we finally conclude our work in Section~\ref{sec:conclusion}.

\section{Bayesian Nonparametric Inverse Reinforcement Learning}
\label{sec:BNIRL}

The purpose of this section is to recapitulate the principle of Bayesian nonparametric inverse reinforcement learning. After briefly discussing all building blocks of the model, we focus on the limitations of the framework, which motivates the need for an extended model formulation and finally leads to a new inference approach, presented afterwards in Section~\ref{sec:ddBNIRL}.

\subsection{Revisiting the BNIRL Framework}
\label{sec:BNIRLrefresher}

Following the common IRL paradigm \citep{ng2000algorithms,zhifei2012}, the goal of BNIRL is to %
infer %
the intentions %
of an agent %
based on demonstration data. 
Starting from a standard 
MDP model, the %
problem is formalized on  
a finite state space~$\mathcal{S}$, assuming a %
time-invariant state transition model $T:\mathcal{S}\times\mathcal{S}\times\mathcal{A}\rightarrow[0,1]$, %
where $\mathcal{A}$ is a finite set of actions available to the agent
at each state. %
For notational convenience, we represent the states in~$\mathcal{S}$ by the integer values $\{1,\ldots,|\mathcal{S}|\}$, where $|\mathcal{S}|$ denotes the cardinality of %
the state space.\looseness-1

In BNIRL,
it is assumed that we can observe a number of expert demonstrations provided in the form of state-action pairs, $\mathcal{D} \defeq \{ (s_d,a_d)\}_{d=1}^D$, 
where each pair $(s_d,a_d)\in\mathcal{S}\times\mathcal{A}$ consists of a state $s_d$ visited by the agent and the corresponding action $a_d$ taken. Herein, $D$ denotes the size of the demonstration set. Throughout the rest of this paper, we will %
use the shorthand notations $\mathbf{s}\defeq\{s_d\}_{d=1}^D$ and $\mathbf{a}\defeq\{a_d\}_{d=1}^D$ to access the collections of expert states and actions individually. Note that the BNIRL model makes no assumptions about the temporal ordering of the demonstrations, \ie, each state-action pair is %
considered to have arisen from a specific but arbitrary time instant of the agent's decision-making process. We will come back to this point later in Sections~\ref{sec:limitations} and~\ref{sec:ddBNIRL-T}. %

In contrast to the classical MDP formalism and most other IRL frameworks, BNIRL does \textit{not} %
presuppose
that the observed expert %
behavior necessarily originates from a single underlying reward function. Instead, it introduces the concept of \textit{subgoals} (and corresponding \textit{subgoal assignments}) with the underlying assumption that, at each decision instant, the expert selects a particular subgoal to plan the next action. %
Each %
subgoal is herein represented by a certain reward function defined on the system state space; in the simplest case, it 
corresponds to a single reward mass placed at a particular goal state %
in $\mathcal{S}$, which we identify with a reward function $R_g:\mathcal{S}\rightarrow\{0,C\}$ of the form
\begin{equation}
R_g(s) \defeq \begin{cases} C & \text{if } g=s\eqkomma \\ 0 & \text{otherwise}\eqkomma \end{cases}
\label{eq:zeroOneReward}
\end{equation}
where $g\in\{1,\ldots,|\mathcal{S}|\}$ indicates the %
subgoal location and $C\in(0,\infty)$ is some positive constant \citep[compare][]{Simsek2005,stolle2002learning,tamassia2015learning}. 

Although %
in principle it is legitimate to associate each subgoal with an arbitrary reward structure %
to encode more complex forms of goal-oriented behavior \citep[see, for example,][]{ranchod2015nonparametric}, the restriction to the reward function class in Equation~\eqref{eq:zeroOneReward} is sufficient in the sense that the same behavioral complexity can be synthesized through a combination of subgoals. This is made possible by the nonparametric nature of BNIRL, \ie, %
because the number of possible subgoals is assumed to be unbounded. The use of the reward model in Equation~\eqref{eq:zeroOneReward} has the advantage, however, that posterior inference about the expert's subgoals %
becomes computationally tractable, as will be explained in Section~\ref{sec:complexity}.
In the following, we therefore focus on the above reward model %
and
summarize the infinite collection of subgoals in the multiset $\mathcal{G}\defeq\{g_k\}_{k=1}^\infty\in %
\bigtimes_{k=1}^\infty\mathcal{S}$,
where we adopt the assumption that 
$\p(\mathcal{G} \given \mathbf{s})=\prod_{k=1}^\infty\p_g(g_k \given \mathbf{s})$.\footnote{\label{foot:conditioning}Notice that the subgoal prior distribution in the original BNIRL formulation does not take the state variable~$\mathbf{s}$ as an argument. Nonetheless, the authors of BNIRL suggest to restrict the support of the distribution %
	to the set of visited states, which indeed implies a conditioning on $\mathbf{s}$.}

The subgoal assignment in BNIRL 
is achieved using a set of indicator variables $\mathbf{\tilde{z}}\defeq\{\tilde{z}_d\in\mathbb{N}\}_{d=1}^D$, which annotate each demonstration pair $(s_d,a_d)$ with its %
unique subgoal index. The prior distribution $\p(\mathbf{\tilde{z}})$ %
is modeled by a Chinese restaurant process~\citep[CRP,][]{aldous1985exchangeability}, which assigns the event that indicator $\tilde{z}_d$ %
points to the $j$th subgoal the prior probability
\begin{equation*}
\p(\tilde{z}_d=j \given \mathbf{\tilde{z}}_{\without d}) \propto
\begin{cases}
n_j & \text{if } j \in \{1,\ldots,K\}\eqkomma \\
\alpha & \text{if } j = K+1 \eqkomma
\end{cases}
\end{equation*}
where $\mathbf{\tilde{z}}_{\without d}\defeq \{\tilde{z}_d\}\setminus \tilde{z}_d$ is a shorthand notation for the collection of all indicator variables except $\tilde{z}_d$. Further, $n_j$ %
denotes the number of assignments to the $j$th subgoal in $\mathbf{\tilde{z}}_{\without d}$, $K$ represents the number of distinct entries in $\mathbf{\tilde{z}}_{\without d}$, and $\alpha\in[0,\infty)$ %
is a parameter controlling the diversity of assignments.

Having targeted a particular subgoal $g_{\tilde{z}_d}$ while being at some state $s_d$, the expert is assumed to choose the next action $a_d$ according to a softmax decision rule, $\softmax:\mathcal{A}\times\mathcal{S}\times\mathcal{S}\rightarrow[0,1]$, which weighs the expected returns of all actions against one another,
\begin{equation}
	\softmax(a_d \given s_d, g_{\tilde{z}_d}) \defeq \frac{\exp\big\{\beta Q^*(s_d, a_d \given g_{\tilde{z}_d})\big\}}{\sum_{a\in\mathcal{A}}\exp\big\{\beta Q^*(s_d, a \given g_{\tilde{z}_d})\big\}} \eqpunkt
	\label{eq:softmaxPolicy}
\end{equation}
Herein, $Q^*(s,a \given g)$ denotes the state-action value \citep[or \textit{Q-value},][]{sutton1998reinforcement} of action $a$ at state $s$ under an optimal policy for the subgoal reward function $R_g$,
\begin{equation}
	Q^*(s,a \given g) \defeq \max_{\bar{\pi}}\, \E \left[ \sum_{n=0}^\infty \gamma^{n}R_g(s_{t=n}) \ \Big|\  s_{t=0}=s, a_{t=0}=a, \bar{\pi} \right] \eqkomma
	\label{eq:Qfunction}
\end{equation}
where the expectation is with respect to the stochastic state-action sequence %
induced by the fixed policy $\bar{\pi}:\mathcal{S}\rightarrow\mathcal{A}$,  with initial action $a$ executed at the starting state~$s$. 
The explicit notation $s_{t=n}$ and $a_{t=n}$ %
is used to disambiguate the temporal index of the %
decision-making process from the demonstration index of the state-action pairs $\{(s_d,a_d)\}$.

The softmax policy $\softmax$ models the expert's %
(in-)ability to maximize the future expected return in view of the targeted subgoal, %
while the %
coefficient $\beta\in[0,\infty)$ is used to express %
the expert's level of confidence in the optimal action.
Combined with the subgoal prior distribution~$p_g$ %
and the partitioning model $p(\mathbf{\tilde{z}})$, we obtain the joint distribution of all demonstrated actions~$\mathbf{a}$, subgoals~$\mathcal{G}$, and subgoal assignments~$\mathbf{\tilde{z}}$ as
\begin{equation}
\p(\mathbf{a},\mathbf{\tilde{z}},\mathcal{G}\given \mathbf{s}) = \p(\mathbf{\tilde{z}}) \prod_{k=1}^{\infty} \p_g(g_k \given \mathbf{s}) \prod_{d=1}^D \softmax(a_d \given s_d, g_{\tilde{z}_d}) \eqpunkt %
\label{eq:BNIRLjoint}
\end{equation}
The structure of this distribution is visualized in form of a Bayesian network in Figure~\ref{fig:graphicalModelA}. It is worth emphasizing that $\pi$\einschub{although referred to as the likelihood model for the state-action \textit{pairs} in the original BNIRL paper}is really just a model for the actions \textit{conditional on the states}. %
In contrast to what is stated in the original paper, the distribution in Equation~\eqref{eq:BNIRLjoint} therefore takes the form of a \textit{conditional} distribution (\ie, conditional on~$\mathbf{s}$), which does not provide any generative model for the state variables.

Posterior inference in BNIRL refers to the (approximate) computation of the conditional distribution $\p(\mathbf{{\tilde{z}}},\mathcal{G} \given \mathcal{D})$, which allows to identify potential subgoal locations and the corresponding subgoal assignments based on the available demonstration data. %
For further details, the reader is referred to the original paper \citep{michini2012bayesian}.

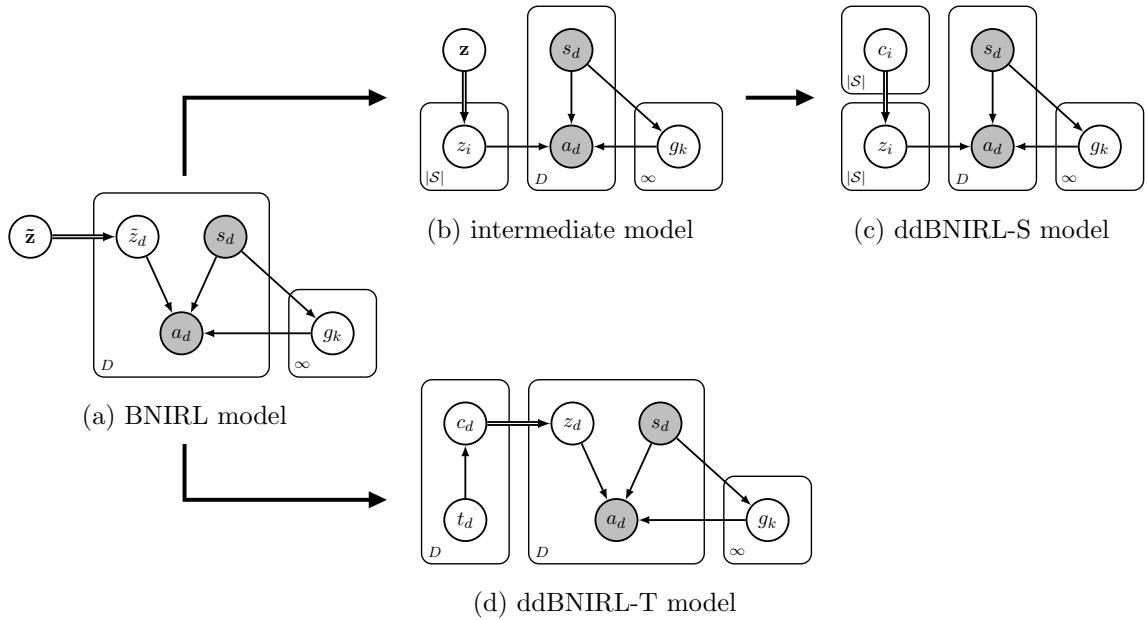
\begin{figure*}[t]
	\hspace{-2ex}
	\begin{tikzpicture}
	
	\node (l) {
		\subcaptionbox{BNIRL model\label{fig:graphicalModelA}}{
			\scalebox{0.7}{
				\begin{tikzpicture}[inner sep=0cm, minimum size = 0.8cm, line width = 1pt, text depth = 0ex, -latex]
				
				\setlength{\hdist}{1.2cm}
				\setlength{\hdistshort}{0.0cm}
				\setlength{\vdist}{1cm}
				
				\node (z) [var, text depth=0.25ex] {$\mathbf{\tilde{z}}$};
				\node (zd) [var, right =\hdist of z, text depth=0.25ex] {$\tilde{z}_d$};
				\node (dummy) [right=\hdistshort of zd] {};
				\node (a) [obs, below=\vdist of dummy] {$a_d$};
				\node (s) [obs, right=\hdistshort of dummy] {$s_d$};
				\node (dummy2) [right=\hdist of s] {};
				\node (R) [var, below=\vdist of dummy2] {$g_k$};
				
				\node (dplate) [plate, fit=(a) (s) (zd), inner sep=2.5ex] {};
				\node (dlabel) [above right=-1.8mm and -1.5mm of dplate.south west] {\scriptsize $D$};
				
				\node (rplate) [plate, fit=(R), inner sep=2.5ex] {};
				\node (rlabel) [above right=-1.8mm and -1.5mm of rplate.south west] {\scriptsize $\infty$};
				
				\draw [double] (z) to (zd);
				\draw (zd) to (a);
				\draw (s) to (a);
				\draw (s) to (R);
				\draw (R) to (a);
				\end{tikzpicture}}}};
	
	\node (m) [above right = -1cm and 0cm of l] {
		\subcaptionbox{intermediate model\label{fig:graphicalModelB}}{
			\scalebox{0.7}{
				\begin{tikzpicture}[inner sep=0cm, minimum size = 0.8cm, line width = 1pt, text depth = 0ex, -latex]
				
				\setlength{\hdist}{1.2cm}
				\setlength{\hdistshort}{0.9cm}
				\setlength{\vdist}{1cm}
				
				\node (z) [var, text depth=-0.125ex] {$\mathbf{z}$};
				\node (zi) [var, below=\vdist of z] {${z}_i$};
				\node (s) [obs, right=\hdist of z] {$s_d$};
				\node (a) [obs, below=\vdist of s] {$a_d$};
				\node (R) [var, right=\hdist of a] {$g_k$};
				
				\node (dplate) [plate, fit=(a) (s), inner sep=2.5ex] {};
				\node (dlabel) [above right=-1.8mm and -1.5mm of dplate.south west] {\scriptsize $D$};
				
				\node (zplate) [plate, fit=(zi), inner sep=2.5ex] {};
				\node (zlabel) [above right=-1.5mm and -1.3mm of zplate.south west] {\scriptsize $|\mathcal{S}|$};
				
				\node (rplate) [plate, fit=(R), inner sep=2.5ex] {};
				\node (rlabel) [above right=-1.8mm and -1.5mm of rplate.south west] {\scriptsize $\infty$};
				
				\draw [double] (z) to (zi);
				\draw (zi) to (a);
				\draw (s) to (a);
				\draw (s) to (R);
				\draw (R) to (a);
				\end{tikzpicture}}}};
	
	\node (r) [right = 1cm of m] {
		\subcaptionbox{\ddBNIRLS\  model\label{fig:graphicalModelC}}{
			\scalebox{0.7}{
				\begin{tikzpicture}[inner sep=0cm, minimum size = 0.8cm, line width = 1pt, text depth = 0ex, -latex]
				
				\setlength{\hdist}{1.2cm}
				\setlength{\hdistshort}{0.9cm}
				\setlength{\vdist}{1cm}
				
				\node (ci) [var, text depth=-0.125ex] {$c_i$};
				\node (zi) [var, below=\vdist of ci] {${z}_i$};
				\node (s) [obs, right=\hdist of ci] {$s_d$};
				\node (a) [obs, below=\vdist of s] {$a_d$};
				\node (R) [var, right=\hdist of a] {$g_k$};

				\node (dplate) [plate, fit=(a) (s), inner sep=2.5ex] {};
				\node (dlabel) [above right=-1.8mm and -1.5mm of dplate.south west] {\scriptsize $D$};
				
				\node (zplate) [plate, fit=(zi), inner sep=2.5ex] {};
				\node (zlabel) [above right=-1.5mm and -1.3mm of zplate.south west] {\scriptsize $|\mathcal{S}|$};
				
				\node (cplate) [plate, fit=(ci), inner sep=2.5ex] {};
				\node (clabel) [above right=-1.5mm and -1.3mm of cplate.south west] {\scriptsize $|\mathcal{S}|$};
				
				\node (rplate) [plate, fit=(R), inner sep=2.5ex] {};
				\node (rlabel) [above right=-1.8mm and -1.5mm of rplate.south west] {\scriptsize $\infty$};
				
				\draw [double] (ci) to (zi);
				\draw (zi) to (a);
				\draw (s) to (a);
				\draw (s) to (R);
				\draw (R) to (a);
				\end{tikzpicture}}}};

	\node (t) [below right = -1cm and 0cm of l] {
		\subcaptionbox{\ddBNIRLT\  model}{
			\scalebox{0.7}{
				\begin{tikzpicture}[inner sep=0cm, minimum size = 0.8cm, line width = 1pt, text depth = 0ex, -latex]
				
				\setlength{\hdist}{1.2cm}
				\setlength{\hdistshort}{0.0cm}
				\setlength{\vdist}{1cm}
				
				\node (c) [var] {${c}_d$};
				\node (td) [var, below =\vdist of c, text depth=0.25ex] {${t}_d$};
				\node (zd) [var, right =\hdist of c, text depth=0.25ex] {${z}_d$};
				\node (dummy) [right=\hdistshort of zd] {};
				\node (a) [obs, below=\vdist of dummy] {$a_d$};
				\node (s) [obs, right=\hdistshort of dummy] {$s_d$};
				\node (dummy2) [right=\hdist of s] {};
				\node (R) [var, below=\vdist of dummy2] {$g_k$};
				
				\node (dplate) [plate, fit=(a) (s) (zd), inner sep=2.5ex] {};
				\node (dlabel) [above right=-1.8mm and -1.5mm of dplate.south west] {\scriptsize $D$};
				
				\node (rplate) [plate, fit=(R), inner sep=2.5ex] {};
				\node (rlabel) [above right=-1.8mm and -1.5mm of rplate.south west] {\scriptsize $\infty$};
				
				\node (dplate2) [plate, fit=(td) (c), inner sep=2.5ex] {};
				\node (dlabel2) [above right=-1.8mm and -1.5mm of dplate2.south west] {\scriptsize $D$};
				
				\draw [double] (c) to (zd);
				\draw (td) to (c);
				\draw (zd) to (a);
				\draw (s) to (a);
				\draw (s) to (R);
				\draw (R) to (a);
				\end{tikzpicture}}}};
	
	\draw [line width=1.5pt, arrows={-Triangle[width=6pt]}] ([yshift=0.5ex]l.north) |- ([yshift=2.3ex]m.west);
	\draw [line width=1.5pt, arrows={-Triangle[width=6pt]}] ([yshift=2.3ex, xshift=1ex]m.east) |- ([xshift=0.5ex, yshift=2.3ex]r.west);
	\draw [line width=1.5pt, arrows={-Triangle[width=6pt]}] ([yshift=-0ex]l.south) |- ([yshift=-0ex]t.west);
\end{tikzpicture}
\caption{%
	Relationships between all discussed subgoal models, illustrated in the form of Bayesian networks. Shaded nodes represent observed variables; deterministic dependencies are highlighted using double strokes.}
\label{fig:graphicalModels}
\end{figure*}

\subsection{Limitations of BNIRL}
\label{sec:limitations}
Subgoal-based inference is a well-motivated approach to IRL %
and the BNIRL framework has shown promising results in a variety of real-world scenarios. Yet, the model formulation by \cite{michini2012bayesian} comes with a number of significant conceptual %
limitations, which we explain in detail in the following paragraphs.

\phantomsection
\subsubsection*{Limitation 1: Subgoal Exchangeability and Posterior Predictive Policy} 
\label{phantom:lim1}
The central limitation of BNIRL is that %
the framework is restricted to pure subgoal extraction and does \textit{not} inherently 
provide a reasonable %
mechanism to %
generalize the expert behavior based on %
the inferred subgoals. %
The reason lies in the particular design of the framework, which, at its heart, treats the subgoal assignments~$\mathbf{\tilde{z}}$ as \textit{exchangeable random variables} \citep{aldous1985exchangeability}. By implication, %
the %
induced partitioning model $p(\mathbf{\tilde{z}})$ is agnostic about the %
covariate information contained 
in the data set %
and the resulting behavioral model is unable to 
propagate the expert %
knowledge 
to new situations. %

To illustrate the problem, let us investigate the predictive action distribution that arises from the original BNIRL formulation. %
For simplicity and without loss of generality, %
we may %
assume that we have 
perfectly 
inferred all subgoals $\mathcal{G}$ and %
corresponding subgoal assignments~$\mathbf{\tilde{z}}$ from the demonstration set $\mathcal{D}$.
Denoting by $a^*\in\mathcal{A}$ the predicted action %
at some new state~$s^*\in\mathcal{S}$, %
the BNIRL model yields
\begin{align}
	\p(a^* \given s^*,\mathcal{D},\mathbf{\tilde{z}},\mathcal{G}) &= \sum_{\tilde{z}^* \in \mathbb{N}} \p(a^*,\tilde{z}^* \given s^*,\mathcal{D},\mathbf{\tilde{z}},\mathcal{G}) \nonumber \\
	&= \sum_{\tilde{z}^* \in \mathbb{N}} \p(a^* \given \tilde{z}^*,s^*,\mathcal{D},\mathbf{\tilde{z}},\mathcal{G}) \p(\tilde{z}^* \given s^*,\mathcal{D},\mathbf{\tilde{z}},\mathcal{G}) \nonumber
	\\ &\stackrel{(\star)}{=} \sum_{\tilde{z}^* \in \mathbb{N}} \p(a^* \given \tilde{z}^*,s^*,g_{\tilde{z}^*}) \p(\tilde{z}^* \given \mathbf{\tilde{z}})  \eqkomma 
	\label{eq:BNIRLpredictive}
\end{align}
where $\tilde{z}^*\in\mathbb{N}$ is the latent subgoal index belonging to $s^*$. Note that $\p(a^* \given \tilde{z}^*,s^*,g_{\tilde{z}^*})$ can either represent the softmax decision rule $\softmax(a^* \given s^*, g_{\tilde{z}^*})$ from Equation~\eqref{eq:softmaxPolicy} or %
an %
optimal (deterministic) policy for subgoal $g_{\tilde{z}^*}$, depending on whether we aspire to %
describe the \textit{noisy expert behavior} at $s^*$ or want to determine %
an \textit{optimal action} according to the inferred reward model. 
The last equality in Equation~\eqref{eq:BNIRLpredictive}, indicated by $(\star)$, follows from the conditional independence properties implied by Equation~\eqref{eq:BNIRLjoint}, which can be easily verified using d-separation \citep{koller2009probabilistic} on the graphical model in Figure~\ref{fig:graphicalModelA}.

As Equation~\eqref{eq:BNIRLpredictive} reveals, the predictive model is characterized by the posterior distribution $\p(\tilde{z}^* \given s^*,\mathcal{D},\mathbf{\tilde{z}},\mathcal{G})$ of the latent subgoal assignment $\tilde{z}^*$ of state $s^*$\nachschub{the intuition being that, in order to generalize the expert's plan to a new situation, we need %
to %
take into account the gathered information about what would be a likely subgoal targeted by the expert at~$s^*$.} %
However, in BNIRL, %
the %
distribution 
$\p(\tilde{z}^* \given s^*,\mathcal{D},\mathbf{\tilde{z}},\mathcal{G})$ is modeled \textit{without consideration of the query state~$s^*$, or any other observed variable}. %
By conditional independence~(Equation~\ref{eq:BNIRLjoint}),
the distribution effectively reduces~$(\star)$ to the CRP prior $\p(\tilde{z}^* \given \mathbf{\tilde{z}})$, which, due to its intrinsic exchangeability property, only considers the subgoal frequencies of the readily inferred %
assignments~$\mathbf{\tilde{z}}$. 
Clearly, a subgoal assignment mechanism %
based solely on
frequency information is of little use when it comes to predicting the expert behavior %
as it 
will inevitably  ignore the %
structural information contained in the demonstration set
and %
always return the same subgoal probabilities %
at all query states, regardless of the agent's actual 
situation. %
By contrast, a reasonable %
assignment mechanism should inherently take into account the context of the agent's current state~$s^*$ 
when deciding about the next action. %

While the authors of BNIRL %
discuss the action selection problem in their paper and propose %
an assignment strategy for new states based on action marginalization, their approach does not provide a satisfactory solution to the problem
because the alleged conditioning on the query state \citep[see Equation~19 in the original paper,][]{michini2012bayesian} has no %
effect on the involved subgoal indicator variable, as shown by Equation~\eqref{eq:BNIRLpredictive} above. %
The only way to remedy the problem without modifying the model is to use an external post-processing scheme like the waypoint method, %
discussed in the next section.

\phantomsection
\subsubsection*{Limitation 2: Spatial and Temporal Context} 
\label{phantom:lim2}
The waypoint method, described at full length in a follow-up paper %
by \citet{michini2015bayesian}, is a post-processing routine %
to convert the subgoals %
identified through BNIRL into a valid option model \citep{sutton1999between}. The obtained model reconstructs the high-level plan of the demonstrator by sequencing the inferred subgoals in a way that %
complies with the spatio-temporal relationships of the expert's decisions as observed %
during the demonstration phase. %
To this end, the required initiation and termination sets of the option-policies are constructed %
by considering the state distances to the identified 
subgoals as well as their %
temporal ordering %
prescribed by the expert.
\begin{figure*}
	\centering
	
	\newlength{\bagOfWordsScale}
	\setlength{\bagOfWordsScale}{0.22mm}
	\begin{tikzpicture}
	
	\usetikzlibrary{arrows.meta, arrows, positioning}
	
	\definecolor{col8}{RGB}{191,85,105}
	\definecolor{col1}{RGB}{176,41,41}
	\definecolor{col2}{RGB}{238,129,79}
	\definecolor{col3}{RGB}{245,186,65}
	\definecolor{col4}{RGB}{248,215,87}
	\definecolor{col5}{RGB}{198,207,96}
	\definecolor{col6}{RGB}{115,194,205}
	\definecolor{col7}{RGB}{84,167,209}
	
	\tikzset{
		arr0/.pic={\draw[solid, line width=2pt, -{Triangle[scale=0.75]}] (0,0) to (0.8,0);}, 
		arr45/.pic={\draw [solid, line width=2pt, -{Triangle[scale=0.75]}] (0,0) -- (0.566,0.566);},
		arr90/.pic={\draw [solid, line width=2pt, -{Triangle[scale=0.75]}] (0,0) -- (0,0.8);},
		arr135/.pic={\draw [solid, line width=2pt, -{Triangle[scale=0.75]}] (0,0) -- (-0.566,0.566);},
		arr180/.pic={\draw [solid, line width=2pt, -{Triangle[scale=0.75]}] (0,0) -- (-0.8,0);},
		arr225/.pic={\draw [solid, line width=2pt, -{Triangle[scale=0.75]}] (0,0) -- (-0.566,-0.566);},
		arr270/.pic={\draw [solid, line width=2pt, -{Triangle[scale=0.75]}] (0,0) -- (0,-0.8);},
		arr315/.pic={\draw [solid, line width=2pt, -{Triangle[scale=0.75]}] (0,0) -- (0.566,-0.566);}
	}
	
	\node (a) [label={below:imperfect demonstrations}, scale=\bagOfWordsScale] {
		\begin{tikzpicture}
		\draw[step=1.0, black, line width=1pt, xshift=0.5cm, yshift=0.5cm] (0,0) grid (5,5);
		\node at (1,5) {\tikz{\draw pic {arr315}}};
		\node at (2,5) {\tikz{\draw pic {arr315}}};
		\node at (3,5) {\tikz{\draw pic {arr0}}};
		\node at (4,5) {\tikz{\draw pic {arr0}}};
		\node at (5,5) {\tikz{\draw pic {arr270}}};
		\node at (5,4) {\tikz{\draw pic {arr225}}};
		\node at (5,3) {\tikz{\draw pic {arr180}}};
		\node at (4,3) {\tikz{\draw pic {arr180}}};
		\node at (3,3) {\tikz{\draw pic {arr135}}};
		\node at (2,3) {\tikz{\draw pic {arr180}}};
		\node at (1,3) {\tikz{\draw pic {arr270}}};
		\node at (1,2) {\tikz{\draw pic {arr315}}};
		\node at (1,1) {\tikz{\draw pic {arr45}}};
		\node at (2,1) {\tikz{\draw pic {arr0}}};
		\node at (3,1) {\tikz{\draw pic {arr0}}};
		\node at (4,1) {\tikz{\draw pic {arr45}}};
		\node at (5,1) {\tikz{\draw pic {arr315}}};
		\end{tikzpicture}
	};

	\node (b) [right=2cm of a, label={below:bag-of-words clustering}, scale=\bagOfWordsScale] {
		\begin{tikzpicture}[line width=2, inner sep=0.75mm, dotted]
		\node [solid, draw, minimum size=5cm, anchor=south west, line width=3pt, rounded corners=5mm] at (0.5,0.5) {};
		\node [draw=col1, rounded corners=2mm] at (0.9,2.1) {
			\begin{tikzpicture}
			\node at (0,0) {\tikz{\draw pic {arr0}}};
			\node at (0,0.4) {\tikz{\draw pic {arr0}}};
			\node at (0,0.8) {\tikz{\draw pic {arr0}}};
			\node at (0,1.2) {\tikz{\draw pic {arr0}}};
			\end{tikzpicture}
		};
		
		\node [draw=col2, rounded corners=2mm] at (3.9,1.9) {
			\begin{tikzpicture}
			\node at (0,0) {\tikz{\draw pic {arr45}}};
			\node at (0.4,0) {\tikz{\draw pic {arr45}}};
			\end{tikzpicture}
		};
		
		\node [draw=col4, rounded corners=2mm] at (2.8,4.6) {
			\begin{tikzpicture}
			\node at (0,0) {\tikz{\draw pic {arr135}}};
			\end{tikzpicture}
		};
		
		\node [draw=col5, rounded corners=2mm] at (1,4.4) {
			\begin{tikzpicture}
			\node at (0,0) {\tikz{\draw pic {arr180}}};
			\node at (0,0.4) {\tikz{\draw pic {arr180}}};
			\node at (0,0.8) {\tikz{\draw pic {arr180}}};
			\end{tikzpicture}
		};
		
		\node [draw=col6, rounded corners=2mm] at (2.5,1.4) {
			\begin{tikzpicture}
			\node at (0,0) {\tikz{\draw pic {arr225}}};
			\end{tikzpicture}
		};
		
		\node [draw=col7, rounded corners=2mm] at (2.6,3.0) {
			\begin{tikzpicture}
			\node at (0,0) {\tikz{\draw pic {arr270}}};
			\node at (0.4,0) {\tikz{\draw pic {arr270}}};
			\end{tikzpicture}
		};
		
		\node [draw=col8, rounded corners=2mm] at (4.2,4.1) {
			\begin{tikzpicture}
			\node at (0,0) {\tikz{\draw pic {arr315}}};
			\node at (0,0.4) {\tikz{\draw pic {arr315}}};
			\node at (0,0.8) {\tikz{\draw pic {arr315}}};
			\node at (0,1.2) {\tikz{\draw pic {arr315}}};
			\end{tikzpicture}
		};
		
	\end{tikzpicture}
};

\node (c) [right=2cm of b, label={below:noisy trajectory labeling}, scale=\bagOfWordsScale] {
	\begin{tikzpicture}[inner sep=0]
	\node [draw, minimum size=1cm, fill=col8] at (1,5) {\tikz{\draw pic {arr315}}};
	\node [draw, minimum size=1cm, fill=col8] at (2,5) {\tikz{\draw pic {arr315}}};
	\node [draw, minimum size=1cm, fill=col1] at (3,5) {\tikz{\draw pic {arr0}}};
	\node [draw, minimum size=1cm, fill=col1] at (4,5) {\tikz{\draw pic {arr0}}};
	\node [draw, minimum size=1cm, fill=col7] at (5,5) {\tikz{\draw pic {arr270}}};
	\node [draw, minimum size=1cm, fill=col6] at (5,4) {\tikz{\draw pic {arr225}}};
	\node [draw, minimum size=1cm, fill=col5] at (5,3) {\tikz{\draw pic {arr180}}};
	\node [draw, minimum size=1cm, fill=col5] at (4,3) {\tikz{\draw pic {arr180}}};
	\node [draw, minimum size=1cm, fill=col4] at (3,3) {\tikz{\draw pic {arr135}}};
	\node [draw, minimum size=1cm, fill=col5] at (2,3) {\tikz{\draw pic {arr180}}};
	\node [draw, minimum size=1cm, fill=col7] at (1,3) {\tikz{\draw pic {arr270}}};
	\node [draw, minimum size=1cm, fill=col8] at (1,2) {\tikz{\draw pic {arr315}}};
	\node [draw, minimum size=1cm, fill=col2] at (1,1) {\tikz{\draw pic {arr45}}};
	\node [draw, minimum size=1cm, fill=col1] at (2,1) {\tikz{\draw pic {arr0}}};
	\node [draw, minimum size=1cm, fill=col1] at (3,1) {\tikz{\draw pic {arr0}}};
	\node [draw, minimum size=1cm, fill=col2] at (4,1) {\tikz{\draw pic {arr45}}};
	\node [draw, minimum size=1cm, fill=col8] at (5,1) {\tikz{\draw pic {arr315}}};
	\draw[step=1.0, black, line width=1pt, xshift=1cm, yshift=0.5cm] (0,0) grid (5,5);
	\end{tikzpicture}
};

\draw [-latex, line width=3pt, shorten >= 8, shorten <= 8] (a) to (b);
\draw [-latex, line width=3pt, shorten >= 8, shorten <= 8] (b) to (c);

\end{tikzpicture}

\caption{A diagram to illustrate the implications of the exchangeability assumption in~BNIRL. Similar to a bag-of-words model \citep{blei2003latent,yang2007evaluating}, the BNIRL partitioning %
mechanism ignores the spatio-temporal context of the data, %
which makes it difficult to %
discriminate demonstration noise %
from a real change of the agent's intentions. Note that the diagram illustrates the partitioning process in a simplified way as it only %
shows the effect of the prior $p(\mathbf{\tilde{z}})$ but neglects %
the impact of the likelihood model $\pi$. %
While the latter %
\textit{does} indeed consider the state context of the actions, it cannot account for 
spatial or temporal patterns in the
data as it processes all state-action pairs %
separately.}
\label{fig:noisyLabels}
\end{figure*}
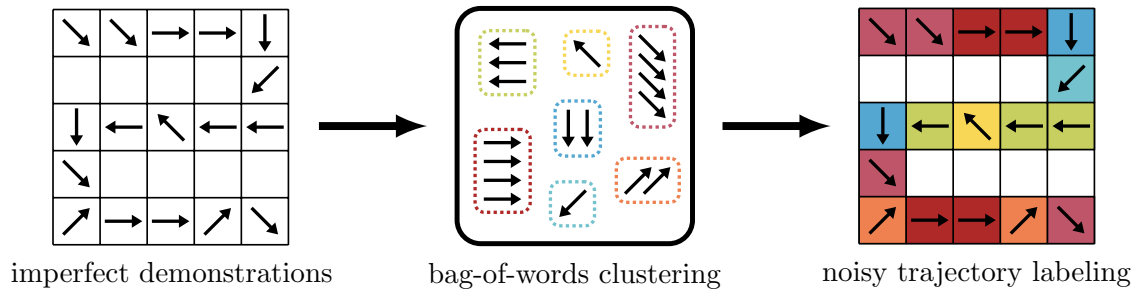

When combined with BNIRL, this method allows to synthesize a behavioral model that %
mimics the observed expert behavior. However, the strategy comes with a number of significant drawbacks: 
\begin{enumerate}
\item[\first] Using the waypoint method, the spatio-temporal %
relationships between the individual demonstrations are
explored only in a post-hoc fashion and %
are largely ignored during the actual inference procedure~(the state information enters via the likelihood model $\pi$ but is not considered by the partitioning %
model $p(\mathbf{\tilde{z}}$), as explained in Limitation~\hyperref[phantom:lim1]{1}). %
This lack of context-awareness makes the inference mechanism overly prone to demonstration noise (see Figure~\ref{fig:noisyLabels} and results in Section~\ref{sec:results}).
\item[\second] Measuring proximities to subgoals in order to determine the right visitation order requires some form of distance metric defined on the state space. If the system states correspond to physical locations, constructing such a metric is usually straightforward. However, in the general case where states encode arbitrary abstract information (see %
 example in Section~\ref{sec:randomMDP}), %
it can become difficult to design that metric by hand. Unfortunately, the BNIRL framework does not provide any %
solution to this problem. 
\item[\third] The waypoint method cannot be applied to multiple unaligned trajectories (\eg, obtained from different experts) or %
in cases where the data set does not carry any temporal information. This situation occurs, for instance, when %
the expert data is provided as separate state-action pairs with unknown timestamps and not given in form of coherent trajectories (see again example in Section~\ref{sec:randomMDP}). %
\item[\fourth] Assigning a particular visitation order to the inferred subgoals is meaningful only if the expert eventually reaches those subgoals during the demonstration phase (or if, at least, the subgoals lie ``close'' to the visited states in terms of the aforementioned distance metric). Finding subgoals with such properties can be guaranteed by constraining the support of the subgoal prior distribution $p_g$ to states that are near to the expert data (see footnote on page~\pageref{foot:conditioning}) but this %
reduces the flexibility of the model and potentially disables compact encodings of the task~(Figure~\ref{fig:globalVSlocal}).
\end{enumerate}

\begin{figure}[p]
	
	\tikzset{->-/.style={decoration={
				markings,
				mark=at position .5 with {\arrow{stealth}}},postaction={decorate}}}
	
	\tikzset{cross/.style={cross out, draw=black, fill=none, minimum size=2*(#1-\pgflinewidth), inner sep=0pt, outer sep=0pt}, cross/.default={0.75ex}}
	
	\tikzstyle{goal} = [draw, circle]
	\tikzstyle{final} = [cross, minimum size=6.5pt]
	
	\tikzset{distracted/.pic ={
			\node (S) at (0,0) {};
			\node (I) at (1,0) {};
			\node (A) at (1,1) {};
			\node [circle] (E) at (1.5,0.5) {};
			\node [final] (G) at (2,0) {};
	}}
	
	\tikzset{collide/.pic ={
			\node (L) at (0,0) {};
			\node (L1) at (1,0.25) {};
			\node [final] (M) at (2,0.5) {};
			\node (R1) at (1,0.75) {};
			\node (R) at (0,1) {};
	}}
	\centering
	
	\begin{tikzpicture}
	\node (A1) at (0,0) {
		\begin{tikzpicture} [line width=1.5pt, minimum size=3ex, inner sep=0mm]
		\pic[scale=1.5]{collide};
		\draw [-{Stealth[line width=3pt]}] (L.center) to (L1);
		\draw [-{Stealth[line width=3pt]}] (R.center) to (R1);
		\draw [dots, shorten <=26, shorten >=2] (L.center) to (M.center);
		\draw [dots, shorten <=26, shorten >=2] (R.center) to (M.center);
		\end{tikzpicture}};
	
	\node (A2) at (5,0) {
		\begin{tikzpicture} [line width=1.5pt, minimum size=3ex, inner sep=0mm]
		\pic[scale=1.5]{collide};
		\draw [dots, shorten <=26, shorten >=2] (L.center) to (M.center);
		\draw [dots, shorten <=26, shorten >=2] (R.center) to (M.center);
		\draw [col3, arrows={-{Stealth[black, line width=3pt]}}] (L.center) to (L1);
		\draw [col1, arrows={-{Stealth[black, line width=3pt]}}] (R.center) to (R1);
		\node [goal, col3] at (L1) {};
		\node [goal, col1] at (R1) {};
		\end{tikzpicture}};
	
	\node (A3) at (10,0) {
		\begin{tikzpicture} [line width=1.5pt, minimum size=3ex, inner sep=0mm]
		\pic[scale=1.5]{collide};
		\draw [dots, shorten <=26, shorten >=2] (L.center) to (M.center);
		\draw [dots, shorten <=26, shorten >=2] (R.center) to (M.center);
		\draw [col2, arrows={-{Stealth[black, line width=3pt]}}] (L.center) to (L1);
		\draw [col2, arrows={-{Stealth[black, line width=3pt]}}] (R.center) to (R1);
		\node [goal, col2] at (M) {};
		\end{tikzpicture}};
	
	\node [label={below:demonstrations}] (B1) at (0,-2.5) {
		\begin{tikzpicture} [line width=1.5pt, minimum size=3ex, inner sep=0mm]
		\pic[scale=1.5]{distracted};
		\draw [-{Stealth[line width=3pt]}] (S.center) to (I.center) to (A.center) to (E);
		\draw [dots, shorten <= 17] (A.center) to (G.center);
		\end{tikzpicture}};
	
	\node [label={below:local search}] (B2) at (5,-2.5) {
		\begin{tikzpicture} [line width=1.5pt, minimum size=3ex, inner sep=0mm]
		\pic[scale=1.5]{distracted};
		\draw [dots, shorten <= 17] (A.center) to (G.center);
		\node [goal, col1] at (I) {};
		\node [goal, col2] at (A) {};
		\node [goal, col3] at (E) {};
		\draw [col1, line cap=round] (S.center) to (I.center);
		\draw [col2, line cap=round] (I.center) to (A.center);
		\draw [col3, line cap=round, arrows={-{Stealth[black, line width=3pt]}}] (A.center) to (E);
		\end{tikzpicture}};
	
	\node [label={below:global search}] (B3) at (10, -2.5) {
		\begin{tikzpicture} [line width=1.5pt, minimum size=3ex, inner sep=0mm]
		\pic[scale=1.5]{distracted};
		\draw [dots, shorten <= 17] (A.center) to (G.center);
		\node [goal, col1] at (G) {};
		\node [goal, col3] at (A) {};
		\draw [col1, line cap=round] (S.center) to (I.center);
		\draw [col3, line cap=round] (I.center) to (A.center);
		\draw [col1, line cap=round, arrows={-{Stealth[black, line width=3pt]}}] (A.center) to (E);
		\end{tikzpicture}};
	
	\node at (5,-5) {
		\begin{tikzpicture}[line width=1.5pt]
		\node (z) at (0,0) {};
		\node (arr) [inner sep=0cm, outer sep=0mm] at (1,0) {};
		\draw  [-{Stealth[line width=3pt]}] (z.center) -- (arr);
		\node (t) [anchor=west, right=0cm of arr, text depth=0.1ex] {trajectory};
		\node (g1) [goal, right=0.5cm of t, col1] {};
		\node (g2) [goal, right=0.05cm of g1, col2] {};
		\node (g3) [goal, right=0.05cm of g2, col3] {};
		\node (s) [anchor=west, right=0cm of g3, text depth=0.1ex] {subgoals};
		\node (f) [final, right=0.5cm of s] {};
		\node (g) [anchor=west, right=0cm of f, text depth=0.1ex] {global goal};
		
		\node [fit=(z) (g), draw, line width=1pt, gray, rounded corners=2mm, inner sep=0.2ex] {};
		\end{tikzpicture}};
	
\end{tikzpicture}

\caption{Difference between local (constrained) and global (unconstrained) subgoal search. The top and the bottom row depict two different sets of demonstration data (solid lines), together with potential goal/subgoal locations (crosses/circles) that explain the observed behavior. Color indicates the corresponding subgoal assignment of each trajectory segment. \textbf{Top:}~two trajectories approaching the same goal. \textbf{Bottom:}~the agent is heading toward a global goal, gets temporarily distracted, and then follows up on its original plan. \textbf{Left:}~observed trajectories. \textbf{Center:}~example partitioning under the assumption that the expert reached all subgoals during the demonstration. \textbf{Right:}~example partitioning without restriction on the subgoal locations, yielding a more compact encoding of the task.\looseness-1}
\label{fig:globalVSlocal}
\end{figure}

\begin{figure}[p]
	\centering
	\subcaptionbox{time-varying intentions\label{subfig:crossing}}[5cm]{
		\tikzsetnextfilename{crossing}
		\begin{tikzpicture}[scale=3]
		\path [gradientline=1pt] (0,0) -- (1,1) -- (1,0) -- (0,1);
		\node [draw=black, circle, line width=1pt, minimum size=7mm, dotted] at (0.5,0.5) {};
		\end{tikzpicture}
	}
	\hspace{0.5cm}
	\subcaptionbox{time-invariant intentions\label{subfig:nocrossing}}[5cm]{
		\tikzsetnextfilename{nocrossing}
		\begin{tikzpicture}[scale=3]
		\path [gradientline=1pt] (0,0) -- (1,1) -- (1,0) -- (0.5,0.5) -- (0.85,0.85);
		\node [draw=black, circle, line width=1pt, minimum size=7mm, dotted] at (0.5,0.5) {};
		\end{tikzpicture}
	}
	\caption{Schematic comparison of the two basic behavior types, illustrated using two different agent trajectories. Color indicates the temporal progress. 	%
		(a)~Time-varying intentions may cause the agent to perform a different action when revisiting a %
		state (dotted circle). %
		(b)~By contrast, time-invariant intentions imply a simple state-to-action policy: the agent has no incentive to perform a different action at an already visited state since\einschub{by definition}the underlying %
		objective has remained unchained. %
		Diverging actions, as observed at the crossing point in~%
		the left subfigure, can therefore only be explained as a result of suboptimal behavior.\looseness-1}
	\label{fig:crossing}
\end{figure}
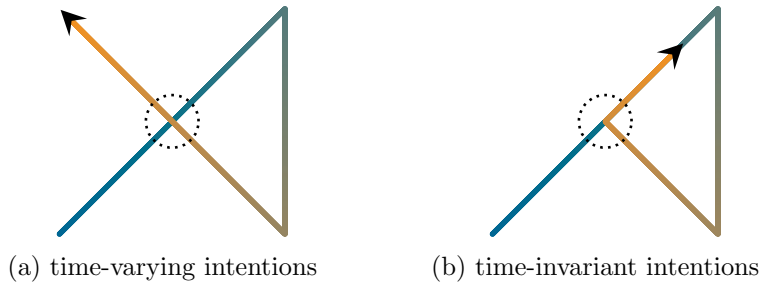
\tikzexternaldisable

\phantomsection
\subsubsection*{Limitation 3: Inconsistency under Time-Invariance}
\label{phantom:lim3}
Reasoning about the intentions of an %
agent, there are two basic types of behavior one may encounter:
\begin{enumerate}[topsep=-\parskip+1.5ex, itemsep=1ex, parsep=0mm]
\item[$\bullet$] either the agent follows a static strategy %
to optimize a fixed objective \citep[as assumed in the standard MDP formalism,][]{sutton1998reinforcement}, or
\item[$\bullet$] the intentions of the agent change over time.
\end{enumerate}
The latter is clearly the more general case but also poses a more difficult inference problem in that it requires us both, to identify the %
intentions of the agent and to understand their temporal relationship. The static scenario, in contrast, implies that there exists an optimal policy for the task %
in %
form of a simple state-to-action mapping $\pi:\mathcal{S}\rightarrow\mathcal{A}$ \citep{puterman1994}, which from the very beginning %
imprints a specific structure on the inference problem.

The BNIRL model generally falls into the second category since it freely allocates its subgoals per \textit{decision instant} and not per state, allowing a flexible change of the agent's objective. Yet, it is important to understand that 
the model does not actually distinguish  %
 between the two described scenarios. %
As explained in Limitation~\hyperref[phantom:lim2]{2}, the temporal aspect of the data is not explicitly modeled by the BNIRL framework, even though the waypoint method %
subsequently tries to capture the overall %
chronological %
order of  events.
As a consequence, the model is not tailored to either of the two scenarios: on the one hand, it ignores the valuable temporal context %
that is needed  in the time-varying case to  %
reliably discriminate demonstration noise from a real change of the agent's intention. %
On the other hand, the model is agnostic about the predefined time-invariant nature of the optimal policy in the static scenario. This lack of structure not only makes the inference problem harder than necessary in both cases; it also allows the model to learn inconsistent data representations in the static case %
since the same state can be potentially assigned to more than one subgoal, violating the above-mentioned %
state-to-action rule~(Figure~\ref{fig:crossing}).

\phantomsection
\subsubsection*{Limitation 4: Subgoal Likelihood Model}
\label{phantom:lim4}
Apart from the discussed limitations of the BNIRL partitioning model, %
it turns out there are %
two problematic issues %
concerning the %
softmax %
likelihood model in Equation~\eqref{eq:softmaxPolicy}. %
On the following pages, we demonstrate that %
the specific form of the model encodes a number of properties that %
are indeed contradictory to our intuitive understanding of subgoals. While these properties %
are less critical for the final 
prediction of the expert behavior, %
it turns out they drastically affect %
the %
\textit{localization} of subgoals. Since the cause of these effects is somewhat hidden in the model equation, %
we defer the detailed explanation %
to %
Section~\ref{sec:actionLikelihood}.

\phantomsection
\subsubsection*{Limitation 5: State-Action Demonstrations} 
\label{phantom:lim5}
Lastly, a minor problem of the original BNIRL %
framework 
is that the inference algorithm expects the demonstration data to be provided in the form of state-action pairs, which %
requires full access to the expert's action record.
This %
assumption is restrictive from a practical point of view as it %
confines the application of the model to settings with laboratory-like conditions that allow a complete monitoring of the expert. %
For this reason, it is important to note that an estimate of the expert's action sequence %
can be %
recovered through BNIRL with the help of an additional sampling stage (omitted in the original paper), %
provided that we know the %
successor state %
reached by the expert %
after each decision. %
For the marginalized %
inference scheme described in this paper, we present the corresponding sampling stage in Section~\ref{sec:actionInference}.\looseness-1

\section{Nonparametric Spatio-Temporal Subgoal Modeling}
\label{sec:ddBNIRL}
In this section, we introduce 
a redesigned inference framework, which, in analogy to BNIRL, we refer to as \textit{distance-dependent Bayesian nonparametric IRL} (ddBNIRL). %
We derive the model by making a series of modifications to the original BNIRL framework %
that address the previously described shortcomings %
on the conceptual level. %
Rethinking each part of the original framework, we begin with a discussion of the commonly used softmax action selection strategy (Equation~\ref{eq:softmaxPolicy}) in the context of subgoal %
inference, which finally leads to a redesign of the subgoal likelihood model %
(Limitation~\hyperref[phantom:lim4]{4}). Next, we focus on the subgoal allocation mechanism itself and introduce two %
closely related model formulations, %
each targeting one of the basic behavior types described in Figure~\ref{fig:crossing}, thereby addressing Limitations~\hyperref[phantom:lim1]{1}, \hyperref[phantom:lim2]{2} and~\hyperref[phantom:lim3]{3}. 
For the time-invariant case, we begin with an intermediate model that introduces a subtle yet important structural modification to the BNIRL framework. In a second step, we generalize that new model to account for the spatial structure of the %
control problem, which finally allows us to extrapolate the expert behavior to unseen situations. As part of this generalization, we present a new state space metric that arises naturally in the context of subgoal inference (see Limitation~\hyperref[phantom:lim2]{2}, second point). Lastly, we tackle the time-varying case and present a variant of the model that %
explicitly considers the temporal aspect of the subgoal problem. A solution to Limitation~\hyperref[phantom:lim5]{5} is discussed later in Section~\ref{sec:predAndInf}.

 In contrast to BNIRL, %
 both presented models can be used likewise for \textit{subgoal extraction} and \textit{action prediction}. Moreover, %
 sticking with the Bayesian methodology, the presented approach provides complete posterior information at all levels.

\subsection{The Subgoal Likelihood Model}
\label{sec:actionLikelihood}
Like many other approaches found in the (I)RL literature, BNIRL exploits a softmax weighting (Equation~\ref{eq:softmaxPolicy}) to transform the Q-values of an optimal policy into a valid subgoal likelihood model.
The softmax action rule has its origin in RL where it is known as the Boltzmann exploration strategy \citep{cesa2017boltzmann,sutton1998reinforcement}, which is commonly applied %
to %
cope with the %
\textit{exploration-exploitation dilemma} \citep{ghavamzadeh2015bayesian}. 
In recent years, however, it has %
also become the de facto standard for describing the %
(imperfect) decision-making strategy of %
an observed demonstrator \citep[see, for example,][]{dimitrakakis2011bayesian,ramachandran2007bayesian,rothkopf2011preference,NIPS2012_4737,neu2007,babes2011apprenticeship}.\looseness-1

In the following paragraphs, 
we focus on the implications of this %
model on the subgoal extraction problem 
and show that it contradicts our intuitive understanding of what characteristics a reasonable subgoal model should have. %
In particular, we argue that the subgoal posterior distribution arising from the %
BNIRL softmax model is of limited use %
for inferring the latent intention of the agent, due to subgoal artifacts caused by the system dynamics that %
cannot be reconciled with the evidence %
provided by the demonstrations. Based on these insights, we  propose an alternative transformation scheme that is more consistent with the subgoal principle. %

\subsubsection{Scale of the Reward Function}
\label{sec:choiceOfBeta}
The first implication of the softmax likelihood model %
concerns the choice of the uncertainty coefficient~$\beta$.  To explain the problem, we consider the thought experiment of an agent located at some state $s$ targeting a particular subgoal~$g$. The likelihood $\softmax(a \given s, g)$ in Equation~\eqref{eq:softmaxPolicy} quantifies the probability that the agent decides for a specific action $a$, based on the corresponding state-action values $Q(\cdot, s \given g)$. Since %
those values are linear in the underlying reward function~$R_g$ (Equation~\ref{eq:Qfunction}), %
the softmax likelihood model %
implies that the expert's %
ability to maximize the long-term reward, reflected by the %
spread of the probability mass in $\softmax(\cdot \given s, g)$, rises with the magnitude $C$ of the assumed subgoal reward (more concentrated probability mass signifies a higher confidence in the action choice). In other words, assuming a higher goal reward virtually increases our level of confidence in the expert, even though the difficulty of the underlying task and the optimal %
policy remain unchanged. Nonetheless, the BNIRL model requires us to readjust the uncertainty coefficient~$\beta$ %
in order to keep both models %
consistent. %
However, as the model provides no %
reference level for the expert's uncertainty across different scenarios, the choice of $\beta$ becomes nontrivial. Yet, the parameter has a significant impact on the granularity of the learned subgoal model as it trades off purposeful goal-oriented behavior against random decisions.

Note that the described effect is not specific to the subgoal reward model in Equation~\eqref{eq:zeroOneReward} but is really a consequence of the softmax transformation in Equation~\eqref{eq:softmaxPolicy}. In fact, the same problem occurs when the model is applied in a regular MDP environment with arbitrary reward function, for example, when the agent is provided an additional constant reward at all states. Clearly, such a %
constant reward provides no further information about the underlying task and should hence not affect the agent's belief about the optimal choice of actions \citep[compare discussion on constant reward functions and transformations of rewards,][]{ng2000algorithms,ng1999policy}. %
Based on these two observations, our intuition tells us that %
we %
seek for a rationality model that is \textit{invariant to affine transformations of the reward signal}, meaning that any two reward functions $R:\mathcal{S}\rightarrow\mathbb{R}$ and $\bar{R}\defeq xR+y$ with $x\in(0,\infty),y\in\mathbb{R}$, should %
give rise to the same intentional representation.
As we shall see in Section~\ref{sec:normalizedLikelihood}, %
this can be achieved by modeling the behavior of %
an agent %
based on the \textit{relative advantages} of actions rather than on their absolute expected returns. %

\subsubsection{Impact of the Transition Dynamics}
\label{sec:impactOfTransitionModel}
The second implication of the softmax likelihood model is %
less immediate and %
inherently %
tied to the dynamics of the system. To explain the problem, we consider a %
scenario where we have a %
precise idea about the potential goals of the expert. 
For our example, we adopt the grid world dynamics described in Section~\ref{sec:proofConcept} and consider a simple upward-directed trajectory of state-action pairs, which we aspire to explain using a single \mbox{(sub-)goal}. The complete setting is depicted in Figure~\ref{fig:Qnormalization}. 

Intuitively, %
the shown demonstration set should %
lead to goals that are located in the upper region of the state space and concentrated around the vertical center line. Moreover, as we move %
away from that %
center line, we expect to observe a smooth decrease in the %
subgoal likelihood, while the rate of the decay should reflect our assumed level of confidence in the expert. As it turns out, the induced BNIRL subgoal posterior distribution, shown in the top row for different values of $\beta$, contradicts this intuition. In particular, we observe that the %
model yields unreasonably high posterior values at the upper border states and corners of the state space, which, according to our intuitive understanding of the problem, cannot be justified by the given demonstration set.\looseness-1

To pin down the cause of this effect, we recall from Equation~\eqref{eq:softmaxPolicy} that the likelihood of an action %
grows with the corresponding Q-value. %
Hence, we need to ask what causes the Q-values of the demonstrated actions to be large when the subgoal is assumed to be located at one of the upper corner/border states of the space. Using Bellman's principle, we can express the optimal Q-function for any subgoal $g$ %
as
\begin{align}
	Q^{*}(s,a \given g) &= R_g(s) + \gamma\,\E_T\big[V^{*}(s' \given g) \given s, a\big] \nonumber \\
	&= R_g(s) + \gamma\,\E_T\big[\E_{\rho^{\pi_g}}[R_g(s'') \given s'] \given s, a \big] \label{eq:improperBellman} \\
	&= R_g(s) + \gamma\,\E_T\big[C \rho^{\pi_g}(g \given s') \given s, a\big] \nonumber \eqkomma
\end{align}
where $V^{*}(s \given g) \defeq \max_{a\in\mathcal{A}} Q^{*}(s ,a \given g) $, $\pi_g (s) \defeq \argmax_{a\in\mathcal{A}} Q^*(s,a \given g)$ is the optimal policy for subgoal $g$, and $C$ is the subgoal reward from Equation~\eqref{eq:zeroOneReward}. Lastly, $\rho^{\pi_g}(s' \given s) \defeq \sum_{t=0}^\infty \gamma^t p_t(s' \given s, \pi_g)$ denotes the (improper) discounted state distribution generated by executing policy $\pi_g$ from the considered initial state~$s$, where $p_t(s' \given s, \pi_g)$ refers to the probability of reaching state $s'$ from state $s$ under policy $\pi_g$ after exactly $t$ steps, which is defined implicitly via the transition model~$T$. 

The outer expectation in Equation~\eqref{eq:improperBellman} accounts for the stochastic transition to the successor state $s'$, while the inner expectation evaluates the expected cumulative reward over all states $s''$ that are reachable from $s'$.  
It is important to note that\einschub{by the construction of the Q-function}only the first move of the agent to state $s'$ %
depends on the choice of action~$a$ whereas all remaining moves (i.e., the argument of the expectation in the last line) are purely determined by the system dynamics and the subgoal policy $\pi_g$. Focusing on that inner part, %
we conclude that, \textit{regardless of the chosen action a}, the Q-values will be large whenever the assumed subgoal induces %
a high state visitation frequency~$\rho^{\pi_g}$ at its own location~$g$. The latter is fulfilled if 
\begin{enumerate}[topsep=-\parskip+1.5ex, itemsep=1ex, parsep=0mm]
\item[\first] the chance of reaching the goal in a small number of steps is high so that the effect of discounting is small and/or
\item[\second] the controlled transition dynamics	%
 $T(s'\given s, \pi_g(s))$ that are induced by the subgoal %
lead to a high chance of hitting the goal frequently.
\end{enumerate}
Note that the first condition implies that the model generally prefers subgoals that are close to the demonstration set\nachschub{a~\mbox{property} that cannot be justified in all cases}. For example, the recording of the demonstrations could have simply ended before the expert was able to reach the goal (Figure~\ref{fig:crossing}). Yet, if desired, this proximity property %
should be more naturally attributed to the subgoal prior model $p_g(g\given \mathbf{s})$. 

Moreover, we observe that the second condition depends primarily on %
the system dynamics~$T$, which can be more or less strongly influenced by the actions of the agent, depending on the scenario. In fact, in a pathological example, $T$ could be even independent of the agent's decisions, meaning that the agent has no control over its state. An example illustrating this extreme case would be a scenario where the agent gets always driven to the same terminal state, regardless of the executed policy. Although it is somewhat pointless %
speak of ``subgoals'' in this context, that terminal state would exhibit a high subgoal likelihood according to the softmax model because the corresponding visitation frequency  would be inevitably large. A %
softened variant of this condition %
can occur at corner/border states (\ie, states in which the agent %
experiences fewer degrees of freedom and which are hence more difficult to leave than others) 
and transition states (\ie, states that must be passed in order to get from certain regions of the space to others), which naturally exhibit an increased visitation frequency due to the characteristics of the environment.

\begin{figure}[p!]
	\centering
	\newlength{\gridworldboxQnorm}%
	\newlength{\lenJMLRqnorm}%
	\newlength{\lenJMLRsepH}%
	\setlength{\gridworldboxQnorm}{1pt}%
	\setlength{\lenJMLRqnorm}{4cm}%
	\setlength{\lenJMLRsepH}{0.7cm}%
	\begin{tikzpicture}[inner sep=0mm]
	
	\node (a1) [draw, line width=\gridworldboxQnorm, label={[label distance=0.2cm]below:{$\beta=0.1$}}] {\includegraphics[width=\lenJMLRqnorm]{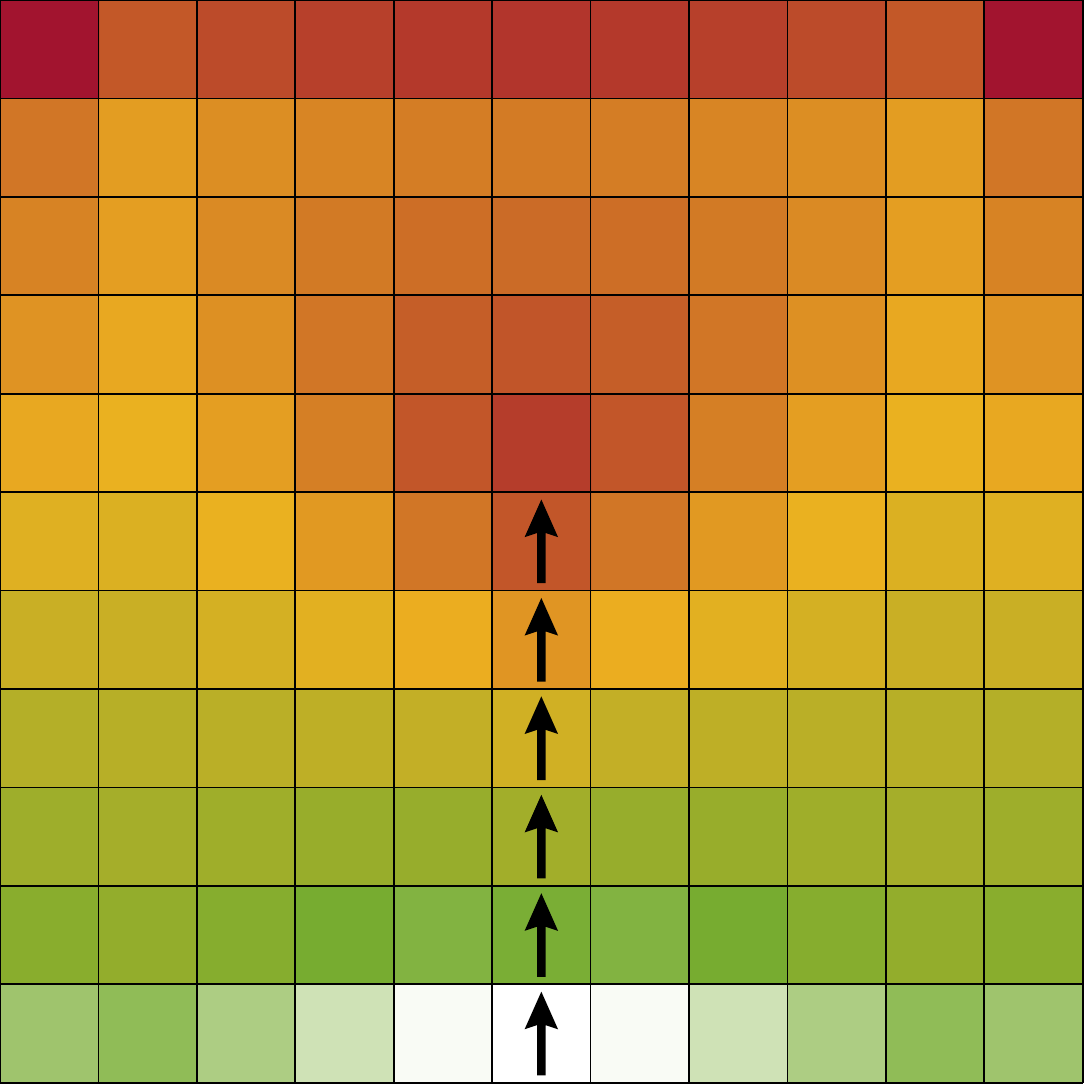}};
	\node (a2) [draw, line width=\gridworldboxQnorm, label={[label distance=0.2cm]below:{$\beta=1$}}, right=\lenJMLRsepH of a1] {\includegraphics[width=\lenJMLRqnorm]{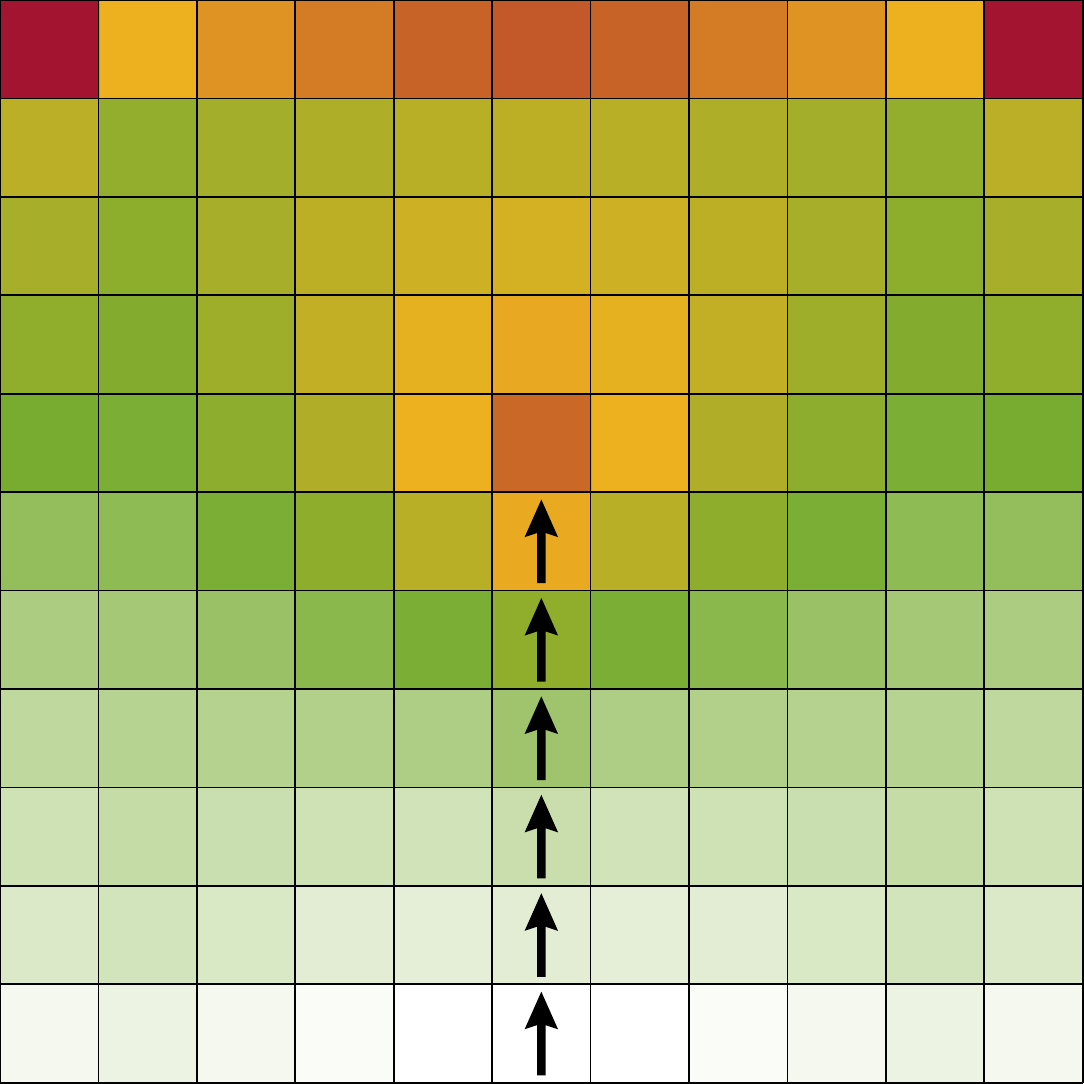}};
	\node (a3) [draw, line width=\gridworldboxQnorm, label={[label distance=0.2cm]below:{$\beta=10$}}, right=\lenJMLRsepH of a2] {\includegraphics[width=\lenJMLRqnorm]{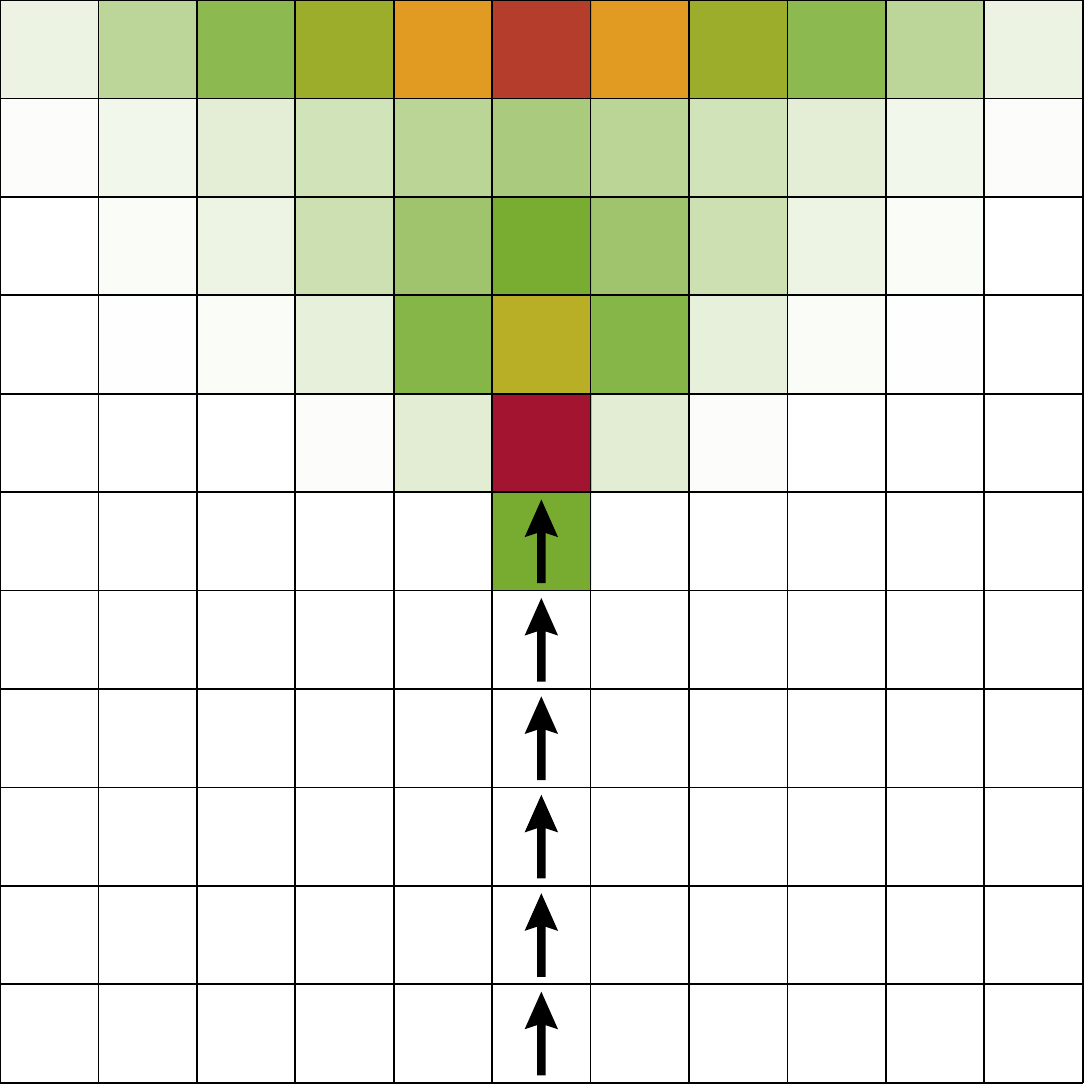}};
	\node [rotate=90, left=0.3cm of a1, anchor=south] {BNIRL model};
	
	\node (b1) [draw, line width=\gridworldboxQnorm, label={[label distance=0.2cm]below:{$\beta=0.1$}}, below=1cm of a1] {\includegraphics[height=\lenJMLRqnorm]{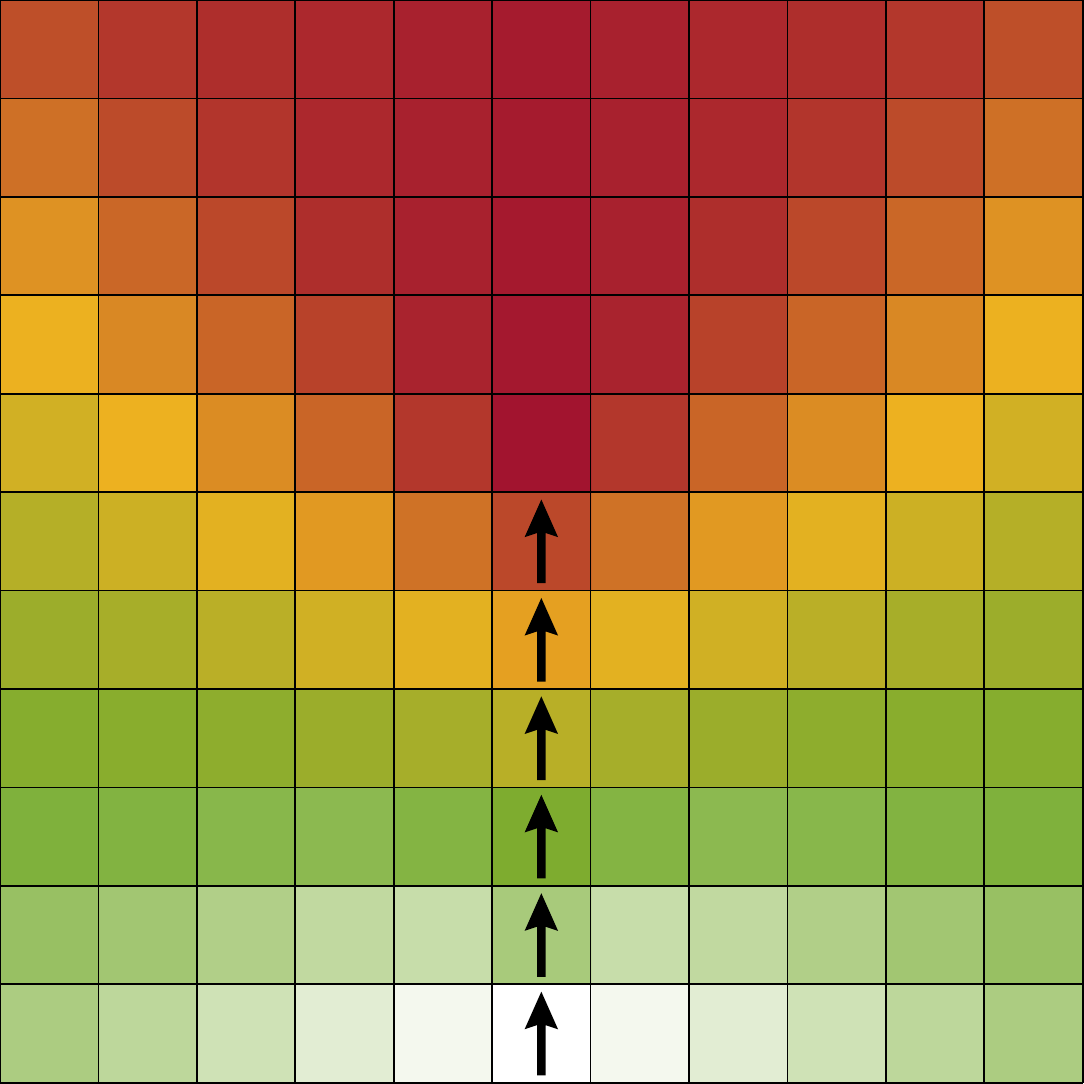}};
	\node (b2) [draw, line width=\gridworldboxQnorm, label={[label distance=0.2cm]below:{$\beta=1$}}, right=\lenJMLRsepH of b1] {\includegraphics[height=\lenJMLRqnorm]{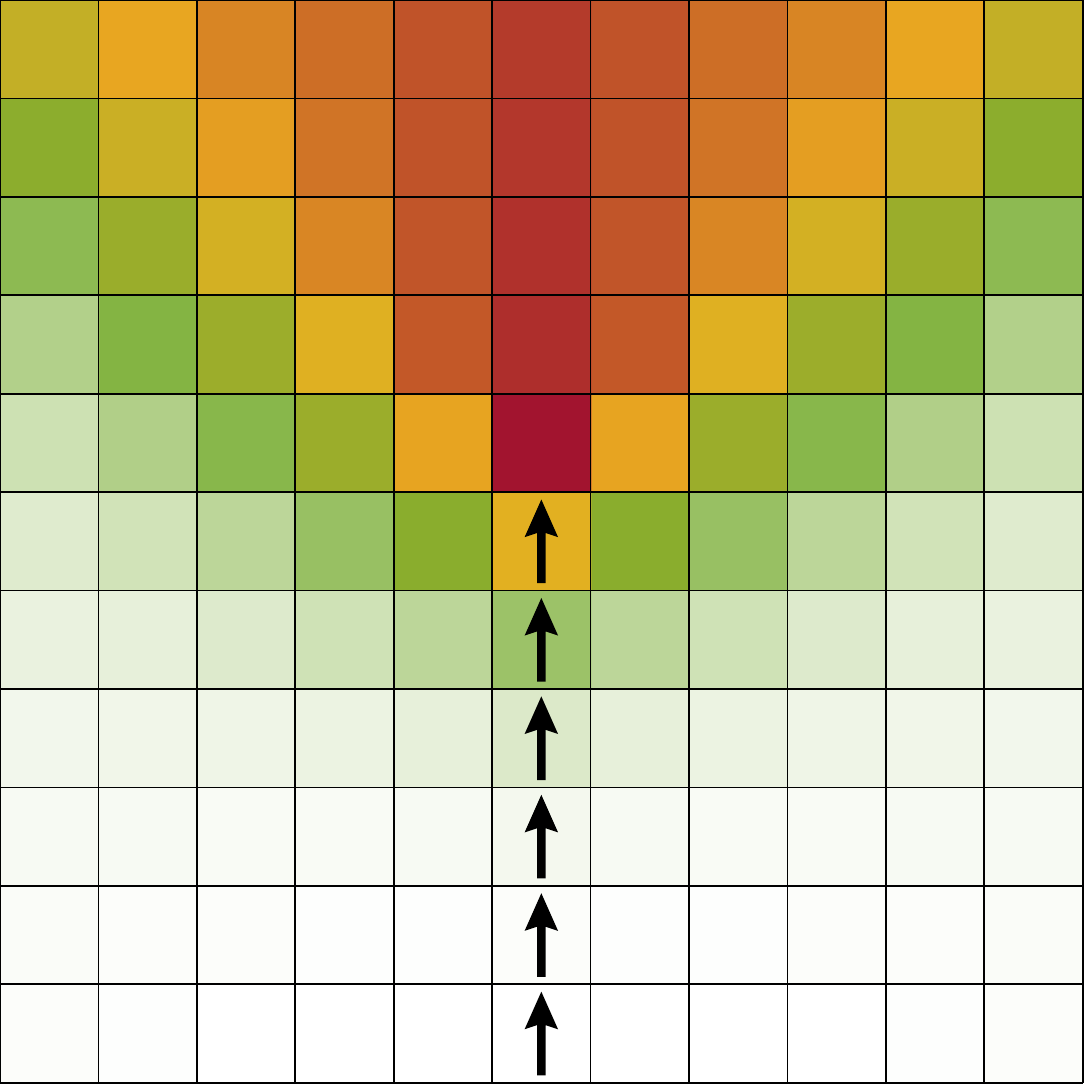}};
	\node (b3) [draw, line width=\gridworldboxQnorm, label={[label distance=0.2cm]below:{$\beta=10$}}, right=\lenJMLRsepH of b2] {\includegraphics[height=\lenJMLRqnorm]{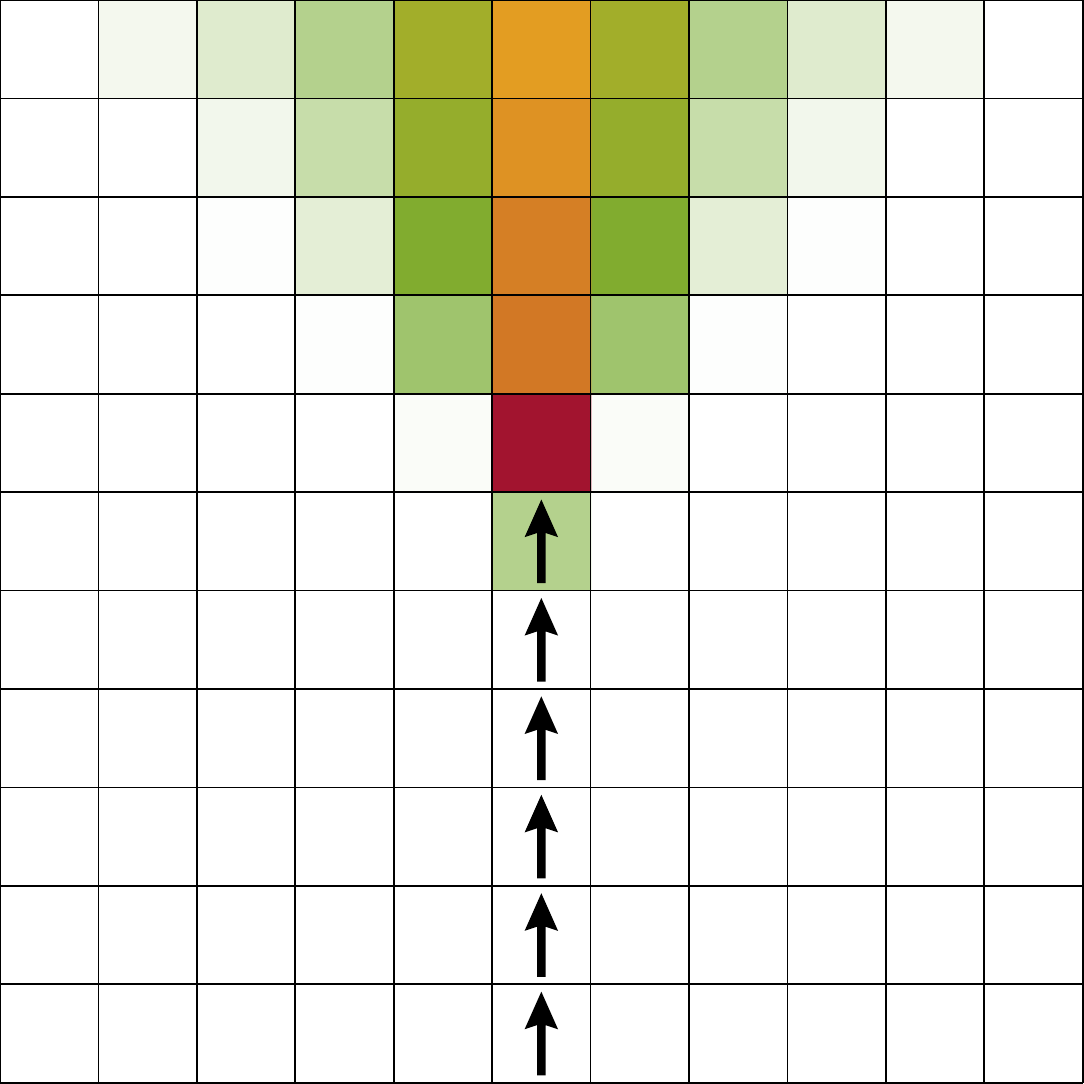}};
	\node [rotate=90, left=0.3cm of b1, anchor=south] {normalized model};
	
	\draw let \p1 = ($(a1.west) - (a3.east)$), \n1 = {veclen(\x1,\y1)} in ([yshift=0.85cm]a1.north west) to node (leg) [draw, line width=2pt, anchor=center, rotate=-90] {\includegraphics[height=\n1-2pt, width=0.3cm]{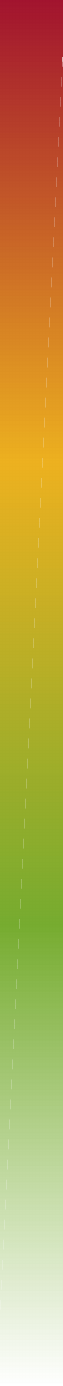}} ([yshift=0.5cm]a3.north east) ; 
	
	\node [above =1ex of leg.south west, anchor=south west] {$\leftarrow$ low probability};
	\node [above =1ex of leg.north west, anchor=south east] {high probability $\rightarrow$};
	
	\node (w1) [draw, line width=\gridworldboxQnorm, label={[label distance=0.2cm]below:{$\beta=0.1$}}, below=1.5cm of b1.south west, anchor=north west] {\includegraphics[width=6.5cm]{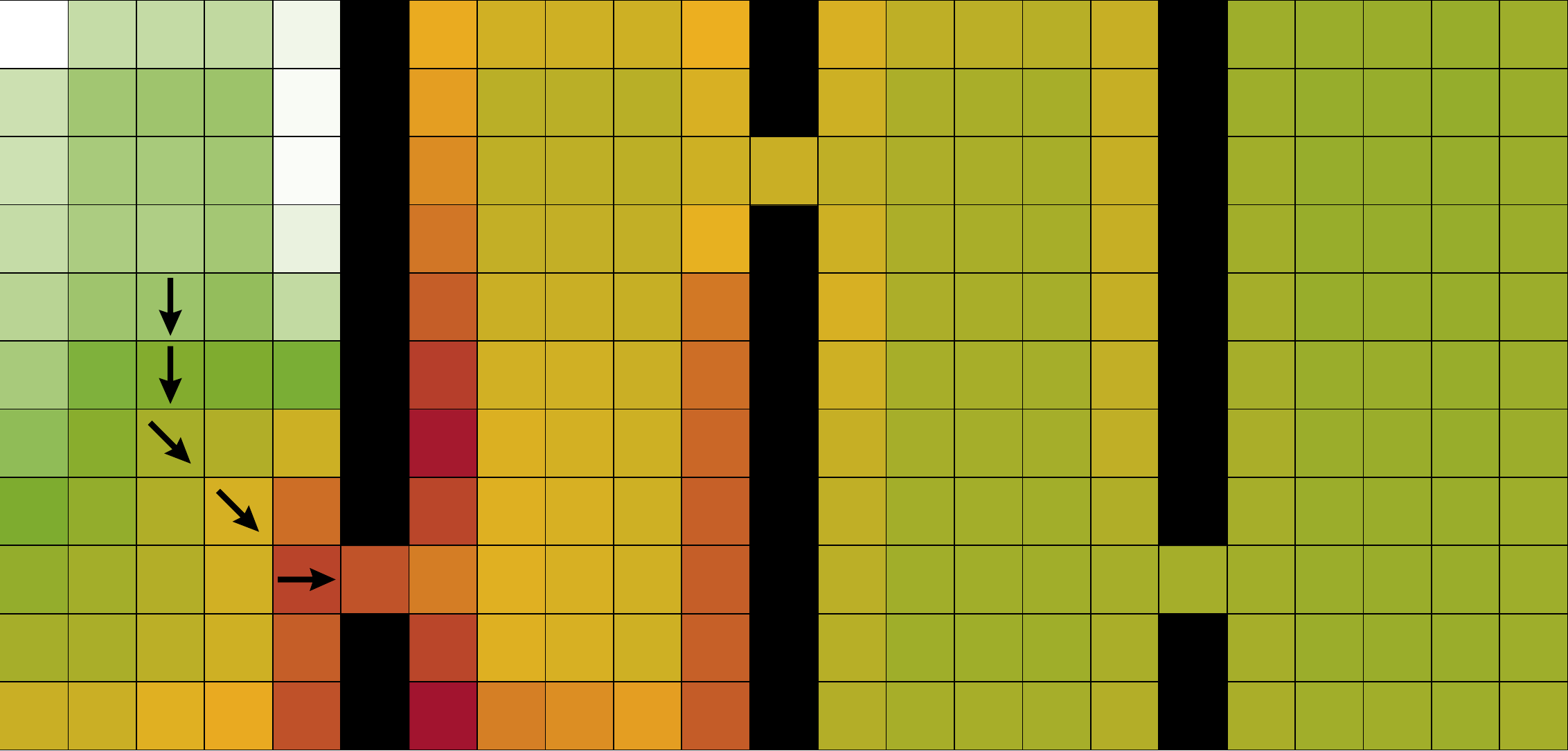}};
	\node (w2) [draw, line width=\gridworldboxQnorm, label={[label distance=0.2cm]below:{$\beta=0.1$}}, below=1.5cm of b3.south east, anchor=north east] {\includegraphics[width=6.5cm]{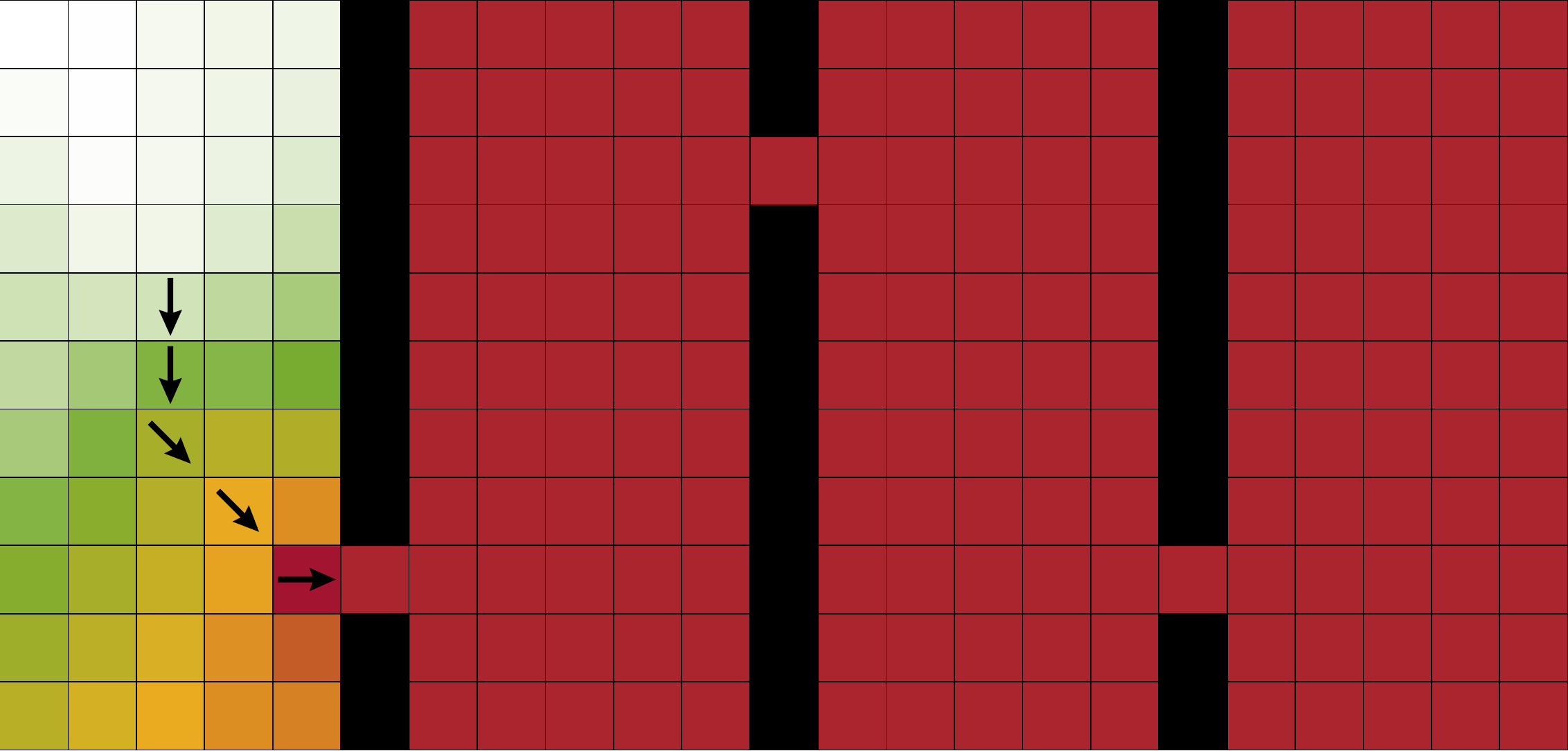}};
	\node [above=0.2cm of w1] {BNIRL model};
	\node [above=0.2cm of w2] {normalized model};

	\end{tikzpicture}
	\caption{Comparison of the subgoal posterior distributions induced by the original BNIRL likelihood model and by the proposed normalized model, based on the grid world dynamics described in Section~\ref{sec:proofConcept} and a uniform subgoal prior distribution $p_g$. The range of the shown color scheme is to be understood per subfigure. Black squares indicate wall states. The BNIRL likelihood model yields unreasonably high subgoal posterior mass at the border states and corners of the state space (due to locally increased state visitation probabilities arising from wall reflections) as well as at trajectory endings (caused by the implicit proximity property of the model)\nachschub{see Section~\ref{sec:impactOfTransitionModel} for details}. %
		Both effects are mitigated by the proposed normalized likelihood model, which %
		describes the action-selection process of the agent using {relative advantages} of actions instead of absolute returns.}
	\label{fig:Qnormalization}
\end{figure}

In our example in Figure~\ref{fig:Qnormalization}, we can observe the %
symptoms of both described conditions clearly. In particular, for an upward-directed policy as it is implied by the shown demonstration set, %
the induced state visitation distribution %
exhibits increased values at exactly the aforementioned border and corner states (due to the %
reflections occurring to the agent when hitting the state space boundary) as well as close to the trajectory ending (caused by the proximity condition). %

\subsubsection{The Normalized Likelihood Model}
\label{sec:normalizedLikelihood}
To address these problems, we modify the likelihood model using a rescaling of the involved Q-values. Let ${Q}^\wedge(s \given g)$ and ${Q}^\vee(s \given g)$ denote the maximum and minimum Q-values at state~$s$ for subgoal $g$, \ie, ${Q}^\wedge(s \given g) \defeq \max_{a\in\mathcal{A}} Q^*(s,a \given g)$ and ${Q}^\vee(s \given g) \defeq \min_{a\in\mathcal{A}}Q^*(s,a \given g)$. We then define the normalized state-action value function $Q^\bullet:\mathcal{S}\times\mathcal{A}\times\mathcal{S}\rightarrow[0,1]$ as
\begin{equation}
Q^\bullet(s, a \given  g) \defeq
\begin{cases}
\frac{Q^*(s,a \given g) - {Q}^\vee(s \given g)}{{Q}^\wedge(s \given g) - {Q}^\vee(s \given g)} & \text{if} \ \ {Q}^\wedge(s \given g) \neq {Q}^\vee(s \given g) \eqkomma \\
\epsilon & \text{otherwise} \eqkomma
\end{cases}
\label{eq:normalizedQ}
\end{equation}
where $\epsilon\in(0,1]$ is an arbitrary constant that is canceled out in Equation~\eqref{eq:normalizedLikelihood}.
In contrast to the Bellman state-action value function $Q^*$, %
which %
quantifies the expected return of an action, the normalized function $Q^\bullet$ assesses the return of that action in relation to the returns of all other actions. This concept is similar to that of the advantage function~\citep{baird1994reinforcement} with the important difference that the values returned by $Q^\bullet$ are normalized to the range $[0,1]$ and thus serve as an indicator for the \textit{relative} %
quality of actions. Accordingly, the values can be interpreted as \textit{relative advantages} (\ie, relative to the maximum possible advantage among all actions).
The normalized subgoal likelihood model is then constructed analogously to the BNIRL likelihood model,
\begin{equation}
\softmax^\bullet(a_d \given s_d, g_{\tilde{z}_d}) \propto \exp\big\{\beta Q^\bullet(s_d, a_d \given g_{\tilde{z}_d})\big\} \eqpunkt %
\label{eq:normalizedLikelihood}
\end{equation}
The key property of %
this model is that it is invariant to affine transformations of the reward function, as summarized by the following proposition.

\vspace{0.5\baselineskip}
\begin{proposition}[Affine Invariance]
Consider an MDP with reward function $R:\mathcal{S}\rightarrow\mathbb{R}$ %
and let $Q^*(s,a \given R)$ denote the corresponding optimal state-action value function. For the corresponding normalized function $Q^\bullet$ it holds that $Q^\bullet(s,a \given R) = Q^\bullet(s,a \given xR+y) \ \forall x\in(0,\infty),y\in\mathbb{R}, s\in\mathcal{S}, a\in\mathcal{A}$. Hence, the subgoal likelihood model in Equation~\eqref{eq:normalizedLikelihood} is invariant to affine transformations of $R$.
\end{proposition}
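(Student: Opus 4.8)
The plan is to establish the affine-invariance of $Q^\bullet$ by first understanding how an affine transformation of the reward signal propagates through the optimal $Q$-function, and then showing that the particular normalization in Equation~\eqref{eq:normalizedQ} cancels exactly the distortion introduced by that transformation. The key observation is that the $Q$-function is affine in the reward (as noted after Equation~\eqref{eq:Qfunction}), so I expect a clean linear relationship between $Q^*(s,a\given R)$ and $Q^*(s,a\given xR+y)$, after which the normalization step reduces to elementary algebra.

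\textbf{Step 1: Compute the transformed $Q$-function.} First I would write out the definition in Equation~\eqref{eq:Qfunction} for the transformed reward $\bar R \defeq xR+y$ and expand the expectation. Using linearity, $\bar R(s_{t=n}) = x R(s_{t=n}) + y$, so the discounted sum splits into two pieces: an $x$-scaled copy of the original return and the constant geometric series $\sum_{n=0}^\infty \gamma^n y = \tfrac{y}{1-\gamma}$. The crucial point is that the maximizing policy $\bar\pi$ is unchanged, because adding a state-independent constant $\tfrac{y}{1-\gamma}$ to every action's value shifts all values uniformly and scaling by $x>0$ preserves the ordering; hence the same $\bar\pi$ attains the maximum. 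This yields the relation
\begin{equation*}
Q^*(s,a \given xR+y) = x\,Q^*(s,a \given R) + \frac{y}{1-\gamma} \eqpunkt
\end{equation*}
I expect this to be the main conceptual step, and the only place where the positivity $x\in(0,\infty)$ is genuinely used (it guarantees the argmax is preserved rather than flipped to an argmin).

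\textbf{Step 2: Propagate through the normalization.} With the affine relation in hand, the maxima and minima over actions transform as ${Q}^\wedge(s\given xR+y) = x\,{Q}^\wedge(s\given R) + \tfrac{y}{1-\gamma}$ and ${Q}^\vee(s\given xR+y) = x\,{Q}^\vee(s\given R) + \tfrac{y}{1-\gamma}$, again using $x>0$ so that $\max$ and $\min$ are not interchanged. Substituting these three expressions into the first branch of Equation~\eqref{eq:normalizedQ}, the additive constant $\tfrac{y}{1-\gamma}$ cancels in both numerator and denominator, and the common factor $x$ cancels as well, leaving exactly $Q^\bullet(s,a\given R)$. I would also check the degenerate branch: the condition ${Q}^\wedge = {Q}^\vee$ is preserved under the affine map (since the gap ${Q}^\wedge - {Q}^\vee$ merely scales by $x\neq 0$), so both rewards fall into the same case and return the same constant $\epsilon$. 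This establishes $Q^\bullet(s,a\given R) = Q^\bullet(s,a\given xR+y)$ for all $s,a$ and all admissible $x,y$.

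\textbf{Step 3: Conclude for the likelihood.} Finally, since $\softmax^\bullet$ in Equation~\eqref{eq:normalizedLikelihood} depends on $R$ only through $Q^\bullet$, the invariance of $Q^\bullet$ immediately transfers to the likelihood model, completing the proof. The genuine mathematical content is entirely in Step~1; Steps~2 and~3 are routine cancellations. The only subtlety worth flagging carefully is the role of the sign of $x$—the result fails for $x<0$ because the roles of ${Q}^\wedge$ and ${Q}^\vee$ would swap—which is exactly why the proposition restricts to $x\in(0,\infty)$.
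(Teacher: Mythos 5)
Your proof is correct and follows essentially the same route as the paper's: establish the affine relation $Q^*(s,a\given xR+y)=xQ^*(s,a\given R)+\tfrac{y}{1-\gamma}$ from the linearity of the return in $R$, then observe that the shift and scale cancel in the normalization of Equation~\eqref{eq:normalizedQ}. You simply spell out more of the details the paper leaves implicit (preservation of the optimal policy under $x>0$, the transformation of $Q^\wedge$ and $Q^\vee$, and the degenerate branch), all of which are sound.
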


\begin{proof}
Due to the linear dependence of $Q^*$ on the reward function $R$ (Equation~\ref{eq:Qfunction}) it holds that $Q^*(s,a \given xR+y) = xQ^*(s,a \given R) + \frac{y}{1-\gamma}$. Using this relationship in Equation~\eqref{eq:normalizedQ}, it follows immediately that $Q^\bullet(s,a \given R) = Q^\bullet(s,a \given xR+y)$.
\end{proof}

\noindent Using the proposed likelihood model offers several advantages. First of all, it enables a %
more generic choice of the uncertainty coefficient~$\beta$ (Section~\ref{sec:choiceOfBeta}). This is because the returned Q$^\bullet$-values lie in the fixed range [0,1], where 0 always indicates the lowest and~1 indicates the highest confidence. For example, setting $\beta=\log(\beta')$ for some $\beta'\in(0,\infty)$ always %
corresponds to the assumption that the expert chooses the optimal action with a probability that is $\beta'$~times higher than the probability of choosing the least favorable action, irrespective of the underlying system model. 

Moreover, as the results in Figure~\ref{fig:Qnormalization} reveal, the induced subgoal posterior distribution is notably closer to our expectation. %
The reason for this is twofold: first, a likelihood computation based on relative advantages mitigates the %
influence of the transition dynamics discussed in Section~\ref{sec:impactOfTransitionModel}. This is because the described cumulation effect of the state visitation distribution $\rho^{\pi_g}$ (Equation~\ref{eq:improperBellman}) %
is present %
in the returns of all actions and is thus %
reduced through the proposed normalization. For instance, if the agent in our grid world follows a policy that is all upward directed (as shown in the example), the induced state visitation distribution exhibits increased values at the upper border states of the world, even if we manipulated the \textit{first} action of the agent (as considered in the Bellman Q-function). Accordingly, the original model would indicate an increased subgoal likelihood at those states. %
The normalized model, by contrast, is less affected as it constructs the likelihood by considering the increased visitation frequencies \textit{relative} to each other. 

Second, since the normalization %
diminishes the effect of the discounting, the subgoal posterior distribution is less concentrated around the trajectory ending and shows significant mass along the extrapolated path of the agent. %
This property allows us to identify %
far located states as potential goal locations, which adds more flexibility to the inferred subgoal constellation (compare Figure~\ref{fig:globalVSlocal}). 
As an illustrating example, consider the scenario shown in the bottom part of Figure~\ref{fig:Qnormalization}. We observe that the normalized model assigns high posterior mass to all states in the %
right three corridors since any subgoal located in those corridors explains the demonstration set equally well. Here, the difference between the two models is even more pronounced because the transition dynamics have a strong impact on the agent behavior due to the added wall states. %
For further details, we refer to Section~\ref{sec:proofConcept}, where we provide additional insights into the subgoal inference mechanism.

\subsection{Modeling Time-Invariant Intentions}
\label{sec:ddBNIRL-S}
With our redesigned likelihood model, we now focus on the partitioning structure of the model. Herein, we first consider the case where the intentions of the agent are constant with respect to time. As explained in Limitation~\hyperref[phantom:lim3]{3}, this setting is consistent with the standard MDP formalism in the sense that the optimal policy for the considered task can be described in the form of a state-to-action mapping. 

As a first step, to account for this relation, we establish a link between the model partitioning structure and the underlying system state space by replacing
the demonstration-based indicators $\mathbf{\tilde{z}}=\{\tilde{z}_d\in\mathbb{N}\}_{d=1}^{D}$ with a new set of variables $\mathbf{z}\defeq\{{z}_i\in\mathbb{N}\}_{i=1}^{|\mathcal{S}|}$. Unlike $\mathbf{\tilde{z}}$, %
these new indicators do not operate directly on the data 
but %
are instead tied to the elements in~$\mathcal{S}$. %
Although they formally represent a new type of variable, %
we can still imagine that their distribution follows a CRP. This yields an intermediate model of the form
\begin{equation*}
\p(\mathbf{a},\mathbf{z},\mathcal{G} \given \mathbf{s}) = \p(\mathbf{z}) \prod_{k=1}^{\infty} \p_g(g_k \given \mathbf{s}) \prod_{d=1}^D \softmax^\bullet(a_d \given s_d, g_{\mathbf{z}_{s_d}}) \eqkomma %
\end{equation*}
whose structure is illustrated in Figure~\ref{fig:graphicalModelB}. To see the difference to Equation~\eqref{eq:BNIRLjoint}, notice the way %
the subgoals are indexed in this model. %

The intermediate model makes it possible to reason about the policy (or, more suggestively, the underlying state-to-action rule %
approximated by the expert) at visited %
parts of the state space. %
Yet, the model is unable to extrapolate %
the gathered information to unvisited states, for the reasons explained in Section~\ref{sec:limitations}.
This problem can be solved by replacing the exchangeable prior distribution over subgoal assignments induced by the CRP with a non-exchangeable one, %
in order to account explicitly for the covariate state information %
contained in the demonstration set.
Based on our insights from 
Bayesian policy recognition \citep[][]{sosic2018pami}, we use the distance-dependent Chinese restaurant process \citep[ddCRP,][]{blei2011distance} for this purpose, which allows a very intuitive handling of %
the state context, as explained below. %
For alternatives, %
we point to the survey paper by \cite{foti2015survey}.

In contrast to the CRP, which assigns states to partitions, the ddCRP assigns states to other states, based on their pairwise distances. 
These ``to-state'' assignments are described by a set of indicators $\mathbf{c}\defeq\{c_i\in\mathcal{S}\}_{i=1}^{|\mathcal{S}|}$ %
with prior distribution $\p(\mathbf{c})=\prod_{i=1}^{|\mathcal{S}|}\p(c_i)$,
\begin{equation}
	\p(c_i=j) = \begin{cases} \nu & \text{if } i=j\eqkomma \\ f(\Delta_{i,j}) &  \text{otherwise}\eqkomma \end{cases}
	\label{eq:ddCRPprior}
\end{equation}
for $i,j\in\mathcal{S}$. Herein, $\nu\in[0,\infty)$ is called the self-link parameter of the process, $\Delta_{i,j}$ denotes the distance from state~$i$ to state $j$, and $f:[0,\infty) \rightarrow [0, \infty)$ is a monotone decreasing score function. Note that the distances $\{\Delta_{i,j}\}$ can be obtained via a suitable metric defined on the state space, which may be furthermore used for calibrating the %
score function~$f$ (see subsequent section).
The state partitioning structure itself is then determined by the connected components of the induced ddCRP graph (Figure~\ref{fig:ddCRPgraph}). %
Our  joint distribution, visualized in Figure~\ref{fig:graphicalModelC}, thus reads as
\begin{equation}
\p(\mathbf{a},\mathbf{c},\mathcal{G} \given \mathbf{s}) = \p(\mathbf{c}) \prod_{k=1}^{\infty}\p_g(g_k \given \mathbf{s}) \prod_{d=1}^D \softmax^\bullet(a_d \given s_d, g_{\mathbf{z}(\mathbf{c})|_{s_d}}) \eqkomma \hspace{-0.5ex}
\label{eq:ddBNIRLSjoint}
\end{equation}
where $\mathbf{z}(\mathbf{c})|_s$ denotes the subgoal label of state $s$ arising from the considered indicator set~$\mathbf{c}$. %
In order to highlight the state dependence of the underlying subgoal mechanism, we refer to this model as \ddBNIRLS.

\subsubsection{The Canonical State Metric for Spatial Subgoal Modeling}
\label{sec:canonicalStateMetric}
The use of the ddCRP as a prior model for the state partitioning in Equation~\eqref{eq:ddBNIRLSjoint} %
inevitably requires %
some notion of distance %
between any two states of the system, in order to compute the involved function scores $\{f(\Delta_{i,j})\}$. When no such distances %
are provided by the problem setting (see Limitation~\hyperref[phantom:lim2]{2}, second point), a suitable \mbox{(quasi-)metric} can be derived from the transition dynamics of the system, which turns out to be %
the canonical choice %
for the \ddBNIRLS\ model.
Consider the Markov chain governing the state %
process $\{s_{t=n}\}_{n=1}^\infty$ of an agent for some specific policy~$\pi$. For any ordered pair of states $(i,j)$, the chain naturally induces a value 
$\hitting^\pi_{i\rightarrow j}$, called a \textit{hitting time} \citep{taylor2014introduction,tewari2008optimistic}, which represents the expected number of steps required until the state process, initialized at $i$, eventually reaches state $j$ for the first time,
\begin{equation*}
\hitting^\pi_{i\rightarrow j} \defeq \E \big[ \min \{n\in\mathbb{N} : s_{t=n} = j \} \given s_0 = i, \pi \big] \eqpunkt
\end{equation*}
In the context of our subgoal problem, the natural quasi-metric to measure the directed distance between two states~$i$ and $j$ is thus given by the time it takes to reach the goal state $j$ from the starting state $i$ under the corresponding optimal subgoal policy $\pi_j (s) = \argmax_{a\in\mathcal{A}} Q^*(s,a \given j)$, \ie, $\Delta_{i,j} \defeq \hitting^{\pi_j}_{i\rightarrow j}$.
For \ddBNIRLS\  (as well as for the waypoint method in BNIRL), %
this choice is particularly appealing since the subgoal policies $\{\pi_j\}$ are %
already available within the inference procedure after the state-action values have been computed for the likelihood model (more on this in Section~\ref{sec:complexity}). The corresponding distances $\{\Delta_{i,j}\}$ can be obtained efficiently in a single policy evaluation step since $\Delta_{i,j}$ corresponds to the optimal (negative) expected return at the starting state~$i$ %
for the special setting where 
the respective target state $j$ is made absorbing with zero reward while all other states are assigned a reward of~$-1$.\looseness-1

\subsubsection{Choice of the Score Function}
From Equation~\eqref{eq:ddCRPprior} it is evident that the ddCRP model %
favors partitioning structures %
that result from the connection of nearby states. In the context of the subgoal problem, this property translates to the prior assumption that, most likely, each subgoal is approached by the expert from only one specific localized region in the system state space. While this assumption may be reasonable for some tasks, other tasks require that certain target states be approached %
more than one time, from different regions in the system state space. In such cases, it is beneficial if the model can reuse the same subgoal in various contexts, in order to obtain a more efficient task encoding %
(Figure~\ref{fig:globalVSlocal}). 

From a mathematical point of view, the prerequisite for learning such encodings is that the score function $f$ does not shrink to zero at large distance values, so that there remains a non-zero probability of connecting states %
that are far apart from each other. This can be achieved, for example, by representing %
$f$ as a convex combination of a monotone decreasing zero-approaching function $\bar{f}:[0,\infty)\rightarrow[0,\infty)$  and some constant offset~$\kappa\in(0,1]$,
\begin{equation*}
	f(\Delta) = (1-\kappa)\bar{f}(\Delta) + \kappa\eqkomma
\end{equation*}
where $\bar{f}$ is chosen, \eg, as a radial basis function \citep{sosic2018pami}. Note that, in order to implement a desired degree of locality in the model, the scale of the decay function $f$ (or $\bar{f}$, respectively) can be further calibrated %
based on the quantiles of 
the distribution of the given distances~$\{\Delta_{i,j}\}$.

\subsection{Modeling Time-Varying Intentions}
\label{sec:ddBNIRL-T}
For the case of changing expert intentions, we %
need to keep the flexibility of BNIRL to %
select a new subgoal at each decision instant, %
instead of restricting our policy %
to target a unique subgoal per state (Figure~\ref{fig:crossing}). Hence, we %
retain the basic BNIRL structure in this case
and %
define the subgoal allocation mechanism using a set of data-related indicator variables. However, in contrast to BNIRL, which makes 
no assumptions %
about the temporal relationship of the subgoals %
and %
thus allows %
arbitrary changes of the expert's intentions (Section~\ref{sec:BNIRLrefresher}), we design our joint distribution in a way that favors smooth action plans in which the expert persistently follows a subgoal over an extended period of time. Again, we can %
make use of the ddCRP properties to encode the underlying smoothness assumption, but this time %
using a %
score function defined on the %
\textit{temporal distance} %
between demonstration pairs. %
For this purpose, we require an additional piece of information, namely the unique timestamp of each demonstration example. %
Accordingly, we need to assume that our data set is of the form $\widetilde{\mathcal{D}}\defeq\{(s_d,a_d,t_d)\}_{d=1}^D$, where $t_d$ denotes the recording time of the $d$th demonstration pair~$(s_d,a_d)$.\footnote{Note that %
the timestamps $\{t_d\}$ are naturally available if the demonstrations are recorded in trajectory form, where we observe several consecutive state-action pairs. In fact, the temporal information of the data is also required for the waypoint method to work (Limitation~\hyperref[phantom:lim2]{2}), even though the authors %
of BNIRL formally assume to have access to the reduced data set of state-action pairs only.} %

The %
prior distribution over data partitionings can then be written as
$\p(\mathbf{\tilde{c}})=\prod_{d=1}^{D}\p(\tilde{c}_d)$,
\begin{equation*}
\p(\tilde{c}_d=d') \propto \begin{cases} \nu & \text{if } d=d'\eqkomma \\ f(\widetilde{\Delta}_{d,d'}) &  \text{otherwise}\eqkomma \end{cases}
\end{equation*}
where the indices $d,d'\in\{1,\ldots,D\}$ range over the size of the demonstration set. Herein, %
$\widetilde{\Delta}_{d,d'} \defeq \lvert t_d - t_{d'} \rvert$ denotes the temporal distance between the data points $d$ and $d'$. %
As before, we use the \mbox{``$\sim$''-notation} to distinguish the data-related partitioning variables $\mathbf{\tilde{c}}$,~$\mathbf{\tilde{z}}$ and distances $\{\widetilde{\Delta}_{d,d'}\}$ from their state-space-related counterparts $\mathbf{c}$, $\mathbf{z}$ and $\{{\Delta}_{i,j}\}$ used in %
\ddBNIRLS. %
Note, however, that the score function $f$ is independent of the underlying model type and may be chosen as described in Section~\ref{sec:canonicalStateMetric}, %
with a scale calibrated to the duration of the demonstrated task.

With that, we obtain our temporal subgoal model as
\begin{equation}
\p(\mathbf{a},\mathbf{\tilde{c}},\mathcal{G} \given \mathbf{s}) = \p(\mathbf{\tilde{c}}) \prod_{k=1}^{\infty}\p_g(g_k \given \mathbf{s}) \prod_{d=1}^D \softmax^\bullet(a_d \given s_d, g_{\mathbf{\tilde{z}}(\mathbf{\tilde{c}})|_{d}}) \eqkomma \hspace{-0.5ex}
\label{eq:ddBNIRLTjoint}
\end{equation}
where $\mathbf{\tilde{z}}(\mathbf{\tilde{c}})|_{d}$ refers to the subgoal label of the $d$th demonstration pair induced by the given assignment $\mathbf{\tilde{c}}$. Analogous to our spatial subgoal model, we refer to this model as \ddBNIRLT. The structural differences between all models can be seen from Figure~\ref{fig:graphicalModels}.

\subsubsection{Relationship to BNIRL}
\label{sec:relationToBNIRL}
Since the distance-dependent CRP contains the classical CRP as a special case for a specific choice of distance metric and score function \citep{blei2011distance}, the \ddBNIRLT\  model can be considered a strict generalization of the original BNIRL framework %
(neglecting the likelihood normalization in Section~\ref{sec:actionLikelihood}). In the same way, \ddBNIRLS\ generalizes the intermediate model presented in Section~\ref{sec:ddBNIRL-S} (Figure~\ref{fig:graphicalModels}). However, although the BNIRL model can be recovered from ddBNIRL, it is important to note that the sampling mechanisms of both frameworks are fundamentally different. Whereas in BNIRL the subgoal assignments are sampled %
directly, the clustering structure in ddBNIRL is defined \textit{implicitly} via the assignment variables $\mathbf{c}$ and $\mathbf{\tilde{c}}$, respectively. As explained by \cite{blei2011distance}, this has the effect that the Markov chain governing the Gibbs sampler mixes significantly faster because %
several cluster assignments %
can be altered in a single step, %
which effectively realizes a blocked Gibbs %
sampler %
\citep{roberts1997updating}. %

\subsection{Static versus Dynamic Subgoal Allocation}
\label{sec:staticVSdynamic}
With the model structures described in Sections~\ref{sec:ddBNIRL-S} and \ref{sec:ddBNIRL-T}, we have presented two alternative views on the subgoal problem. Naturally, the question arises which of the two approaches %
is better suited for a particular application scenario. 
As explained in the previous paragraphs, the main difference between the two models lies in their structure, \ie, in the way subgoals are allocated. While \ddBNIRLS\ relies on a static assignment mechanism that consistently links the individual states of a system to their corresponding subgoals, \ddBNIRLT\ allocates its subgoals per demonstration pair. The latter means that different state-action pairs observed at the \textit{same} state can be explained %
using \textit{different} intentional %
settings (Figure~\ref{fig:crossing}). 
To answer the above question, we hence need to ask %
under which conditions an observed decision-making process can be described via a static assignment rule that \textit{uniquely} characterizes each system state, %
and in which situations %
we require a more flexible model that allows to take into account additional %
side information. %

From decision-making theory, we know that the optimal solutions for time-invariant MDPs can be formulated as a deterministic \textit{time-invariant} Markov policies \citep{puterman1994}, %
the class of which is fully covered by the static \ddBNIRLS\ framework.\footnote{While we omit a rigorous proof here, this can be seen intuitively by noticing that any state-to-action rule that is optimal for a given MDP reward function can be synthesized via \ddBNIRLS\ by %
assuming an individual subgoal for each state in the extreme case.} Therefore, assuming that the transition dynamics of our system are constant with respect to time and that the agent acts rationally while having complete knowledge of the environment, %
there exist only two plausible reasons why %
we would potentially observe the agent execute a time-variant policy:\looseness-1
\begin{enumerate}[topsep=-\parskip+1ex, itemsep=0ex, parsep=0mm]
\item[$\bullet$] either, the reward model of the agent changes over time,
\item[$\bullet$] or, the observed decision-making process %
is not Markovian with respect to the assumed state space model (\ie, the agent's decisions depend on additional context information that is not explicitly captured in our state representation).
\end{enumerate}
Accordingly, if we assume that the Markov property holds (meaning that the chosen state representation is sufficiently rich to capture the decision-making strategy of the agent), %
the only theoretical justification to prefer a dynamic subgoal model like \ddBNIRLT\ over a static one such as \ddBNIRLT\ would be if %
we assume that the intentions of the agent are truly 
time-dependent.

Practically speaking, 
however, there can %
be several reasons why a given state representation might not fulfill the Markov requirement. One obvious explanation would be that the actual state space of the demonstrator is not perfectly known. This situation occurs, for example, if not all state context %
available to the agent is observable by the modeler. Another potential situation is when the strategy of the agent %
depends on information that is %
independent of the system dynamics and hence deliberately excluded from the state variable~(\ie, parameters that are unaffected by the actions of the agent, such as the preselection of a specific high-level strategy). %
A generic framework for such settings is described by \citet{daniel2016hierarchical}, where the agent learns multiple sub-policies %
that are triggered depending on %
context information that is 
treated separately from the state.

To an external observer who is unaware of that context information, the resulting policy of the agent would potentially appear time-dependent, %
in which case the only chance to disentangle the individual sub-policies %
would be to resort to a dynamic subgoal encoding, such as provided by \ddBNIRLT. 
However, if the %
context is known (like the temporal information in Section~\ref{sec:ddBNIRL-T} as a particular example), %
both approaches can be used equivalently and %
will only differ in the resulting state representation. More specifically, %
we can either fall back on the static \ddBNIRLS\ model by augmenting the state variable with the context information accordingly, or we can resort to the dynamic subgoal allocation scheme of \ddBNIRLT, using a distance metric that %
accounts for %
the context.
Conversely, when considered in a purely time-invariant setting (where the context is described by some other known quantity), 
\ddBNIRLS\ and \ddBNIRLT\ can be regarded as two sides of the same coin, \ie, both can be used to describe the time-invariant policy of an observed demonstrator but they differ %
in the way the side information %
is represented.

\section{Prediction and Inference}
\label{sec:predAndInf}
Having introduced the ddBNIRL framework,
we now explain how it can be used to %
generalize a given
expert %
behavior. To this end, we first focus on the %
task of %
action prediction at a given query state, and then explain in a second step how to %
extract the necessary %
information from the %
demonstration data. Along the way, we also give insights into the implicit intentional model learned %
through the framework.

\vspace{0.5\baselineskip}
\noindent
\textbf{Note:} In order to keep the level of redundancy at a minimum, the following considerations are based on the %
\ddBNIRLS\  model. The %
results for \ddBNIRLT\  follow straightforwardly; the only %
change in the equations is the way the subgoals are referenced. To obtain the corresponding expressions, we simply replace the assignment variables $\mathbf{c}$ with $\mathbf{\tilde{c}}$ and change the cluster definition in Equation~\eqref{eq:clusterDefinition} to $\mathcal{C}_k\defeq\{d\in\{1,\ldots,D\}:\mathbf{\tilde{z}}(\mathbf{\tilde{c}})|_d=k\}$. Accordingly, all occurrences of $\mathbf{z}(\mathbf{c})\vert_{s^*}$ change to $\mathbf{\tilde{z}}(\mathbf{\tilde{c}})\vert_{d^*}$, $\mathbf{z}(\mathbf{c})\vert_{s_d}$ becomes $\mathbf{\tilde{z}}(\mathbf{\tilde{c}})\vert_{d}$, and $s_d\in\mathcal{C}_k$ is replaced with $d\in\mathcal{C}_k$.

\subsection{Action Prediction}
\label{sec:actionPrediction}
Similar to the work by \citet{abbeel2004apprenticeship}, we consider the task of predicting an action $a^*\in\mathcal{A}$ at some query state $s^*\in\mathcal{S}$ that is optimal with respect to the expert's \textit{unknown} reward model. However, %
in contrast to most existing IRL methods, %
our approach is not based on point estimates of the
expert's reward function but
takes into account the entire hypothesis space of %
reward models. %
This allows us to obtain the full posterior predictive policy from the expert data. %
Mathematically, %
the task %
is formulated as computing the predictive action distribution $\p(a^* \given s^*, \mathcal{D})$, which captures %
the full information about the expert behavior contained in the demonstration set $\mathcal{D}$. 
We start by expanding that distribution with the help of 
the latent state assignments~$\mathbf{c}$,
\begin{equation*}
	\p(a^* \given s^*,\mathcal{D}) = \sum_{\mathbf{c}\in\mathcal{S}^{|\mathcal{S}|}} \p(a^* \given s^*,\mathcal{D},\mathbf{c}) \p(\mathbf{c} \given \mathcal{D}) \eqpunkt
\end{equation*} 
The conditional distribution $\p(a^* \given s^*,\mathcal{D},\mathbf{c})$ %
can be expressed in terms of the %
posterior distribution of the subgoal targeted at the query state~$s^*$,
\begin{align*}
	\p(a^* \given s^*,\mathcal{D}) &= \sum_{\mathbf{c}\in\mathcal{S}^{|\mathcal{S}|}} \p(\mathbf{c} \given \mathcal{D}) \sum_{i\in\mathcal{S}}^{} \p(a^* \given s^*,\mathbf{c},g_{\mathbf{z}(\mathbf{c})\vert_{s^*}}=i) \p(g_{\mathbf{z}(\mathbf{c})\vert_{s^*}}=i \given \mathcal{D},\mathbf{c}) \eqkomma
\end{align*}
where we used the fact that the prediction $a^*$ is conditionally independent of the demonstration set $\mathcal{D}$ given the state partitioning structure and the corresponding subgoal assigned to $s^*$ (that is, given $\mathbf{c}$ and $g_{\mathbf{z}(\mathbf{c})\vert_{s^*}}$).
From the joint distribution in Equation~\eqref{eq:ddBNIRLSjoint}, it follows that
\begin{equation}
\p(g_k \given \mathcal{D},\mathbf{c}) = \frac{1}{Z_k(\mathcal{D},\mathbf{c})} \p_g(g_k \given \mathbf{s}) \!\!\! \prod_{d:\mathbf{z}(\mathbf{c})\vert_{s_d}=k} \!\!\! \softmax(a_d \given s_d,  g_k) \eqkomma %
\label{eq:subgoalPosterior}
\end{equation}
where $Z_k(\mathcal{D},\mathbf{c})$ is the corresponding normalizing constant,
\begin{equation}
Z_k(\mathcal{D},\mathbf{c}) \defeq \sum_{i\in\supp(p_g)} \p_g(g_k = i \given \mathbf{s}) \!\!\! \prod_{d:\mathbf{z}(\mathbf{c})\vert_{s_d}=k} \!\!\! \softmax(a_d \given s_d,  g_k = i) \eqpunkt
\label{eq:normalizingConstant}
\end{equation}
Using this relationship, we get
\begin{align*}
	\p(a^* \given s^*,\mathcal{D}) &= \sum_{\mathbf{c}\in\mathcal{S}^{|\mathcal{S}|}} \frac{1}{Z_k(\mathcal{D},\mathbf{c})} \p(\mathbf{c} \given \mathcal{D}) \hspace{-0.3cm} \sum_{i\in\supp(p_g)}^{} \hspace{-0.3cm} \p_g(g_{\mathbf{z}(\mathbf{c})\vert_{s^*}}=i  \given \mathbf{s}) \ \ldots
	\\ &\phantom{=}  \  \ldots \ \times \hspace{-0.75cm} \prod_{d:\mathbf{z}(\mathbf{c})\vert_{s_d}=\mathbf{z}(\mathbf{c})\vert_{s^*}} \hspace{-0.75cm} \softmax(a_d \given s_d, g_{\mathbf{z}(\mathbf{c})\vert_{s^*}}=i) \p(a^* \given s^*,\mathbf{c},g_{\mathbf{z}(\mathbf{c})\vert_{s^*}}=i) \eqpunkt
\end{align*}
In contrast to the summation over %
subgoal locations $i$, whose computational complexity is determined by the support of the subgoal prior distribution $\p_g$ and which grows at most linearly with the size of %
$\mathcal{S}$, the marginalization with respect to the indicator variables $\mathbf{c}$ involves the summation of $|\mathcal{S}|^{|\mathcal{S}|}$ terms and %
becomes quickly intractable even for small state spaces. %
Therefore, we %
approximate this operation via Monte Carlo integration, which yields %
\begin{align*}
\p(a^* \given s^*,\mathcal{D}) &\approx \frac{1}{N} \sum_{n=1}^{N} \sum_{i\in\supp(p_g)}^{} \hspace{-0.3cm} \p(g_{\mathbf{z}(\mathbf{c}^{\{n\}})\vert_{s^*}}=i \given \mathcal{D}, \mathbf{c}^{\{n\}})  \p(a^* \given s^*,\mathbf{c}^{\{n\}},g_{\mathbf{z}(\mathbf{c}^{\{n\}})\vert_{s^*}}=i) \eqkomma
\end{align*}
where $\mathbf{c}^{\{n\}} \sim \p(\mathbf{c} \given \mathcal{D})$. 
The final prediction step can then be performed, for example, via the {maximum a posteriori}~(MAP) policy estimate, %
\begin{equation}
	\hat{\pi}(s^*) \defeq \argmax_{a^*\in\mathcal{A}}\p(a^* \given s^*, \mathcal{D}) \eqpunkt
	\label{eq:MAPpolicy}
\end{equation}
The inference task, hence, reduces to the computation of 
the posterior samples~$\{\mathbf{c}^{\{n\}}\}$, %
which is described in the next section.

\subsection{Partition Inference}
\label{sec:partitionInference}
Based on the joint model in Equation~\eqref{eq:ddBNIRLSjoint},
we %
obtain the posterior distribution $\p(\mathbf{c} \given \mathcal{D})$ in factorized form as
\begin{align}
	\p(\mathbf{c} \given \mathcal{D}) &= \p(\mathbf{c}) \prod_{k=1}^\infty \; \sum_{g_k\in\supp(p_g)} \hspace{-0.3cm} \p_g(g_k \given \mathbf{s}) \prod_{d=1}^D \softmax(a_d \given s_d, g_{\mathbf{z}(\mathbf{c})\vert_{s_d}}) \nonumber
	\\ & = \p(\mathbf{c}) \prod_{k=1}^{|\mathbf{z}(\mathbf{c})|} \sum_{g_k\in\supp(p_g)} \hspace{-0.3cm} \p_g(g_k \given \mathbf{s}) \prod_{d:s_d\in\mathcal{C}_k} \softmax(a_d \given s_d, g_k) \eqkomma
	\label{eq:indicatorPosterior}
\end{align}
where $\mathcal{C}_k$ denotes the $k$th state cluster induced by the assignment $\mathbf{c}$, \begin{equation}
\mathcal{C}_k\defeq\{s\in\mathcal{S}:\mathbf{z}(\mathbf{c})|_s=k\} \eqkomma
\label{eq:clusterDefinition}
\end{equation}and $|\mathbf{z}(\mathbf{c})|$ is the total number of clusters defined by $\mathbf{c}$. %
As explained by~\citet{blei2011distance}, the indicator samples $\{\mathbf{c}^{\{n\}}\}$ can be efficiently generated using a fast-mixing Gibbs chain. %
Starting from a given ddCRP graph defined by the subset of indicators~$\mathbf{c}_{\without i}\defeq\{c_j\}\setminus c_i$, the insertion of an additional edge $c_i$ will result in one of three possible outcomes, as illustrated in Figure~\ref{fig:ddCRPgraph}: in the case of adding a self-loop ($c_i=i$), %
the underlying partitioning structure %
stays unaffected. Setting $c_i \neq i$ either leaves the structure unchanged (if the target state is already in the same cluster as state $i$) or creates a new link between two clusters. In the latter case, the involved clusters are merged, which corresponds to a merging of the associated sums in Equation~\eqref{eq:indicatorPosterior}. %
According to these three cases, the conditional distribution for the Gibbs procedure is obtained as
\begin{equation}
	\p(c_i=j \given \mathbf{c}_{\without i}, \mathcal{D}) \ \propto \
	\begin{cases}
		\nu  & \text{if } i=j \eqkomma\\
		f(d_{i,j}) & \text{if no clusters are merged} \eqkomma \\
		f(d_{i,j}) \frac{\mathcal{L}(\mathcal{C}_{z_i}\cup\,\mathcal{C}_{z_j})}{\vphantom{\displaystyle |}\mathcal{L}(\mathcal{C}_{z_i})\cdot\mathcal{L}(\mathcal{C}_{z_j})} & \text{if clusters } \mathcal{C}_{z_i} \text{ and } \mathcal{C}_{z_j} \text{ are merged} \eqpunkt
	\end{cases}
	\label{eq:ddCRPposterior}
\end{equation}
Herein, $\mathcal{L}(\mathcal{C})$ denotes the marginal action likelihood of all demonstrations %
accumulated in cluster $\mathcal{C}$,
\begin{equation}
\mathcal{L}(\mathcal{C}) = \displaystyle \sum_{g\in\supp(p_g)} \hspace{-0.3cm} \p_g(g  \given \mathbf{s}) \prod\limits_{d:s_d\in\mathcal{C}} \softmax(a_d \given s_d, g) \eqkomma
\label{eq:marginalActionLikelihood}
\end{equation}
which %
further %
represents the normalizing constant for the posterior distribution of the cluster subgoal (Equation~\ref{eq:normalizingConstant}).
Accordingly, the fraction in Equation~\eqref{eq:ddCRPposterior} can be interpreted as the likelihood ratio of the %
partitioning defined by $\mathbf{c}_{\without i}$ and the merged structure after inserting the new edge~$c_i$.

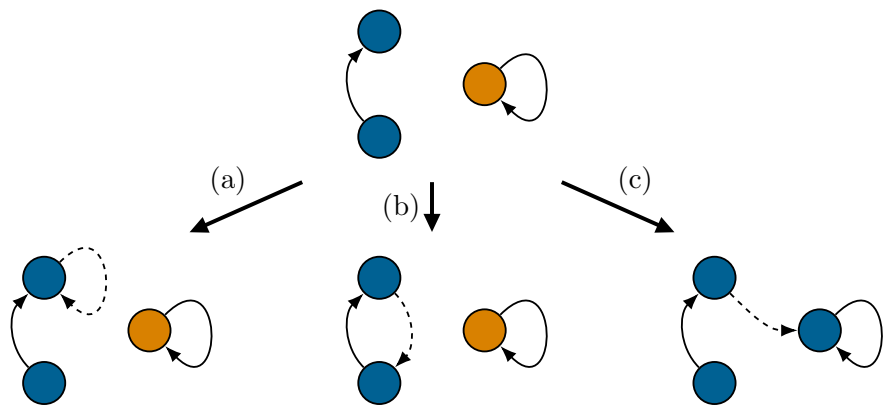
\begin{figure*}
	\centering
	\tikzstyle{node}=[circle, line width=1pt, draw=black!100, fill=black!5, minimum size=0.8cm]
	\begin{tikzpicture}[line width = 1pt, auto]
		\node (A) [] {	
			\scalebox{0.7}{
			\begin{tikzpicture}[line width=1pt]
			\node (a) [node, fill=col1] at (0,0) {};
			\node (b) [node, fill=col3] at (2,-1) {};
			\node (c) [node, fill=col1] at (0,-2) {};
			\draw [overlay, -{Latex[scale=1.2]}, out=135, in=225] (c) to (a);
			\draw [overlay, -{Latex[scale=1.2]}, loop right, out=45, in=315, looseness=7] (b) to (b);
			\end{tikzpicture}
			}
		};
		\node (B) [below = 1cm of A]{	
			\scalebox{0.7}{
				\begin{tikzpicture}[line width=1pt]
				\node (a) [node, fill=col1] at (0,0) {};
				\node (b) [node, fill=col3] at (2,-1) {};
				\node (c) [node, fill=col1] at (0,-2) {};
				\draw [overlay, -{Latex[scale=1.2]}, out=135, in=225] (c) to (a);
				\draw [overlay, -{Latex[scale=1.2]}, out=315, in=45, dashed] (a) to (c);
				\draw [overlay, -{Latex[scale=1.2]}, loop right, out=45, in=315, looseness=7] (b) to (b);
				\end{tikzpicture}
			}
		};
		\node (C) [left = 2cm of B] {	
			\scalebox{0.7}{
				\begin{tikzpicture}[line width=1pt]
				\node (a) [node, fill=col1] at (0,0) {};
				\node (b) [node, fill=col3] at (2,-1) {};
				\node (c) [node, fill=col1] at (0,-2) {};
				\draw [overlay, -{Latex[scale=1.2]}, out=135, in=225] (c) to (a);
				\draw [overlay, -{Latex[scale=1.2]}, loop right, out=45, in=315, looseness=7, dashed] (a) to (a);
				\draw [overlay, -{Latex[scale=1.2]}, loop right, out=45, in=315, looseness=7] (b) to (b);
				\end{tikzpicture}
			}
		};
		\node (D) [right = 2cm of B]{	
			\scalebox{0.7}{
				\begin{tikzpicture}[line width=1pt]
				\node (a) [node, fill=col1] at (0,0) {};
				\node (b) [node, fill=col1] at (2,-1) {};
				\node (c) [node, fill=col1] at (0,-2) {};
				\draw [overlay, -{Latex[scale=1.2]}, out=135, in=225] (c) to (a);
				\draw [overlay, -{Latex[scale=1.2]}, out=315, in=180, dashed] (a) to (b);
				\draw [overlay, -{Latex[scale=1.2]}, loop right, out=45, in=315, looseness=7] (b) to (b);
				\end{tikzpicture}
			}
		};
		
		\draw [line width=1.5pt, -{Triangle[width=6pt]}] ([yshift=-1ex]A.south) to node [swap] {(b)} ([yshift=1ex]B.north);
		\draw [line width=1.5pt, -{Triangle[width=6pt]}] ([xshift=-3ex, yshift=-1ex]A.south west) to node [swap, xshift=1ex] {(a)} ([yshift=1ex]C.north east);
		\draw [line width=1.5pt, -{Triangle[width=6pt]}] ([xshift=3ex, yshift=-1ex]A.south east) to node [xshift=-1ex] {(c)} ([yshift=1ex]D.north west);
	\end{tikzpicture}
\caption{Insertion of an edge (dashed arrow) to the ddCRP graph. Colors indicate the cluster memberships of the nodes, which are defined implicitly via the connected components of the graph. (a)~ Adding a self-loop or (b)~inserting an edge between two already connected nodes does not alter the clustering structure. (c)~Adding an edge between two unconnected components merges the associated %
	clusters.}
\label{fig:ddCRPgraph}
\end{figure*}

\subsection{Subgoal Inference}%
It is important to note that the inference method described in %
Sections~\ref{sec:actionPrediction} and \ref{sec:partitionInference} 
is based on a collapsed sampling scheme where all subgoals of our model are marginalized out. %
In fact, the ddBNIRL framework %
differs from BNIRL and other IRL methods %
in that the reward model of the expert is never made explicit for predicting new actions. Nonetheless, if desired (\eg, for the purpose of %
analyzing the expert's intentions), an estimate of the %
subgoal locations can be obtained in a post-hoc fashion from the subgoal 
posterior distribution in Equation~\eqref{eq:subgoalPosterior} for any given assignment $\mathbf{c}$. Examples are provided in Figure~\ref{fig:policySynthesis}.
\subsection{Action Inference}
\label{sec:actionInference}

As mentioned in Section~\ref{sec:limitations}, the original BNIRL algorithm requires complete knowledge of the expert's action record~$\mathbf{a}$, which limits the range of potential application scenarios. For this reason, we generalize our inference scheme to the case where we have access to  %
state information only, provided in the form of an alternative data set
$\mathcal{\overline{D}} \defeq \{ (s_d,\bar{s}_d)\}_{d=1}^D$, %
where $\bar{s}_d$ refers to the state visited by the expert immediately after $s_d$. In this setting, inference can be performed by extending the Gibbs procedure with an additional collapsed sampling %
stage,
\begin{align}
	\p(a_d \given \mathbf{a}_{\without d}, \overline{\mathcal{D}}, \mathbf{c}) &\propto T(\bar{s}_d \given s_d, a_d) \!\!\! \sum_{i\in\supp(p_g)} \!\!\! \p_g(g_{\mathbf{z}(\mathbf{c})|_{s_d}}=i)  \hspace{-0.4cm} \prod_{d': \mathbf{z}(\mathbf{c})|_{s_{d'}} = \mathbf{z}(\mathbf{c})|_{s_{d}}} \hspace{-4ex} \softmax(a_{d'} \given s_{d'}, g_{\mathbf{z}(\mathbf{c})|_{s_d}}=i) \eqkomma
	\label{eq:actionSampling}
\end{align}
which, for a fixed assignment $\mathbf{c}$, recovers an estimate of the latent action %
set $\mathbf{a}$ from the observed state transitions. %
Note that knowledge of the transition model $T$ is required for this step as it provides the necessary link between the %
expert's actions and the observed successor states. The same extension is possible for the \ddBNIRLT\ model, provided that the transition timestamps $\{t_d\}$ are known (Section~\ref{sec:ddBNIRL-T}).

\subsection{Computational Complexity}
\label{sec:complexity}
As a last point in this section, we would like to discuss the computational complexity of our approach. %
For this purpose, here a quick reminder on the used notation: we write $|\mathcal{S}|$ and $|\mathcal{A}|$ for the cardinalities of the state and action space, respectively, and use the letter $D$ for the size of the demonstration set. Further, we write $\mathcal{C}_k$ to refer to the $k$th state cluster (ddBNIRL-S) or data cluster (ddBNIRL-T). In the subsequent paragraphs, we additionally use the notation $N_{D}(\mathcal{C}_k)$ to access the number of demonstration data points associated with cluster~$\mathcal{C}_k$, $K$ to indicate the number of clusters in the current iteration, $N_g\defeq|\supp(p_g)|$ as a shorthand for the size of the support of the subgoal prior distribution, and $N_c$ for the number of indicator variables, \ie, $N_c\defeq|S|$ for \ddBNIRLS\  and $N_c\defeq D$ for ddBNIRL-T.\\

\noindent \textit{Initialization Phase:}
Common to all discussed models (including BNIRL) is that they depend on a preceding planning phase, %
where we compute, potentially in parallel, the state-action value functions~(Equation~\ref{eq:Qfunction}) for all $N_g$ considered subgoals, %
which allows us to construct the subgoal likelihood model (Equation~\ref{eq:softmaxPolicy} or \ref{eq:normalizedLikelihood}). The overall computational complexity of this procedure is of order $\mathcal{O}(N_g\mathbb{C}_\text{MDP}(|\mathcal{S}|,|\mathcal{A}|))$, where $\mathbb{C}_\text{MDP}(x,y)$ denotes the complexity of the used planning routine to (approximately) solve an MDP of size $x$ with a total number of $y$ actions. Using a value iteration algorithm, for instance, this can be achieved %
in $\mathcal{O}(\mathbb{C}_\text{MDP}(|\mathcal{S}|,|\mathcal{A}|))=\mathcal{O}(|\mathcal{S}|^2|\mathcal{A}|)$ steps \citep{littman1995complexity}.
If we assume that the expert reaches all subgoals during the demonstration phase \citep{michini2012bayesian}, we can restrict the support of the subgoal prior to the visited states, so that $N_g$ is upper-bounded by $\min(|\mathcal{S}|,D)$. Note that there %
exist approximation techniques that make the computation tractable in large/continuous state spaces (see discussion in Section~\ref{sec:conclusion}).

Before we start the sampling procedure, we %
compute all single-cluster likelihoods $\{\mathcal{L}(\mathcal{C}_k)\}$ and pairwise likelihoods $\{\mathcal{L}(\mathcal{C}_k\cup\mathcal{C}_{k'})\}$ according to Equation~\eqref{eq:marginalActionLikelihood}, based on some (random) initial cluster structure. 
The likelihood computation for the $k$th cluster $\mathcal{C}_k$ involves a product over $N_{D}(\mathcal{C}_k)$ data points, which needs to be calculated for each of the $N_g$ subgoals before taking their weighted average. This step has to be executed (potentially in parallel) for all clusters. However, because each %
demonstration is associated with exactly one cluster (either directly as in \ddBNIRLT\  or via the corresponding state variable as in ddBNIRL-S) and hence $\sum_kN_{D}(\mathcal{C}_k)=D$, the total %
complexity for computing all single-cluster likelihoods is of order $\mathcal{O}(N_gD)$, irrespective of the actual cluster structure. A similar line of reasoning applies to the computation of the pairwise likelihoods, yielding the same complexity order. %
Yet, for the latter we need to consider all possible cluster combinations.
Assuming an initial number of $K$ clusters, there are in total $K(K-1)/2$ pairwise likelihoods to be computed. Hence, the overall complexity of the initialization phase can be summarized as $\mathcal{O}(N_gDK^2)$.\\

\noindent \textit{Partition Inference:}
For the partition inference, the bulk of the computation lies in the repeated construction of the likelihood term in Equation~\eqref{eq:ddCRPposterior}, which needs to be updated whenever the cluster structure changes.
To analyze the complexity, we consider the sampling step of an individual assignment variable $c_i$ (or likewise $\tilde{c}_i$). In the worst case, removing the edge that belongs to $c_i$ from the ddCRP graph divides the associated cluster into two parts (Figure~\ref{fig:ddCRPgraph}), so that two new single-cluster likelihoods need to be computed. With the %
upper bound $D$ on the number of data points associated with the cluster before the division, this operation is of worst-case complexity $\mathcal{O}(N_gD)$ (see initialization phase). Irrespective of whether a division occurs, we then need to compute all pairwise cluster likelihoods %
with the (new) cluster connected via $c_i$. For a total of $K-1$ possible %
choices, this is done in $\mathcal{O}(N_gDK)$ operations (see initialization phase). After assigning the indicator, we move on to the next variable where the process repeats. If we assume, for simplicity, that the number of clusters stays constant during a full Gibbs cycle, the total complexity of updating all cluster assignments is hence of order $\mathcal{O}(N_gDKN_c)$. A (pessimistic) upper bound for the %
general case can be obtained by assuming that each data point defines its own cluster, in which case the complexity increases to $\mathcal{O}(N_gD^2N_c)$. Note that, in order to identify the new %
cluster structure after changing an assignment, we additionally need to track the connected components of the underlying ddCRP graph. As explained by \citet{kapron2013dynamic}, this can be done in polylogarithmic worst-case time. \\

\noindent\textit{Action Sampling:}
In order compute the conditional probability distribution of a particular action $a_d$, we need to evaluate a product involving all actions that belong to the same cluster as action $a_d$ (Equation~\ref{eq:actionSampling}). First, we can compute the product over all actions except $a_d$ itself, where the number of involved terms is again upper-bounded by $D$. %
Appending the term that belongs to $a_d$ for all possible action choices requires another $|\mathcal{A}|$ operations. These two steps need to be repeated for all possible subgoals, yielding an upper bound on the complexity of order $\mathcal{O}(N_g(D+|\mathcal{A}|))$. For a full Gibbs cycle, which involves sampling all $D$ action variables, the overall (worst-case) complexity is hence of order $\mathcal{O}(N_g(D+|\mathcal{A}|)D)$.

\section{Experimental Results}
\label{sec:results}
In this section, we present experimental results for our framework. The evaluation is separated into four parts: 
\begin{enumerate}[topsep=-\parskip+1ex, itemsep=0ex, parsep=0mm]
\item[\first] a proof of concept and conceptual comparison to BNIRL (Section~\ref{sec:proofConcept}), 
\item[\second] a performance comparison with related algorithms (Section~\ref{sec:randomMDP}), 
\item[\third] a real data experiment conducted on a KUKA robot (Section~\ref{sec:robotExperiment}) and 
\item[\fourth] an active learning task (Section~\ref{sec:activeLearning}).
\end{enumerate}

\begin{figure}[p]
	\newlength{\figsize}
	\newlength{\hsep}
	\newlength{\vsep}
	\setlength{\figsize}{3.4cm}
	\setlength{\hsep}{0.33cm}
	\setlength{\vsep}{0.75cm}
	
	\centering
	\begin{tikzpicture}[inner sep=0mm, line width=1pt, -latex]
	
	\node (post1) [draw] {\includegraphics[width=\figsize]{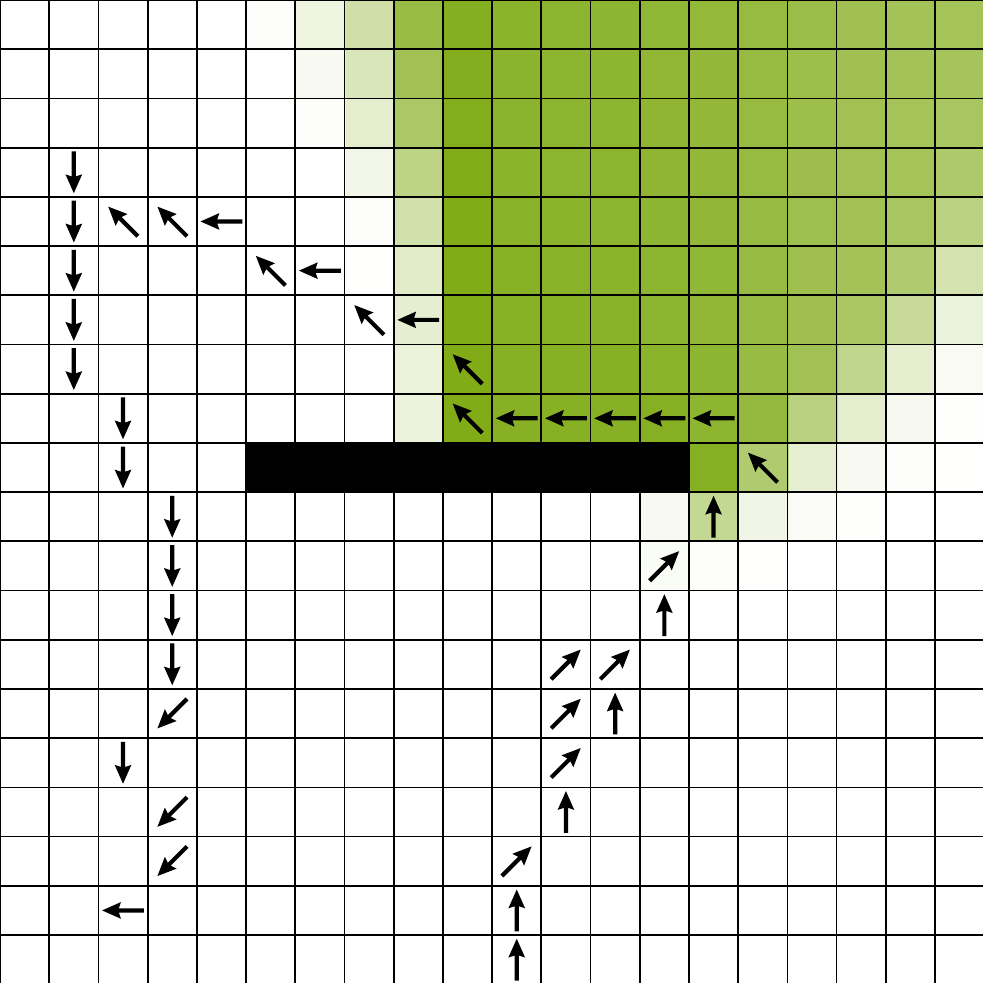}};
	\node (post2) [draw, right=\hsep of post1] {\includegraphics[width=\figsize]{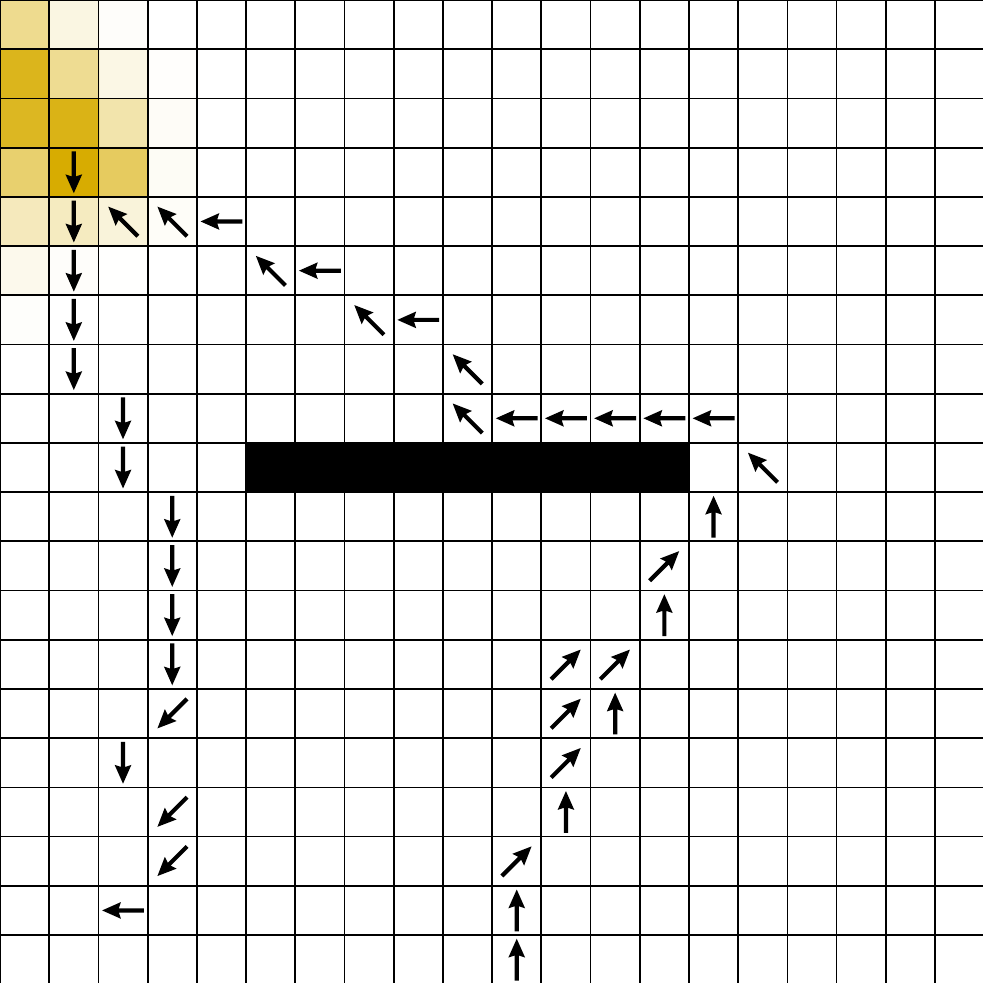}};
	\node (post3) [draw, right=\hsep of post2] {\includegraphics[width=\figsize]{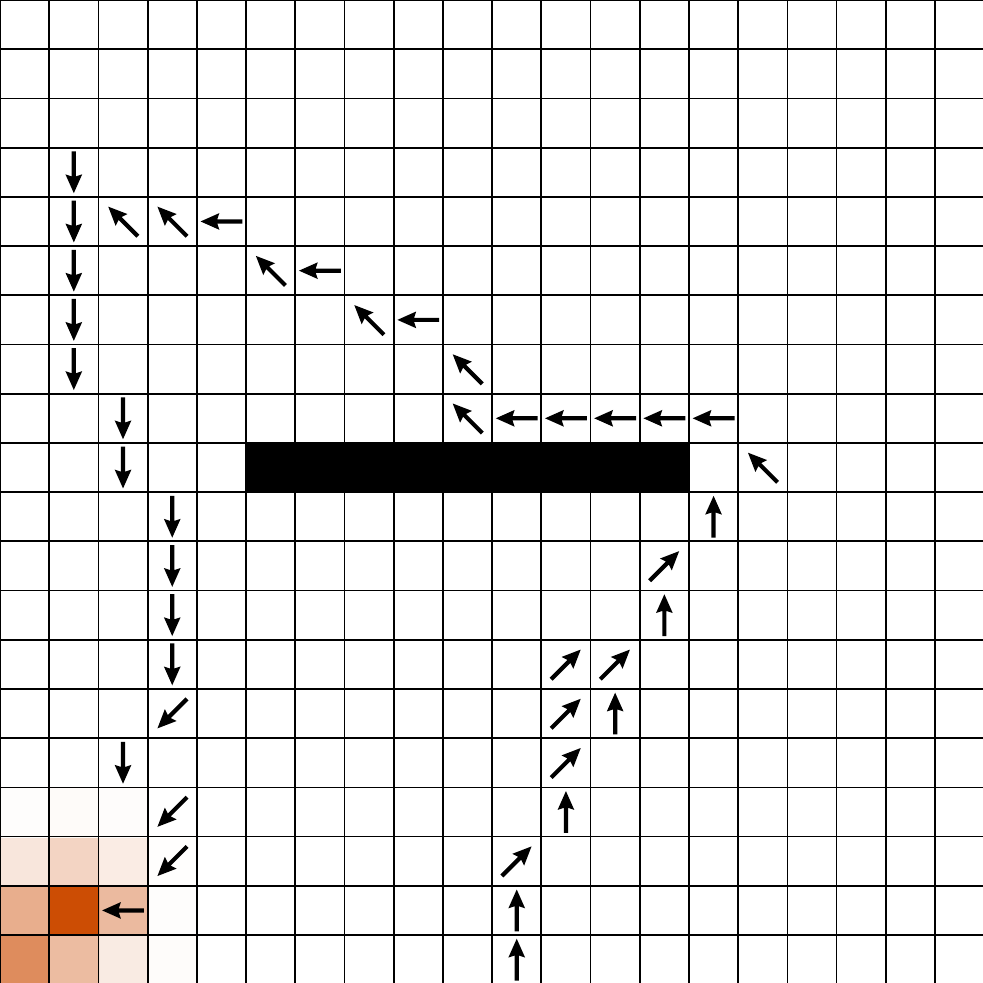}};
	\node (plate) [draw, dotted, rounded corners=5mm, inner sep = 0.3cm, fit=(post1) (post2) (post3)] {};
	\node (dummy) [left=\hsep of post1, minimum size=\figsize] {};
	\node (posteriors) [inner sep=0.1cm, left=-2.8cm of dummy] {\scriptsize subgoal posteriors};
	\node (partitionings) [inner sep=0.1cm, above=0.5cm of posteriors] {\scriptsize sample partitionings};
	\node (policies) [inner sep=0.1cm, below=0.5cm of posteriors] {\scriptsize predictive policies};
	\draw (partitionings.north) -- +(0,0.5cm);
	\draw (posteriors.east) -- +(0.5cm,0);
	\draw (policies.south) -- +(0,-0.5cm);
	
	\node (spatial) [draw, above=\vsep of post1, label={[shift={(0,1ex)}] ddBNIRL-S}] {\includegraphics[width=\figsize]{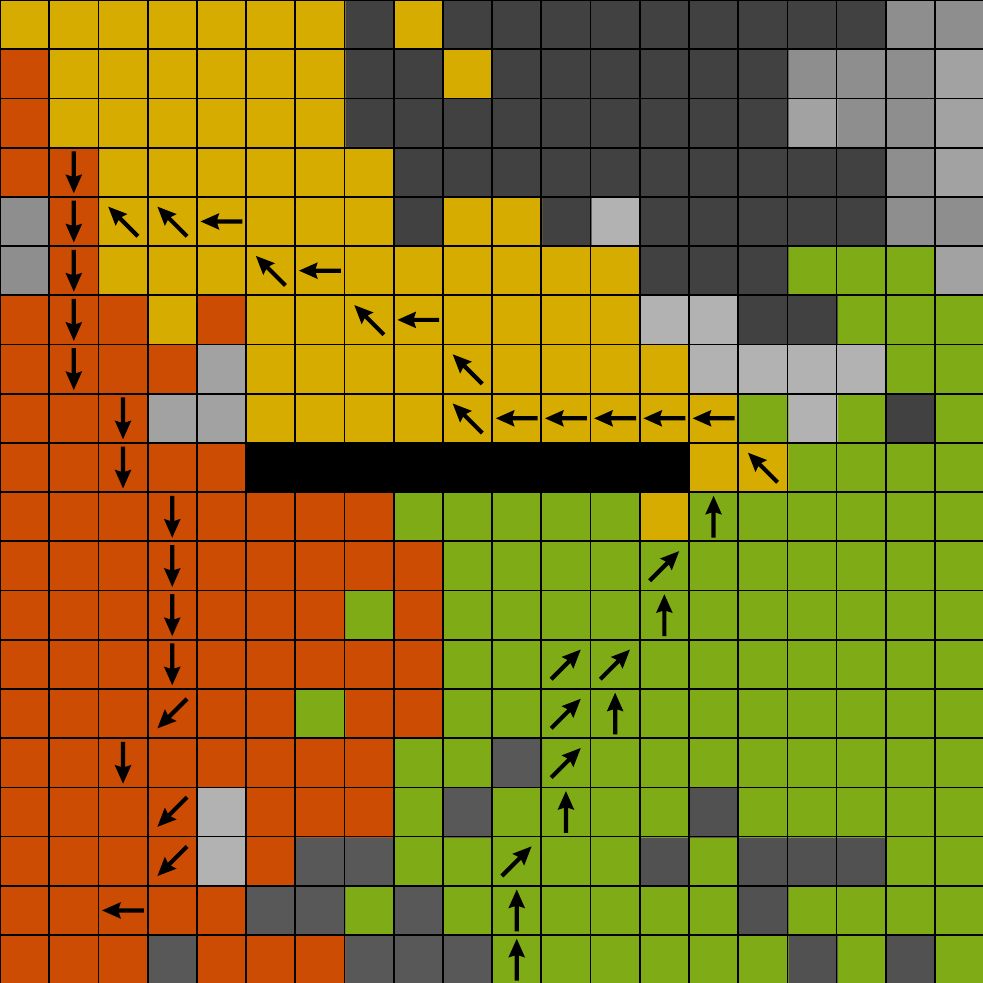}};
	\node (BNIRL) [draw, left=\hsep of spatial, label={[shift={(0,1ex)}] BNIRL}] {\includegraphics[width=\figsize]{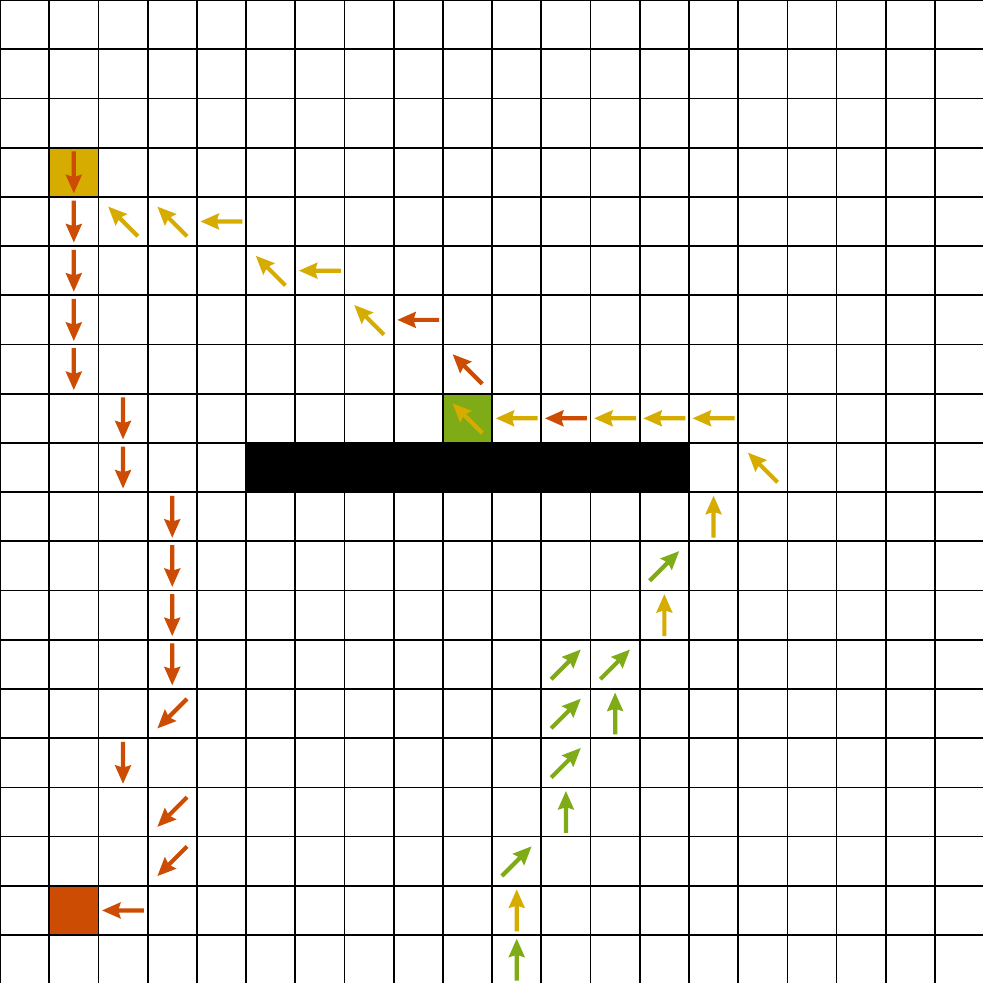}};
	\node (temporal) [draw, above=\vsep of post3, label={[shift={(0,1ex)}] ddBNIRL-T}] {\includegraphics[width=\figsize]{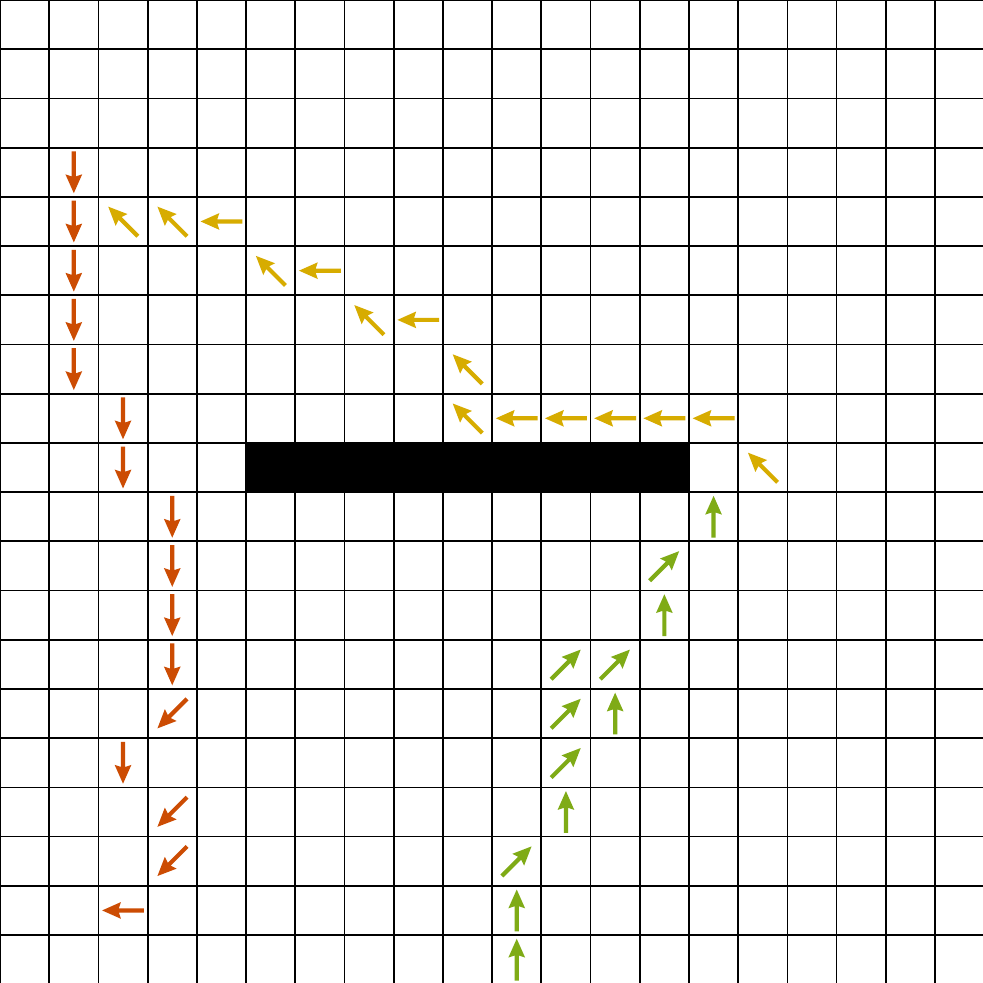}};
	
	\node (t1) [draw, below=\vsep of post1, label={[shift={(0,-1ex)}, font=\scriptsize]below:{phase 1: green subgoal}}] {\includegraphics[width=\figsize]{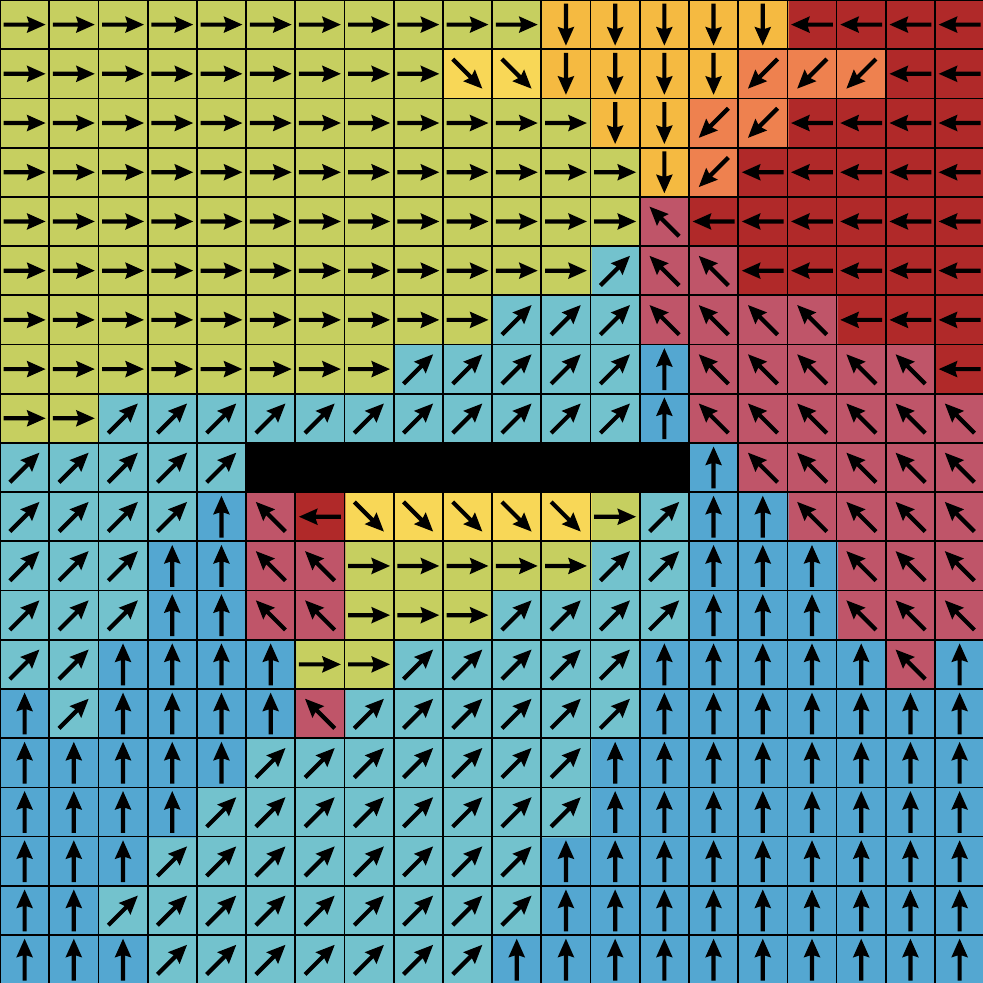}};
	\node (t2) [draw, right=\hsep of t1, label={[shift={(0,-1ex)}, font=\scriptsize]below:{ phase 2: yellow subgoal}}] {\includegraphics[width=\figsize]{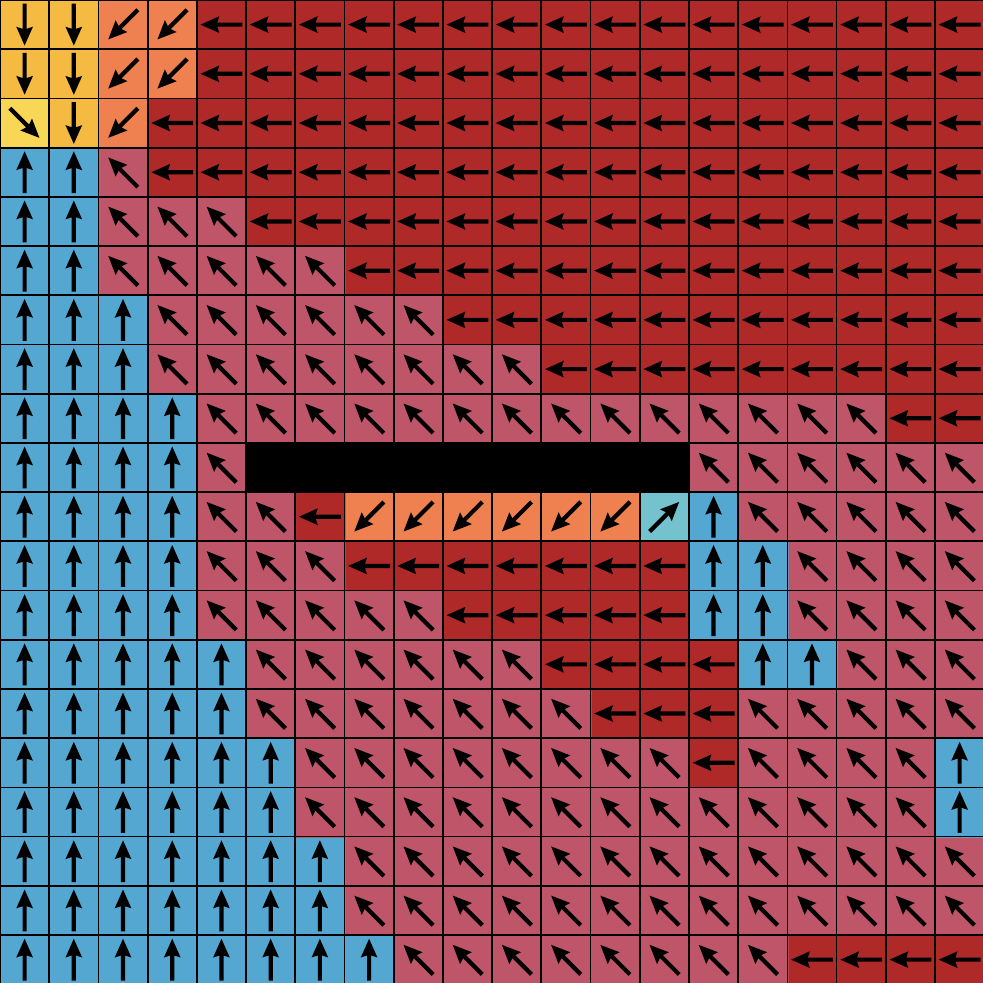}};
	\node (t3) [draw, right=\hsep of t2, label={[shift={(0,-1ex)},font=\scriptsize]below:{phase 3: red subgoal}}] {\includegraphics[width=\figsize]{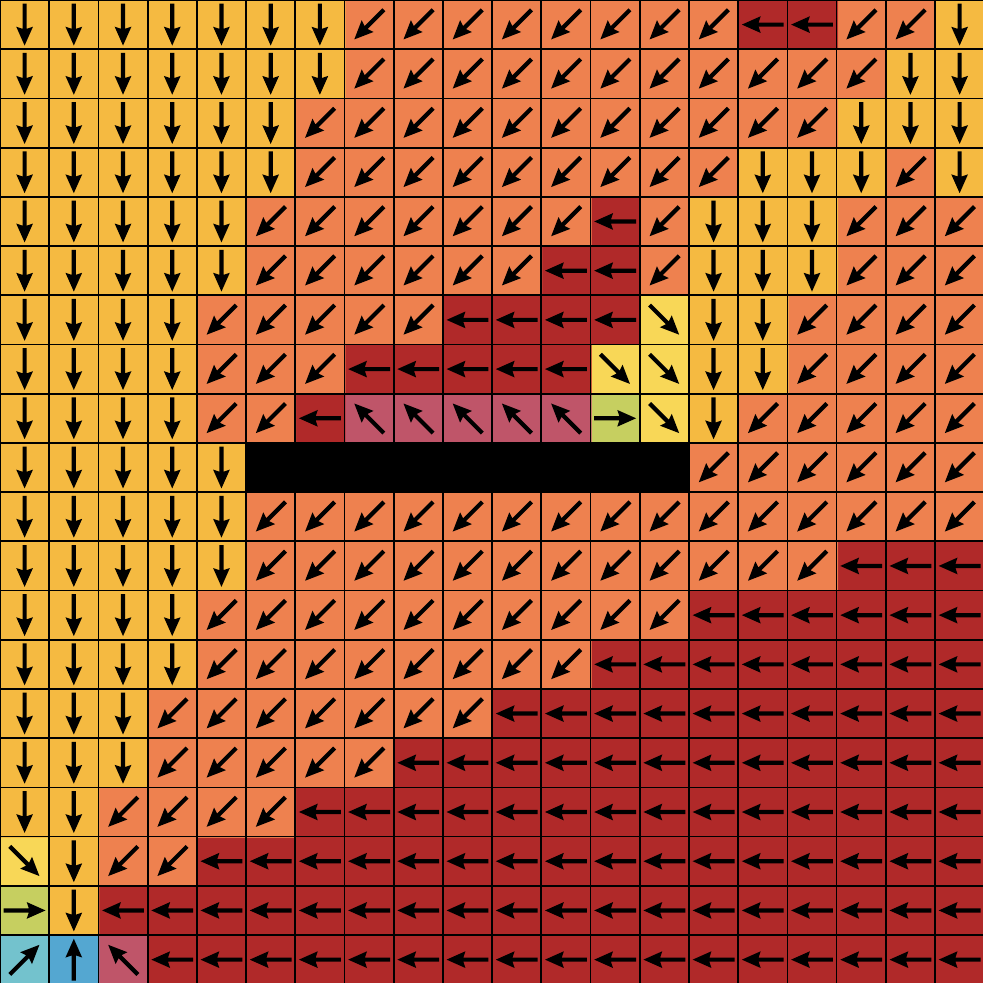}};
	\node (single) [draw, left=\hsep of t1, label={[shift={(0,-1ex)}, align=center, font=\scriptsize]below:{spatial policy\\(all subgoals combined)}}] {\includegraphics[width=\figsize]{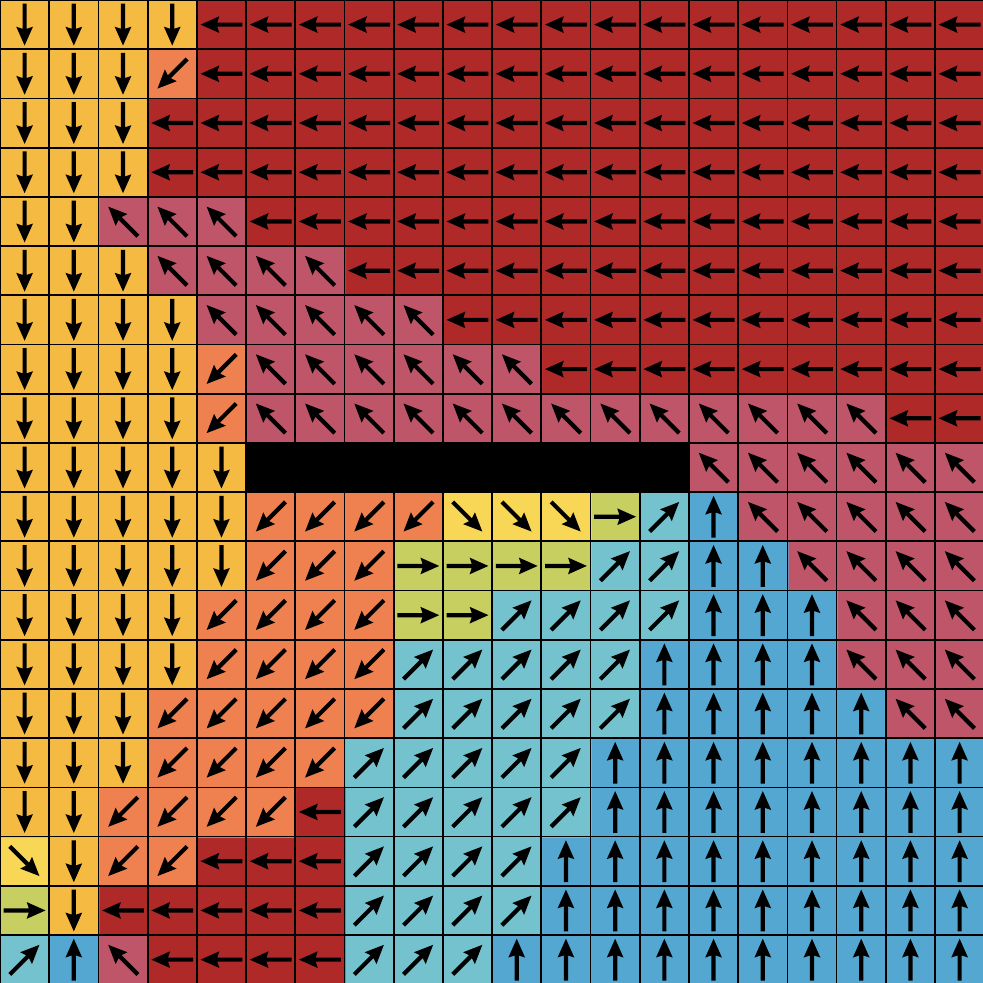}};

	\draw [-, line width=1.5pt, decoration={brace, mirror, raise=0.5cm, amplitude=10}, decorate] (t1.south west) to node [font=\scriptsize, below, yshift=-1cm] {temporal policy} (t3.south east) ;
	
	\draw [bend left] (spatial) to (plate);
	\draw [bend right] (temporal) to (plate);
	\draw (post1) to (t1);
	\draw (post2) to (t2);
	\draw (post3) to (t3);
	\draw [dotted, out=200, in=70] ([yshift=-0.75cm]plate.west) to ([xshift=0.5cm]single.north);
	\end{tikzpicture}
	\caption{Results on the BNIRL data set \citep{michini2012bayesian}. \textbf{Top row:} demonstration data and sample partitionings generated by the different inference algorithms. \textbf{Center row:} subgoal posterior distributions associated with the partitions found by \ddBNIRLS\  and \ddBNIRLT. For a clearer overview, the corresponding BNIRL distributions are omitted (see Figure~\ref{fig:Qnormalization} for a comparison). %
		\textbf{Bottom row:} time-invariant \ddBNIRLS\  policy model synthesized from all three detected subgoals (left) and %
		temporal %
		 phases identified by \ddBNIRLT~(right). The background colors have no particular meaning and were added only to highlight the structures of the policies. Because of its missing generalization mechanism, BNIRL does not itself provide a reasonable predictive policy model (Limitation~\hyperref[phantom:lim1]{1}).}
	\label{fig:policySynthesis}
\end{figure}

\subsection{Proof of Concept}
\label{sec:proofConcept}
To illustrate the conceptual differences to BNIRL and provide %
additional insights into the %
latent intentional model learned through our framework, we %
begin with the motivating data set from Figure~\ref{subfig:JMLR:illustrationSubgoal}, which had been originally %
presented %
by \citet{michini2012bayesian}. The considered system environment, defined by $|\mathcal{S}|=20\times20=400$ grid positions, is again shown in the top left corner of 
Figure~\ref{fig:policySynthesis}. Nine of those positions correspond to inaccessible wall states, marked by the horizontal black bar. %
At %
 the valid states, the expert can choose from an action set comprising a total of eight actions, each %
initiating
a noisy state transition %
toward %
one of the \mbox{(inter-)cardinal} directions. The observed state-action pairs are depicted in the form of arrows, whose colors indicate the MAP partitioning %
learned through BNIRL. %
The remaining subfigures show the results of the ddBNIRL framework, which were 
obtained from a posterior sample returned by 
the respective algorithm (\mbox{ddBNIRL-S/T}) at a low temperature in a simulated annealing schedule~\citep{kirkpatrick1983optimization}.

Comparing %
the obtained results, we observe the following main differences to the original approach: 
\begin{enumerate}[topsep=-\parskip+1.5ex, itemsep=1ex, parsep=0mm]
\item[\first] Unlike BNIRL, the proposed framework allows to choose between a spatial and a temporal encoding of the observed task, providing the possibility to 
account explicitly
for the type of demonstrated behavior %
(static/dynamic).
As explained in Section~\ref{sec:relationToBNIRL}, the context-unaware (yet in principle %
dynamic) vanilla BNIRL inference scheme is still included as a special case. 
\item[\second] Exploiting the spatial/temporal context of the data, the \mbox{ddBNIRL} solution is inherently robust to demonstration noise, %
giving rise to notably smoother partitioning structures (top row). This effect is particularly pronounced in the case of real data, as we shall see later in Section~\ref{sec:KUKAtemporal}.
\item[\third] For each state partition or trajectory segment, we obtain an implicit representation of the associated subgoal in the form of a posterior distribution, without the need of assigning point estimates (center row). %
It is striking that the posterior distribution corresponding to the green state partition has a comparably large spread on the upper side of the wall. %
This can be explained intuitively by the fact that any subgoal located in this high posterior region could have potentially caused the %
green state sequence,
which %
circumvents the wall from the right. At the same time, the %
green area %
of high posterior values
exhibits a sharp boundary on the left side since a subgoal located %
in the upper left region of the state space would have more likely resulted in a trajectory %
approaching from the left. %
\item[\fourth] In contrast to BNIRL, which has no built-in generalization mechanism (Limitation~\hyperref[phantom:lim1]{1}), our method returns %
a predictive policy model comprising the full posterior action information at all states. Note that %
we only show the resulting MAP policy estimates here (bottom row), computed according to Equation~\eqref{eq:MAPpolicy}. Additional results %
concerning the posterior uncertainty are provided in Sections~\ref{sec:robotExperiment} and~\ref{sec:activeLearning}.
\end{enumerate}
The example %
illustrates how the synthesis of the predictive policy differs between \ddBNIRLS\ (bottom left) and \ddBNIRLT\  (bottom row, rightmost three subfigures). While \ddBNIRLT\  uses a set of (conditionally) independent policy models to describe the different identified behavioral phases, \ddBNIRLS\  maps the entire subgoal schedule onto a single time-invariant policy representation. Looking closer at the learned models, we recognize that the \ddBNIRLS\  solution in fact realizes a spatial combination of the three temporal \ddBNIRLT\  components, where each component is activated in the corresponding cluster region of the state space. This gives us two alternative %
interpretations of the same behavior.

\subsection{Random MDP Scenario}
\label{sec:randomMDP}
Our next experiment is designed to provide insights into the generalization abilities of the framework. For this purpose, we consider a class of randomly generated MDPs similar to the Garnet problems \citep{bhatnagar2009natural}. The transition dynamics $\{T(\cdot \given s, a)\}$  are sampled independently from a symmetric Dirichlet distribution with a concentration parameter of $0.01$, where we choose $|\mathcal{S}|=100$ and $|\mathcal{A}|=10$. 
For each repetition of the experiment, $N_R$ states are selected uniformly at random and assigned rewards that are, in turn, sampled uniformly from the interval $[0,1]$. All other states %
contain zero reward. Next, we compute an optimal deterministic MDP policy $\pi^*$ with respect to a discount factor %
of $\gamma=0.9$  
and generate a number of expert trajectories of length 10. %
Herein, we let the expert select the optimal action with probability $0.9$ and %
a random, suboptimal action with probability $0.1$. The obtained state sequences %
are passed to the %
algorithms and we compute the normalized value loss of the reconstructed policies according to
\begin{equation}
\mathrm{L}(\pi^*, \hat{\pi}) \defeq \frac{\lVert \mathbf{V}^{*}-\mathbf{V}^{\hat{\pi}} \rVert_2}{\lVert\mathbf{V}^{*}\rVert_2} \eqkomma
\label{eq:valueLoss}
\end{equation}
where $\mathbf{V}^{*}$ and $\mathbf{V}^{\hat{\pi}}$ represent, respectively, the vectorized value functions of the optimal policy $\pi^*$ and the reconstruction $\hat{\pi}$. %

\begin{figure*}[]
	\centering
	\hspace*{0.7cm}\includegraphics[scale=1]{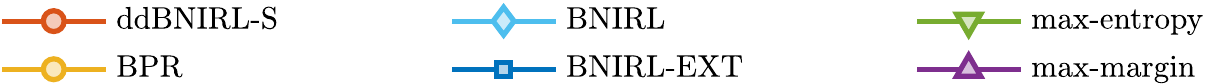}\\[0.5\baselineskip]
	
	\captionsetup[subfigure]{oneside,margin={1.1cm,0cm}}
	\subcaptionbox{$N_R=1 \ (\widehat{=}\, |\mathcal{S}|/100)$}{
		\includegraphics[scale=1]{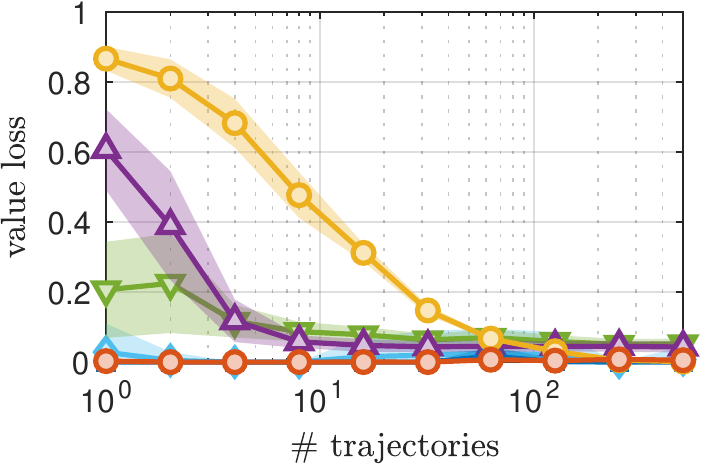}
	}
	\hfill
	\captionsetup[subfigure]{oneside,margin={1.1cm,0cm}}
	\subcaptionbox{$N_R=10 \ (\widehat{=}\, |\mathcal{S}|/10)$}{
		\includegraphics[scale=1]{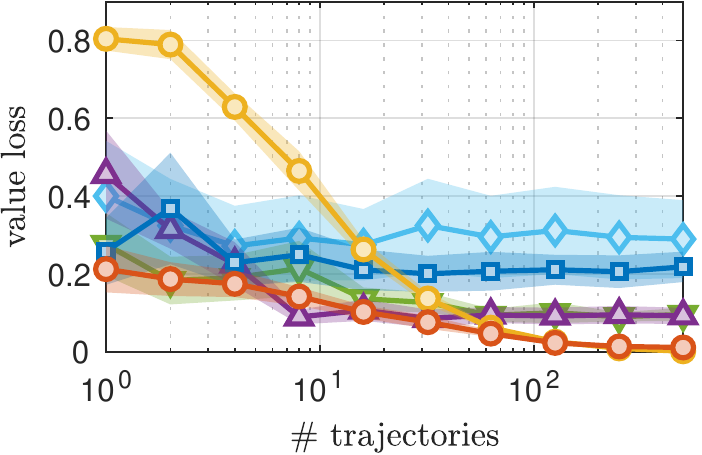}
	}
	
	\vspace{0.3cm}

	\captionsetup[subfigure]{oneside,margin={1.1cm,0cm}}
	\subcaptionbox{$N_R=50 \ (\widehat{=}\, |\mathcal{S}|/2)$}{
		\includegraphics[scale=1]{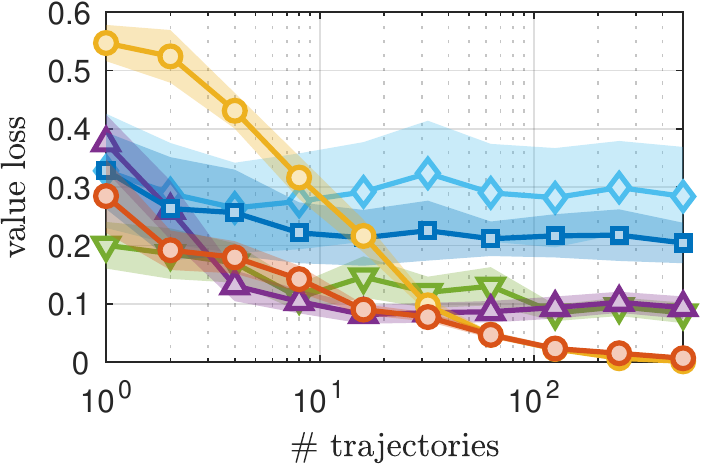}
	}
	\hfill
	\captionsetup[subfigure]{oneside,margin={1.1cm,0cm}}
	\subcaptionbox{$N_R=100 \ (\widehat{=}\, |\mathcal{S}|)$}{
		\includegraphics[scale=1]{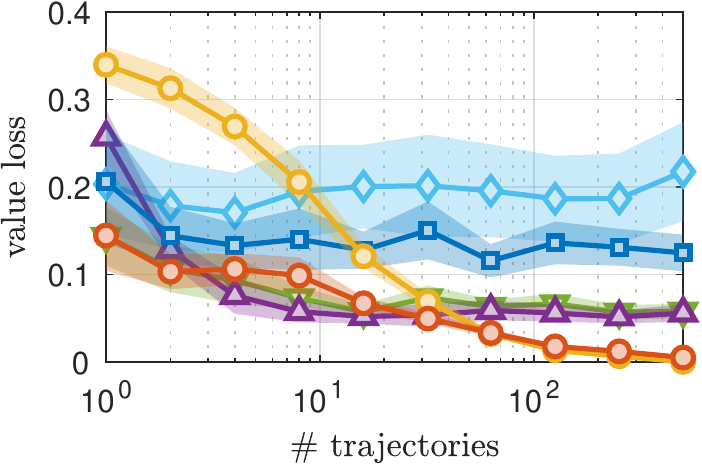}
	}
	\caption{Comparison of all inference methods in the random MDP scenario for different reward densities. Shown are the empirical mean values and standard deviations of the resulting value losses, obtained from 100 Monte Carlo runs. The graphs show a clear difference between BNIRL, BNIRL-EXT and ddBNIRL-S, which illustrates the importance of considering the spatial context for subgoal extraction.\looseness-1}
	\label{fig:randomWorldPolicyLoss}
\end{figure*}

Since the considered system belongs to the class of time-invariant MDPs,
\ddBNIRLS\  lends itself as the natural choice to model the expert behavior.
As baseline methods, we %
adopt our %
subintentional Bayesian policy recognition framework %
 \citep[BPR,][]{sosic2018pami}, as well as maximum-margin IRL \citep{abbeel2004apprenticeship}, maximum-entropy IRL \citep{ziebart2008maximum}, and vanilla BNIRL. %
Due to the missing generalization abilities of BNIRL (Limitation~\hyperref[phantom:lim1]{1}) and because the waypoint method (Section~\ref{sec:limitations}) does not straightforwardly apply to the considered scenario of multiple unaligned trajectories, we further compare our algorithm to an extension of BNIRL, which we refer to as BNIRL-EXT. Mimicking the \ddBNIRLS\ principle, %
the method %
accounts for the spatial context of the demonstrations by assigning each state to the BNIRL subgoal %
that is targeted by the closest %
(see metric in Section~\ref{sec:canonicalStateMetric}) state-action pair\nachschub{however,  these assignments are made 
\textit{after} the actual subgoal inference.} When compared to ddBNIRL-S, this provides a reference of how much can be %
gained by %
considering the spatial relationship of the data %
\textit{during} the inference. For the experiment, both \ddBNIRLS\  and BNIRL(-EXT) are augmented with their corresponding action sampling stages (Section~\ref{sec:actionInference}) since the action sequences of the expert are discarded from the data set, in order to enable a fair comparison to the remaining algorithms.

Figure~\ref{fig:randomWorldPolicyLoss} shows the value loss over %
the size of the demonstration set for different reward settings.
For small $N_R$, both BNIRL(-EXT) and \ddBNIRLS\ significantly outperform the reference methods. This is because the sparse reward structure allows for 
an efficient subgoal-based encoding of the 
expert behavior, %
which enables the algorithms to reconstruct %
the %
policy even from minimal amounts of demonstration data. However, the BNIRL(-EXT) solutions drastically deteriorate for denser reward structures. 
In particular, we observe a clear difference in performance between the cases where 
\begin{enumerate}[topsep=-\parskip+1.5ex, itemsep=1ex, parsep=0mm]
\item[\first] we do not account for the spatial information in the partitioning model~(BNIRL), 
\item[\second] include it in a post-processing step~(BNIRL-EXT), and
\item[\third] exploit it during the inference itself~(ddBNIRL-S), 
\end{enumerate}
which demonstrates the importance of %
processing the context information.
Most %
tellingly, \ddBNIRLS\  outperforms the baseline methods even in the dense reward regimes, although the subgoal-based encoding loses its efficiency here.
In fact, the results reveal that the proposed approach combines the %
merits of both model types, \ie, the sample efficiency of the intentional models (max-margin/max-entropy) required for small data set sizes, as well as the asymptotic accuracy and fully probabilistic nature of the subintentional Bayesian framework (BPR).\footnote{The comparably large loss of BPR for small data set sizes can be explained by the fact that the framework %
is based on a more general policy model in which the expert behavior is assumed to be \textit{inherently} stochastic, in contrast to the here considered setting where stochasticity arises merely a consequence of suboptimal decision-making.} %

\subsection{Robot Experiment}
\label{sec:robotExperiment}
\begin{figure*}
	\centering
	\includegraphics[width=0.6\textwidth]{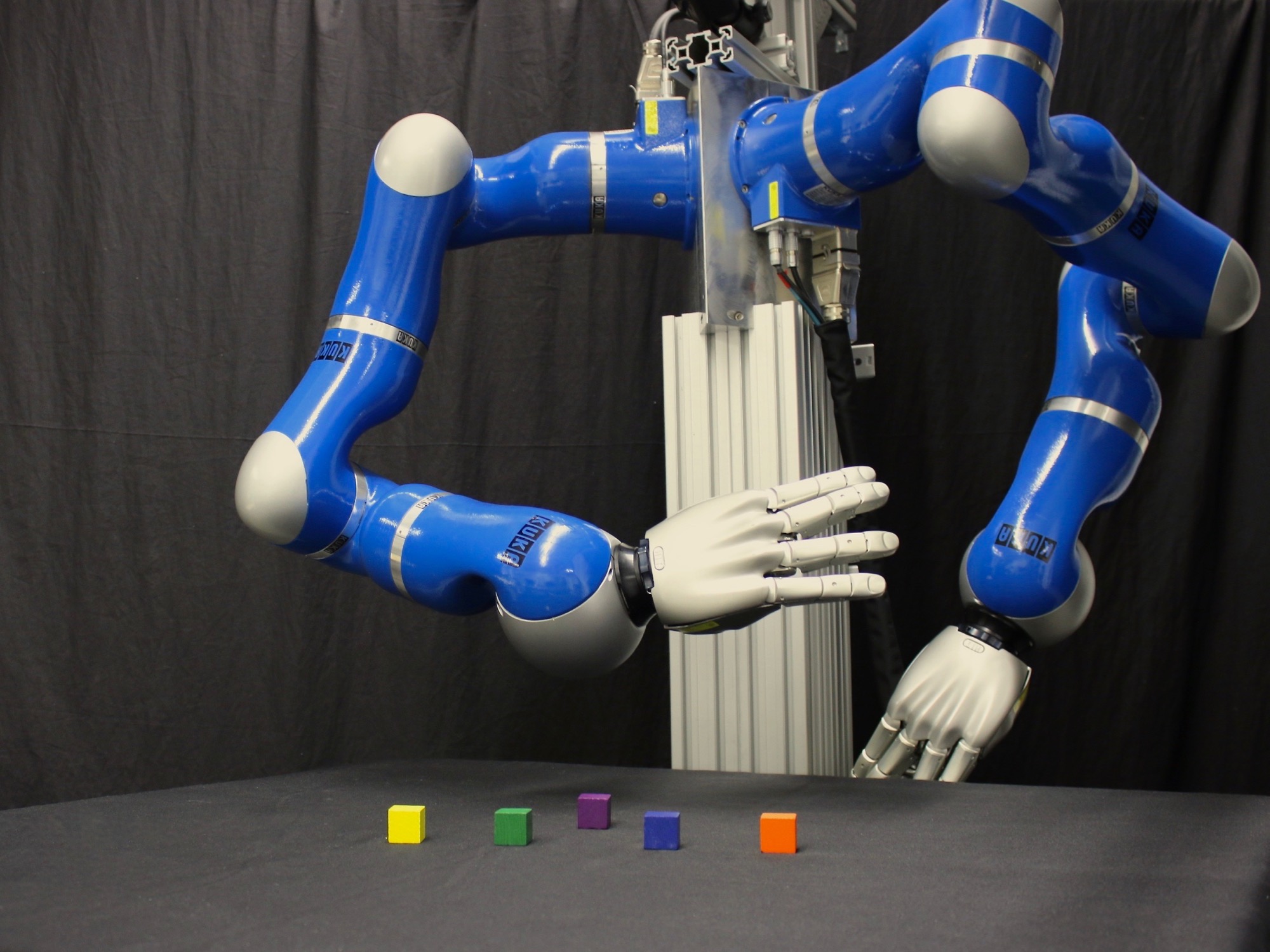}
	\caption{KUKA lightweight robotic arm.}
	\label{fig:KUKA}
\end{figure*}

In the next experiment, we test the ddBNIRL framework on various real data sets, which we
recorded on a KUKA lightweight robotic arm (Figure~\ref{fig:KUKA}) via kinesthetic teaching. Videos of all demonstrated 
tasks 
can be found at \url{http://www.spg.tu-darmstadt.de/jmlr2018}.

The system has seven degrees of freedom, corresponding to the seven joints of the arm. Each joint is equipped with a torque sensor and an angle encoder, providing recordings of joint angles, velocities and accelerations. For our experiments, 
we only consider the xy-Cartesian coordinates spanning the transverse plane, %
which we computed from the raw measurements using a forward kinematic model. The data was recorded at a sampling rate of \SI{50}{\hertz} %
and %
further downsampled by a factor of 10, yielding an effective sample rate of \SI{5}{\hertz}, which provided a sufficient temporal resolution for the considered scenario. %

The goal of %
the experiment is to learn a set of high-level intentional models for the %
recorded behavior types by partitioning the %
data sets into %
meaningful %
parts that can be used to predict the desired motion direction of the expert. 
For simplicity and to demonstrate the algorithm's robustness to modeling errors, we adopt the simplistic transition model from Section~\ref{sec:proofConcept} with the same action set %
containing 
the eight %
(inter-)cardinal motion directions. The high measurement accuracy of the end-effector position allows us to extract these high-level actions %
directly from the %
raw data, \ie, by selecting %
the directions with the smallest %
angular deviations from the %
ground truth
(see example in Figure~\ref{fig:robotData}). %
The underlying 
state space is obtained by discretizing the %
part of the coordinate %
range that is 
covered by the measurements into blocks of %
predefined size (see next sections for details). Apart from this discretization step and the aforementioned data downsampling, no preprocessing %
is applied. \looseness-1

\subsubsection{Spatial Partitioning}
First, we consider a case where the expert behavior can be described using a time-invariant policy model, which we aspire to capture via ddBNIRL-S. For our example, we %
consider the ``Cycle'' task shown in the video and in Figure~\ref{fig:wholeDataset}. The same setting is %
analyzed using the time-variant \ddBNIRLT\  model in Section~\ref{sec:KUKAtemporal}, which allows a direct comparison of the two approaches. %
The %
task consists in approaching a number of target positions, indicated by a set of markers (see video), %
before eventually returning to the initial state. The setting can be regarded as a real-world version of the ``Loop'' problem described by \citet{michini2012bayesian}. As explained in their paper, classical IRL algorithms that rely on a global state-based reward model (such as max-margin IRL and max-entropy IRL) completely fail on this problem, due to the periodic nature of the task.

\begin{figure}[h!]
	\centering
	\subcaptionbox{expert data \& sample partitioning\label{fig:robotData}}{
		\includegraphics[scale=1, angle=0]{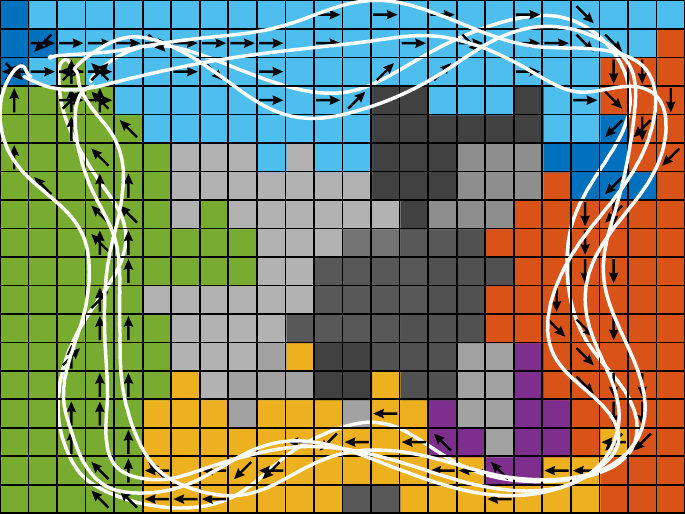}
	}
	\hspace{3ex}
	\subcaptionbox{MAP policy estimate\label{fig:cyclicMAPpolicy}}{
		\includegraphics[scale=1, angle=0]{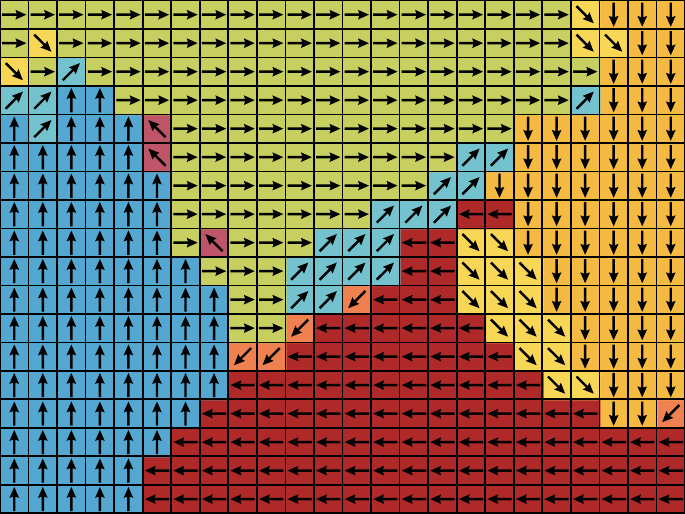}
	}
	
	\vspace{0.5\baselineskip}
	
	\subcaptionbox{uncertainty estimate\label{fig:entropyMap}}{
		\includegraphics[scale=1, angle=0]{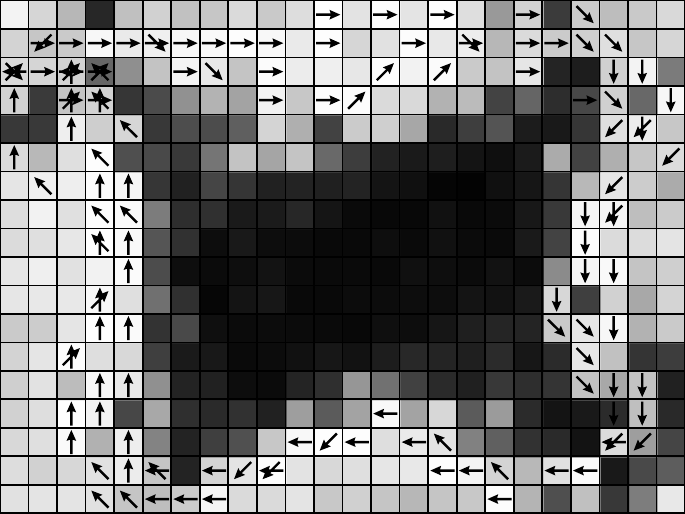}
	}
	\hspace{3ex}
	\subcaptionbox{final predictive model\label{fig:robotPolicy}}{
		\includegraphics[scale=1, angle=0]{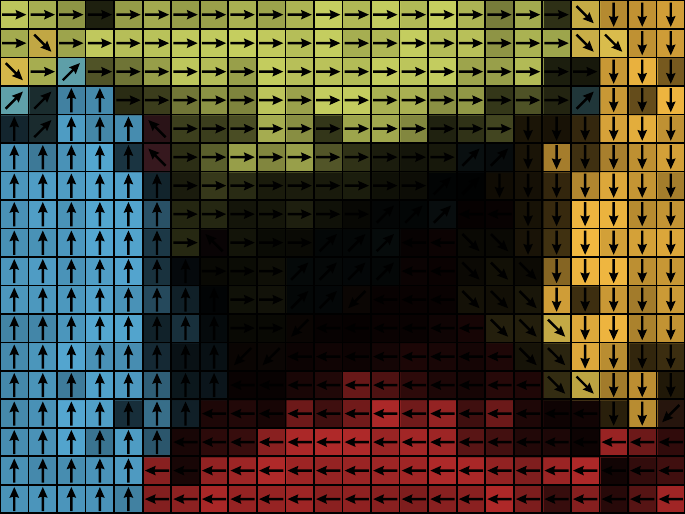}
	}
	\caption{Results of \ddBNIRLS\  on the ``Cycle'' task. (a)~Raw measurements (white lines) and discretized demonstration data (black arrows). The coloring of the background indicates a partitioning structure obtained from a low-temperature posterior sample. (b)~Maximum a posteriori policy estimate. (c)~Visualization of the model's prediction uncertainty at all system states, represented by the entropies of the corresponding posterior predictive action distributions. %
Dark background indicates high uncertainty. (d)~Illustration of the final predictive model, comprising both the action information and the prediction uncertainty.}
	\label{fig:robot}
\end{figure}

Figure~\ref{fig:robotData} shows the %
downsampled and discretized data set (black arrows) obtained from four expert trajectories (white lines). For visualization purposes, the discretization block size is chosen as \SI{2}{\centi\metre}$\times$\SI{2}{\centi\metre}, giving rise to a total of $18\times24=432$ states. %
As in the top row of Figure~\ref{fig:policySynthesis} (\ddBNIRLS), the coloring of the background indicates the learned partitioning structure, computed from a low-temperature posterior sample. We observe that the found state clusters clearly reveal the modular structure of the %
task, providing an intuitive and interpretable explanation of the data. 
However, although the induced %
policy model (Figure~\ref{fig:cyclicMAPpolicy}) %
smoothly captures the cyclic nature of the task, %
we cannot expect %
to obtain trustworthy predictions %
in the center region of the state space, due to the lack of additional 
demonstration data %
that %
would be required to unveil the expert's true intention in that region. Clearly, a point estimate %
 such as the shown MAP policy cannot reflect this prediction uncertainty since it does not carry any confidence information. Yet, following a Bayesian approach, %
we %
can naturally quantify the %
prediction 
uncertainty at any query state~$s^*$ based on  the %
shape of the corresponding posterior predictive action
distribution $\p(a^* \given s^*, \mathcal{D})$. %
A straightforward option %
is, for example, to consider the prediction entropy, defined as
\begin{equation*}
	H(s^*) \defeq \sum_{a^*\in\mathcal{A}} \p(a^* \given s^*, \mathcal{D}) \log\p(a^* \given s^*, \mathcal{D}) \eqpunkt
\end{equation*}
In order to obtain an unbiased approximation of the true \textit{non-tempered} predictive distribution $\p(a^* \given s^*, \mathcal{D})$, we run a second Gibbs chain with unaltered temperature in parallel to the tempered chain. The %
resulting entropy %
estimates are summarized in an uncertainty map (Figure~\ref{fig:entropyMap}), which we overlaid on the original prediction result to produce the final figure shown at the bottom right. 
Note that %
the obtained posterior uncertainty information of the model can be further used in an active learning setting, as demonstrated in Section~\ref{sec:activeLearning}.

\subsubsection{Temporal Partitioning}
\label{sec:KUKAtemporal}

Next, we %
turn our attention to the \ddBNIRLT\  model, which we %
test against the vanilla BNIRL approach. %
For this purpose, we consider the full collection of tasks shown in the supplementary video,
which comprises %
different time-dependent expert behaviors of varying complexity.
In order to %
obtain a quantitative performance measure for our evaluation, we %
conducted a manual segmentation of all recorded trajectories, %
thereby creating a set of ground truth subgoal labels %
for all %
observed decision times. The result of this segmentation step 
is depicted in the appendix (Figure~\ref{fig:wholeDataset}, center column). Note that the %
ground truth subgoals are assumed %
immediately at the ends of the corresponding segments.

\begin{figure}[t]
	\centering
	\includegraphics[scale=1]{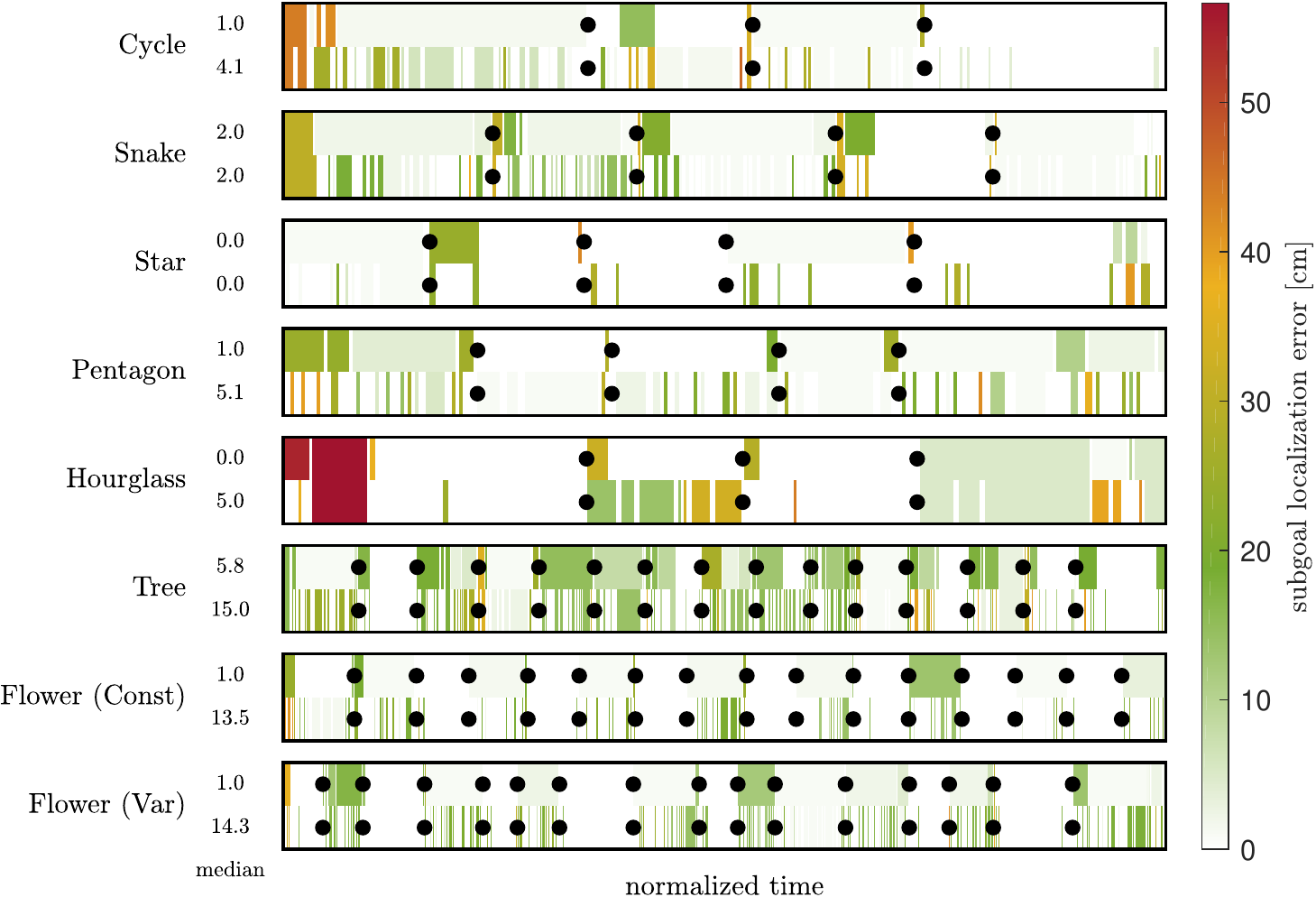}
	\caption{Instantaneous subgoal localization errors of \ddBNIRLT\  (upper rows) and BNIRL (lower rows) for the eight %
recorded data sets. %
The black dots indicate the subgoal switching times in the corresponding ground truth subgoal annotation, depicted in the center column of Figure~\ref{fig:wholeDataset}. On average, the localization error of \ddBNIRLT\ is significantly lower compared to the BNIRL approach, as indicated by the median values shown on the left. For a qualitative comparison of the underlying partitioning structures, see Appendix~\ref{sec:app}.} %
	\label{fig:robotMetric}
\end{figure}

The left and right column of Figure~\ref{fig:wholeDataset} show, respectively, the partitioning structures found by BNIRL and ddBNIRL-T, %
based on a uniform subgoal prior distribution with support at the visited states. The underlying state discretization block size is chosen as \SI{1}{\centi\metre}$\times$\SI{1}{\centi\metre}, as indicated by the regular grid in the background. A simple visual comparison of the learned %
structures reveals the clear superiority of \ddBNIRLT\  over vanilla BNIRL on this problem set. %

For our quantitative comparison, we consider the instantaneous subgoal localization errors of the two models over the entire course of a demonstration (Figure~\ref{fig:robotMetric}). Herein, the instantaneous localization error for a given state-action pair is measured in terms of the Euclidean distance between the grid location of the ground truth subgoal associated with the pair and the corresponding subgoal location predicted by the model. Note that the predictions of both models are based on the \textit{entire} trajectory data of an experiment, considering the full posterior information %
after completing the demonstration.
For ddBNIRL-T, which does not directly return a subgoal location estimate but instead provides access to the full subgoal posterior distribution, the error is computed with respect to
 the MAP subgoal locations~$\{\hat{g}_k\}$,
\begin{equation*}
	\hat{g}_k \defeq \argmax_{g_k\in\supp(p_g)}p(g_k \given \mathcal{D}, \mathbf{\tilde{c}}), 
\end{equation*}
using the \ddBNIRLT\ version of Equation~\eqref{eq:subgoalPosterior}\nachschub{see note at the beginning of Section~\ref{sec:predAndInf}}.

The black dots in the figure indicate the time instants where the ground truth annotations change. At those time instants, %
we observe significantly increased localization errors for both models, %
which can be explained by the fact that %
the ground truth annotation is somewhat %
subjective
around the switching points (see labeling in Figure~\ref{fig:wholeDataset}). Also, we notice a comparably high error at the beginning and the end of some 
trajectories, which stems from the imperfect synchronization between the recording interval and the execution of the task (recall that we %
skipped the corresponding data preprocessing step). Hence, to capture the %
accuracy in %
a single figure of performance, we consider the median localization error of each time series, as it masks out these outliers and provides a more realistic error quantification than the sample mean. %
The obtained values are shown next to the error plots in Figure~\ref{fig:robotMetric}, indicating that the \ddBNIRLT\  localization error is in the range of the discretization %
interval in most cases. \textit{Compared to BNIRL, the proposed method yields an error reduction of more than 70\% on average.}

\subsection{Active Learning}
\label{sec:activeLearning}
In Section~\ref{sec:robotExperiment}, %
we saw that the posterior predictive action distribution $p(a^* \given s^*, \mathcal{D})$ %
provides a natural way to quantify the prediction uncertainty of our model at any given query state~$s^*$. This offers the opportunity to apply the framework in an active learning setting, since the induced uncertainty map (see example in Figure~\ref{fig:entropyMap}) indicates in which parts of the state space the trained model %
can process further instructions from the expert most effectively. 

\begin{figure}[]
	\centering
	\includegraphics[scale=1]{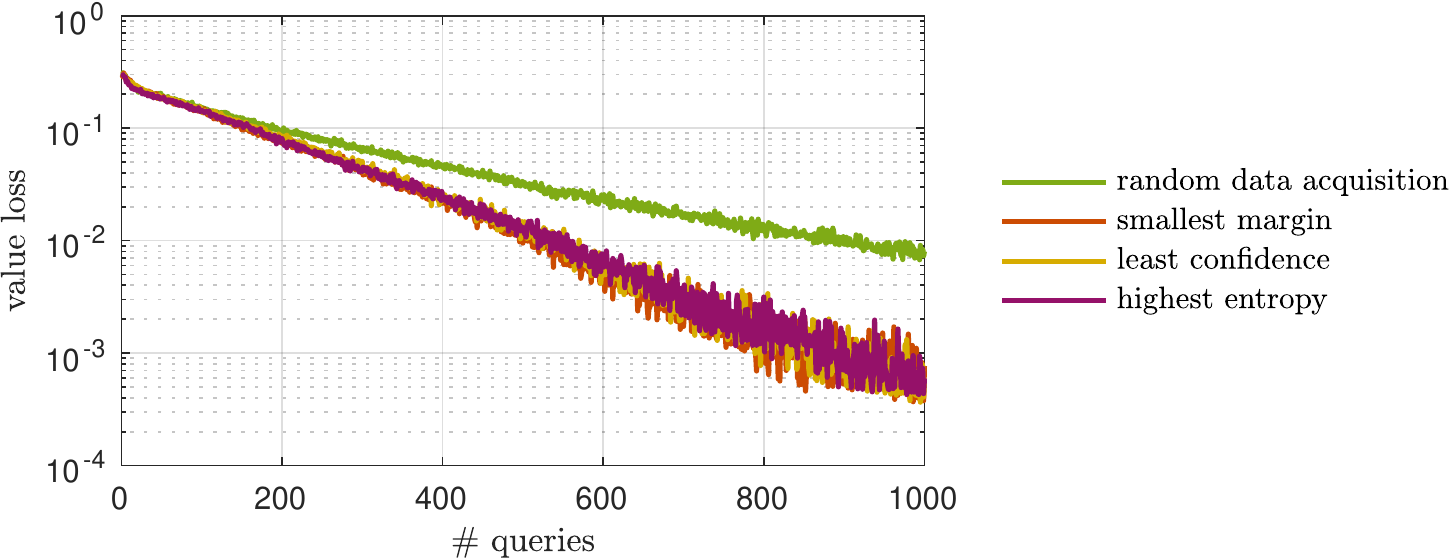}
	\caption{Comparison between random data acquisition and active learning in the random MDP scenario. Shown are the empirical mean value losses of the obtained policy models over the number of data queries, obtained from 200 Monte Carlo runs.}
	\label{fig:activeLearning}
\end{figure}

To demonstrate the basic %
procedure, we reconsider the random MDP problem from Section~\ref{sec:randomMDP} %
in an active learning context, where we compare %
different active strategies with %
the %
previously used random data acquisition scheme. %
As an initialization for the learning procedure, we %
request %
a single state-action pair $(s_1,a_1)$ from the demonstrator, which we store in the initial data set $\mathcal{D}_1\defeq\{(s_1,a_1)\}$. Herein, the state $s_1$ is drawn uniformly at random from $\mathcal{S}$ and the action %
$a_1\sim\pi_E(a \given s_1)$ is generated according to the %
noisy
expert policy $\pi_E:\mathcal{A}\times\mathcal{S}\rightarrow[0,1]$ described in Section~\ref{sec:randomMDP}. %
Continuing from this point, each of the considered active learning algorithms requests a series of subsequent demonstrations $((s_2,a_2), (s_3,a_3), \ldots)$, %
inducing a sequence of data sets $(\mathcal{D}_1, \mathcal{D}_2, \mathcal{D}_3, \ldots)$, where the next query state $s_{d+1}$ is chosen %
according to the specific data acquisition criterion $f_\text{acq}$ of the algorithm evaluated on the current predictive model,
\begin{align*}
\mathcal{D}_{d+1} &= \mathcal{D}_{d}\cup\{(s_{d+1}, a_{d+1})\} \\
s_{d+1} &= \argmax_{s^*\in\mathcal{S}}\, f_{\text{acq}}\big[p(a^* %
\given s^*, \mathcal{D}_{d})\big] \\ 
a_{d+1} &\sim \pi_E(a \given s_{d+1}) \eqpunkt
\end{align*}
The purpose of the acquisition criterion %
is to assess the uncertainty of the %
model at all possible query states, so that the next demonstration can be requested in the high uncertainty region of the state space \citep[see \textit{uncertainty sampling},][]{settles2010active}.
For our experiment, we consider the following three common choices,
\begin{enumerate}[topsep=-\parskip+1.5ex, itemsep=1ex, parsep=0mm]
	\item[$\bullet$] highest entropy: \tabto{3.2cm} $f_\text{acq}(p) \defeq -\sum\limits_{a\in\mathcal{A}}p(a) \log p(a)$,
    \item[$\bullet$] least confidence: \tabto{3.2cm} $f_\text{acq}(p) \defeq 1 - \max\limits_{a\in\mathcal{A}}p(a) \vphantom{\sum\limits_{a\in\mathcal{A}}}$,
    \item[$\bullet$] smallest margin: \tabto{3.2cm} $f_\text{acq}(p) \defeq p(\hat{a}_2) - p(\hat{a}_1)$, %
\end{enumerate} 
where $\hat{a}_1$ and $\hat{a}_2$ denote, respectively, the most likely and second most likely action according to the considered distribution $p$, \ie, $\hat{a}_1\defeq\argmax_{a\in\mathcal{A}}p(a)$ and $\hat{a}_2\defeq\argmax_{a\in\mathcal{A}\setminus\hat{a}_1}p(a)$.
At each iteration, we compute the value losses (Equation~\ref{eq:valueLoss}) of the induced policy models and compare them with the corresponding loss obtained from random data acquisition. The resulting curves are delineated in Figure~\ref{fig:activeLearning}. As expected, the learning speed of the model is significantly improved under all active acquisition schemes, which reduces the number of expert demonstrations required to successfully %
learn the observed task. %

\section{Conclusion}
\label{sec:conclusion}
Building upon the principle of Bayesian nonparametric inverse reinforcement learning, we proposed a new framework for data-efficient IRL that %
leverages the context %
information of the demonstration set to learn a predictive model of the expert behavior from small amounts of training data. %
Central to our framework are two model architectures, one designed for learning spatial subgoal plans, the other %
to capture %
time-varying %
intentions. In contrast to the original BNIRL model, both architectures explicitly consider the covariate information contained in the demonstration set, giving rise to predictive models that are inherently robust to demonstration noise. While the original BNIRL model can be recovered as a special case of our framework, the conducted experiments show a drastic improvement over the vanilla BNIRL approach in terms of the achieved subgoal localization accuracy, %
which stems from both 
an improved likelihood model and a context-aware clustering of the data. Most notably, our framework outperforms all tested reference methods in the analyzed benchmark scenarios %
while it additionally captures the full posterior information about the learned subgoal representation. %
The %
resulting prediction uncertainty %
about the expert behavior, reflected by the posterior predictive action distribution, provides a natural basis to apply our method in an active learning setting where the learning 
system can request additional demonstration data from the expert. %

The current limitation of our approach is that both presented architectures require an MDP model with discrete state and action space. While the subgoal principle carries over straightforwardly to continuous metric spaces, the construction of the likelihood model becomes difficult in these environments as it requires knowledge of the optimal state-action value functions for all potential subgoal locations. However, for BNIRL, there exist several ways to approximate the likelihood in these cases \citep{michini2013scalable} and the same concepts apply equally to ddBNIRL. Thus, an interesting future study would be to compare the efficacy of both model types on larger problems involving continuous spaces, where it appears %
even more natural to follow a distance-based %
approach. %

\acks{This project has received funding from the European Union's Horizon 2020 research and innovation program under grant agreement No~\#713010~(GOALRobots) and No~\#640554 (SKILLS4ROBOTS).}

\newpage
\appendix
\section{Robot experiment}
\phantomsection
\label{sec:app}
\newlength{\dist}
\setlength{\dist}{5cm}
\setlength{\vsep}{0.75\baselineskip}

\begin{figure}[h!]	
	\centering
	
	\subcaptionbox{Cycle}{
	\centering
	\begin{tikzpicture}
	\node (A) at (-\dist,0) {\includegraphics[]{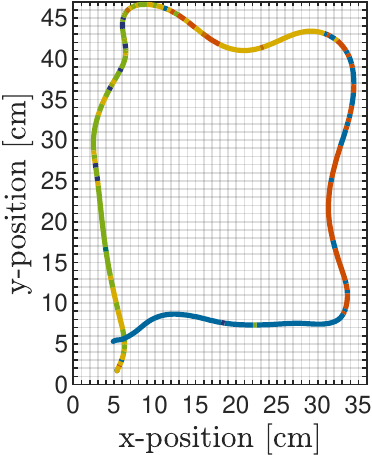}};
	\node (B) at (0,0) {\includegraphics[]{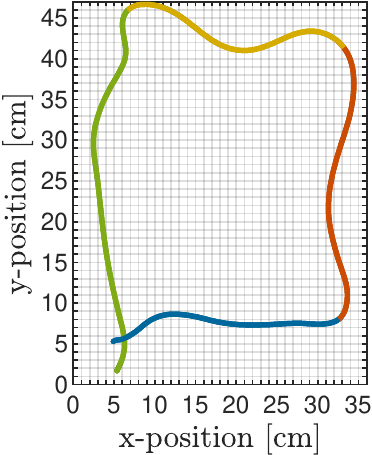}};
	\node (C) at (\dist,0) {\includegraphics[]{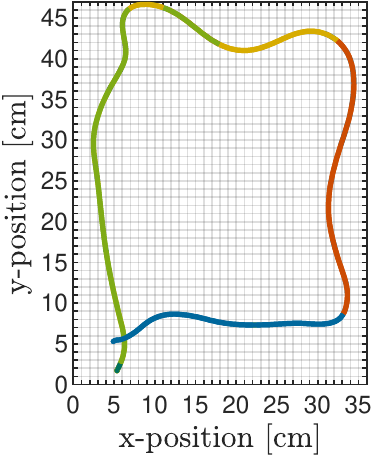}};
	\node [above=0cm of A, xshift=4mm] {BNIRL};
	\node [above=0cm of B, xshift=4mm] {ground truth};
	\node [above=0cm of C, xshift=4mm] {ddBNIRL-T};
	\end{tikzpicture}
	}

	\vspace{\vsep}

	\subcaptionbox{Snake}{
	\centering
	\begin{tikzpicture}
	\node at (-\dist,0) {\includegraphics[]{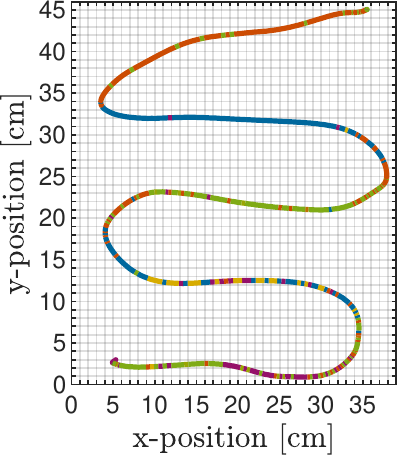}};
	\node at (0,0) {\includegraphics[]{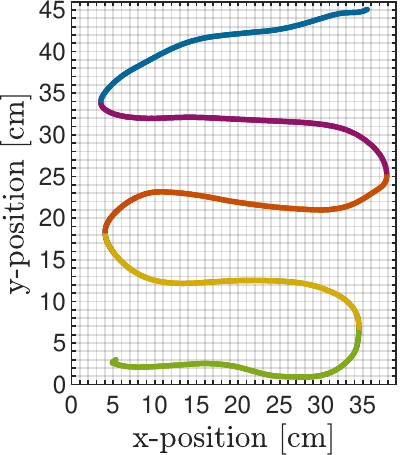}};
	\node at (\dist,0) {\includegraphics[]{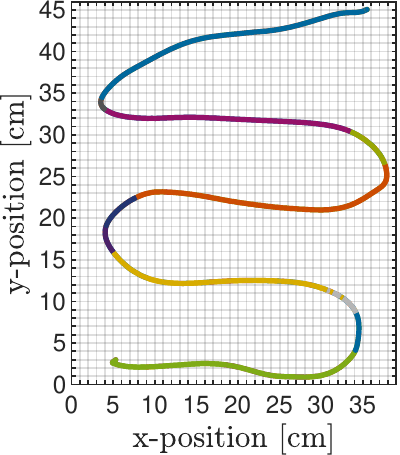}};
	\end{tikzpicture}
	}

	\vspace{\vsep}

	\subcaptionbox{Pentagon}{
	\centering
	\begin{tikzpicture}
	\node at (-\dist,0) {\includegraphics[]{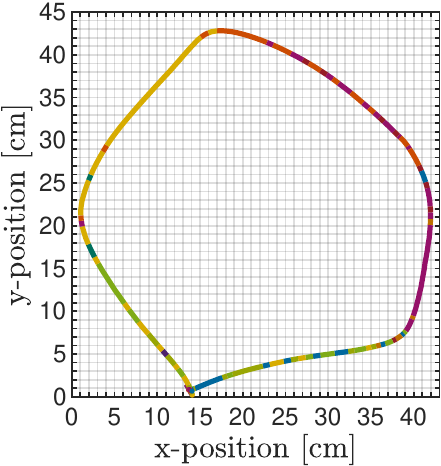}};
	\node at (0,0) {\includegraphics[]{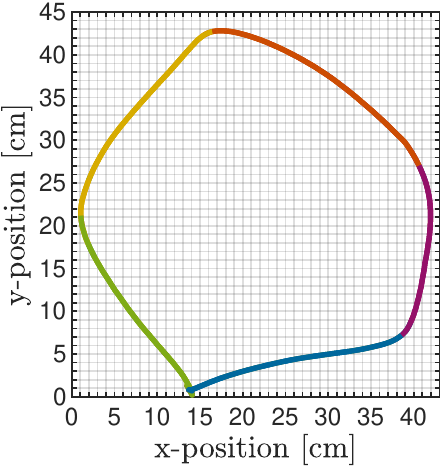}};
	\node at (\dist,0) {\includegraphics[]{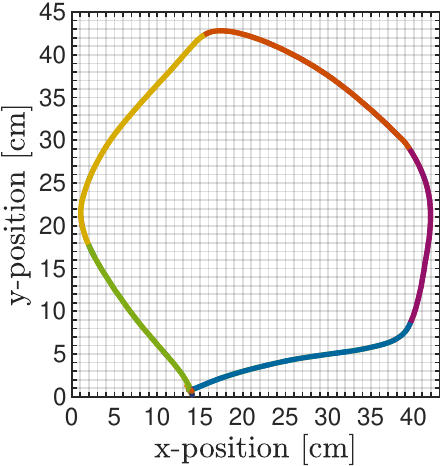}};
	\end{tikzpicture}
	}

	\caption{Motion sequences without trajectory crossings, which can be represented using a spatial subgoal pattern.}
	\label{fig:wholeDataset}

\end{figure}

\renewcommand{\thefigure}{\arabic{figure} (continued)}
\begin{figure*}
	\centering
	
	\ContinuedFloat
	
	\subcaptionbox{Star}{
	\centering
	\begin{tikzpicture}
	\node (A) at (-\dist,0) {\includegraphics[]{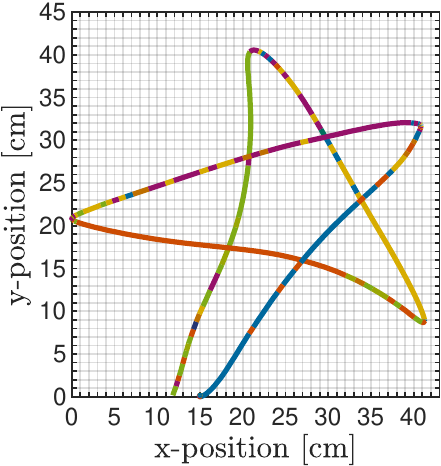}};
	\node (B) at (0,0) {\includegraphics[]{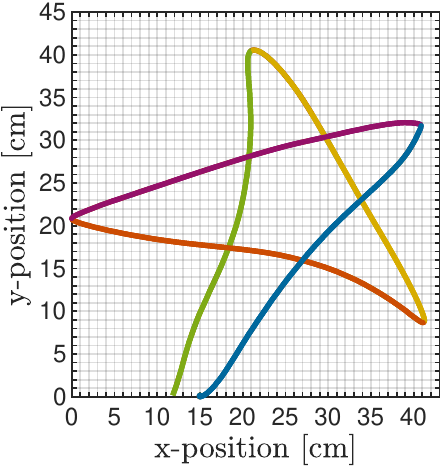}};
	\node (C) at (\dist,0) {\includegraphics[]{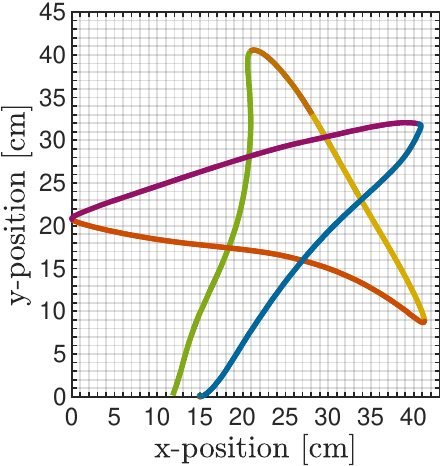}};
	\node [above=0cm of A, xshift=4mm] {BNIRL};
	\node [above=0cm of B, xshift=4mm] {ground truth};
	\node [above=0cm of C, xshift=4mm] {ddBNIRL-T};
	\end{tikzpicture}
	}

	\vspace{\vsep}

	\subcaptionbox{Hourglass}{
	\centering
	\begin{tikzpicture}
	\node at (-\dist,0) {\includegraphics[]{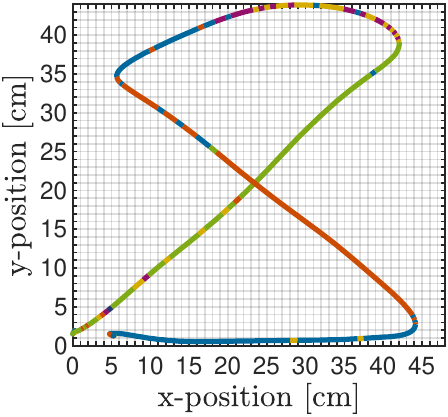}};
	\node at (0,0) {\includegraphics[]{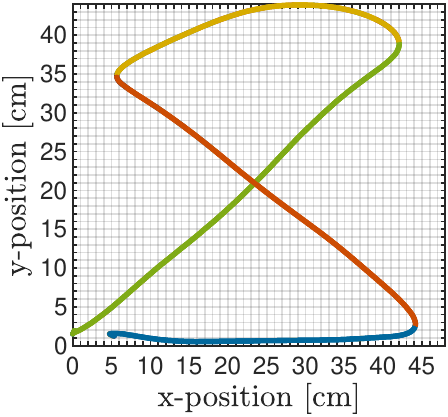}};
	\node at (\dist,0) {\includegraphics[]{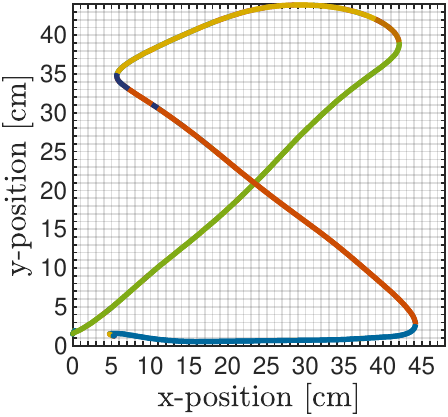}};
	\end{tikzpicture}
	}

\caption{Motion sequences with few trajectory crossings, requiring a time-varying subgoal representation.}
\end{figure*}

\begin{figure*}
	\centering
	
	\ContinuedFloat

	\subcaptionbox{Flower (Const)}{
	\centering
	\begin{tikzpicture}
	\node (A) at (-\dist,0) {\includegraphics[]{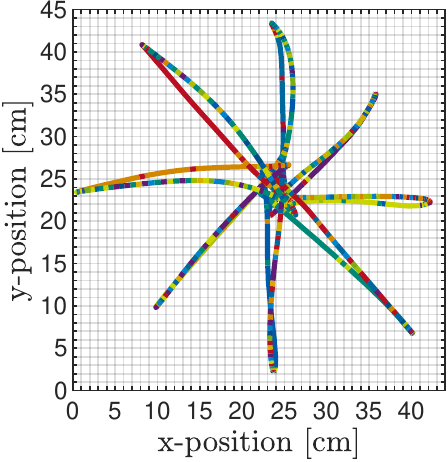}};
	\node (B) at (0,0) {\includegraphics[]{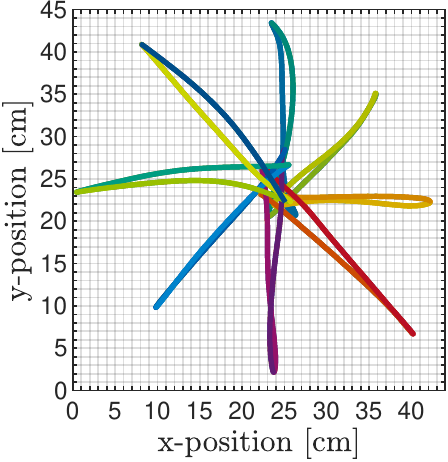}};
	\node (C) at (\dist,0) {\includegraphics[]{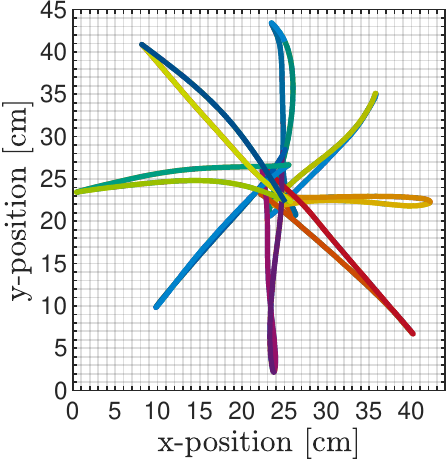}};
	\node [above=0cm of A, xshift=4mm] {BNIRL};
	\node [above=0cm of B, xshift=4mm] {ground truth};
	\node [above=0cm of C, xshift=4mm] {ddBNIRL-T};
	\end{tikzpicture}
	}

	\vspace{\vsep}

	\subcaptionbox{Flower (Var)}{
	\centering
	\begin{tikzpicture}
	\node at (-\dist,0) {\includegraphics[]{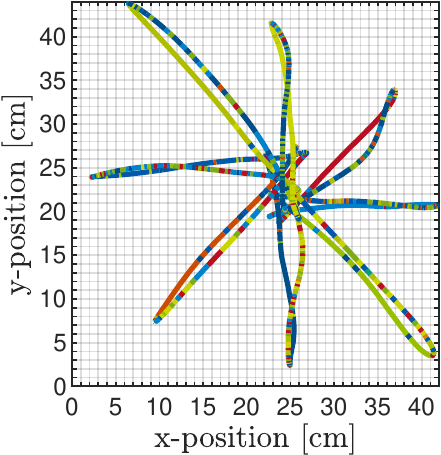}};
	\node at (0,0) {\includegraphics[]{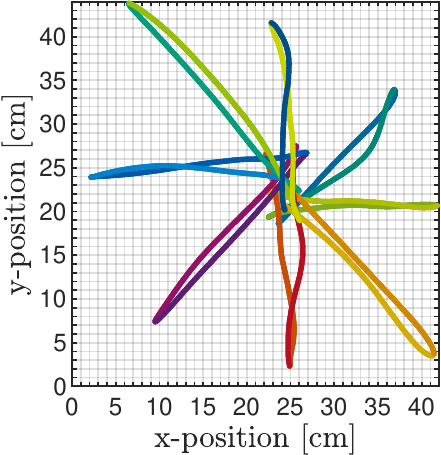}};
	\node at (\dist,0) {\includegraphics[]{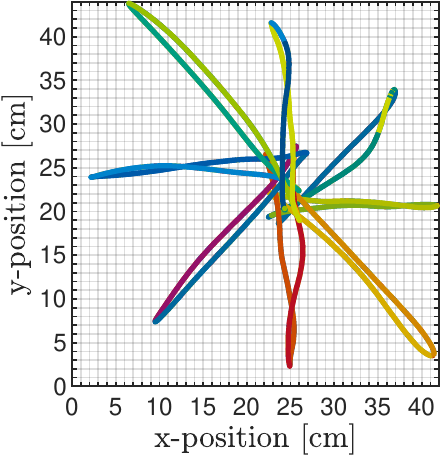}};
	\end{tikzpicture}
	}

	\vspace{\vsep}

	\subcaptionbox{Tree}{
	\centering
	\begin{tikzpicture}
	\node at (-\dist,0) {\includegraphics[]{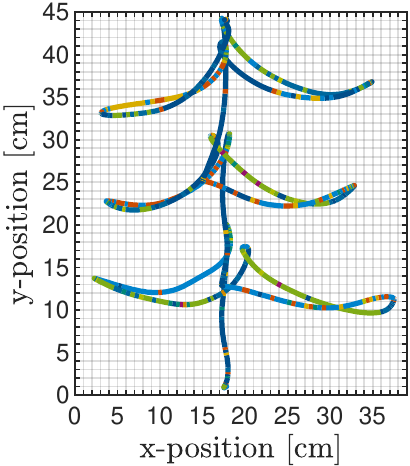}};
	\node at (0,0) {\includegraphics[]{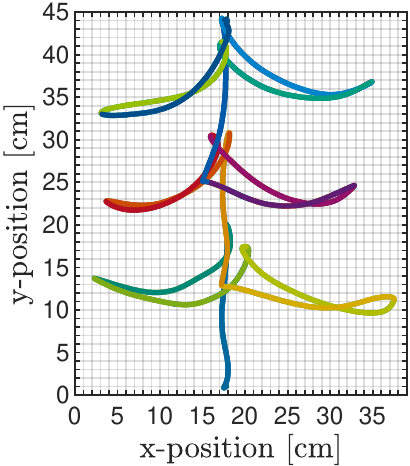}};
	\node at (\dist,0) {\includegraphics[]{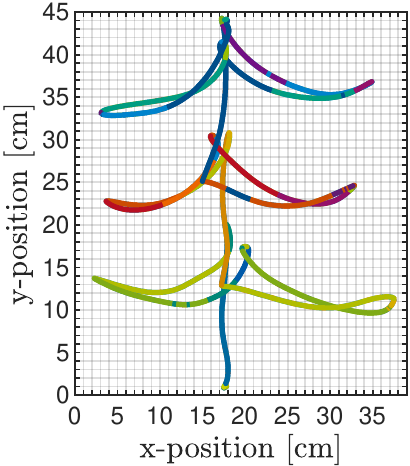}};
	\end{tikzpicture}
	}

\caption{Long motion sequences comprising a large number of sub-patterns with overlapping parts that can be only separated by considering the temporal context. Flower~(Const):~all strokes are performed with the same absolute velocity. Flower~(Var):~the individual strokes are performed with alternating velocity.}
\end{figure*}

\FloatBarrier
\newpage
\bibliography{bibliography}

\end{document}